\let\mathbb\varmathbb
\crefname{lemma}{Lemma}{Lemmas}
\crefname{fact}{Fact}{Facts}
\crefname{theorem}{Theorem}{Theorems}
\crefname{corollary}{Corollary}{Corollaries}
\crefname{claim}{Claim}{Claims}
\crefname{example}{Example}{Examples}
\crefname{algorithm}{Algorithm}{Algorithms}
\crefname{problem}{Problem}{Problems}
\crefname{definition}{Definition}{Definitions}
\crefname{model}{Model}{Models}
\crefname{exercise}{Exercise}{Exercises}
\crefname{condition}{Condition}{Conditions}
\newtheorem{theorem}{Theorem}[section]
\newtheorem*{theorem*}{Theorem}
\newtheorem{lemma}[theorem]{Lemma}
\newtheorem*{lemma*}{Lemma}
\newtheorem{fact}[theorem]{Fact}
\newtheorem*{fact*}{Fact}
\newtheorem*{proposition*}{Proposition}
\newtheorem{corollary}[theorem]{Corollary}
\newtheorem*{corollary*}{Corollary}
\newtheorem*{hypothesis*}{Hypothesis}
\newtheorem*{conjecture*}{Conjecture}
\theoremstyle{definition}
\newtheorem*{definition*}{Definition}
\newtheorem*{model*}{Model}
\newtheorem*{construction*}{Construction}
\newtheorem*{example*}{Example}
\newtheorem*{question*}{Question}
\newtheorem{algorithm}[theorem]{Algorithm}
\newtheorem*{algorithm*}{Algorithm}
\newtheorem*{assumption*}{Assumption}
\newtheorem*{problem*}{Problem}
\newtheorem*{openquestion*}{Open Question}
\theoremstyle{remark}
\newtheorem*{claim*}{Claim}
\newtheorem*{remark*}{Remark}
\newtheorem*{observation*}{Observation}
\let\originalleft\left
\let\originalright\right
\renewcommand{\left}{\mathopen{}\mathclose\bgroup\originalleft}
\renewcommand{\right}{\aftergroup\egroup\originalright}
\let\latexparagraph\paragraph
\RenewDocumentCommand{\paragraph}{som}{%
  \IfBooleanTF{#1}
    {\latexparagraph*{#3}}
    {\IfNoValueTF{#2}
       {\latexparagraph{\maybe@addperiod{#3}}}
       {\latexparagraph[#2]{\maybe@addperiod{#3}}}%
  }%
}
\newcommand{\maybe@addperiod}[1]{%
  #1\@addpunct{.}%
}
\newcommand{\Authornote}[2]{{\sffamily\small\color{red}{[#1: #2]}}}
\newcommand{\Tnote}{\Authornote{T}}
\newcommand{\paren}[1]{(#1)}
\newcommand{\Paren}[1]{\left(#1\right)}
\newcommand{\brac}[1]{[#1]}
\newcommand{\Brac}[1]{\left[#1\right]}
\newcommand{\abs}[1]{\lvert#1\rvert}
\newcommand{\Abs}[1]{\left\lvert#1\right\rvert}
\newcommand{\card}[1]{\lvert#1\rvert}
\newcommand{\Card}[1]{\left\lvert#1\right\rvert}
\newcommand{\set}[1]{\{#1\}}
\newcommand{\Set}[1]{\left\{#1\right\}}
\newcommand{\norm}[1]{\lVert#1\rVert}
\newcommand{\Norm}[1]{\left\lVert#1\right\rVert}
\newcommand{\Normt}[1]{\Norm{#1}_2}
\newcommand{\Snorm}[1]{\Norm{#1}^2}
\newcommand{\normo}[1]{\norm{#1}_1}
\newcommand{\Normo}[1]{\Norm{#1}_1}
\newcommand{\normi}[1]{\norm{#1}_\infty}
\newcommand{\Normi}[1]{\Norm{#1}_\infty}
\newcommand{\iprod}[1]{\langle#1\rangle}
\newcommand{\Iprod}[1]{\left\langle#1\right\rangle}
\newcommand{\Esymb}{\mathbb{E}}
\newcommand{\Psymb}{\mathbb{P}}
\DeclareMathOperator*{\E}{\Esymb}
\DeclareMathOperator*{\ProbOp}{\Psymb}
\renewcommand{\Pr}{\ProbOp}
\newcommand{\given}{\mathrel{}\middle\vert\mathrel{}}
\newcommand{\suchthat}{\;\middle\vert\;}
\newcommand{\sge}{\succeq}
\renewcommand{\ij}{{ij}}
\newcommand\bdot\bullet
\DeclareMathOperator{\Tr}{Tr}
\DeclareMathOperator{\argmax}{argmax}
\DeclareMathOperator{\argmin}{argmin}
\DeclareMathOperator{\polylog}{polylog}
\DeclareMathOperator{\supp}{supp}
\DeclareMathOperator{\sign}{sign}
\newcommand{\N}{\mathbb N}
\newcommand{\R}{\mathbb R}
\newcommand{\cA}{\mathcal A}
\newcommand{\cC}{\mathcal C}
\newcommand{\cG}{\mathcal G}
\newcommand{\cK}{\mathcal K}
\newcommand{\cL}{\mathcal L}
\newcommand{\cP}{\mathcal P}
\newcommand{\cS}{\mathcal S}
\newcommand{\cU}{\mathcal U}
\newcommand{\bbP}{\mathbb P}
\renewcommand{\leq}{\leqslant}
\renewcommand{\le}{\leqslant}
\renewcommand{\geq}{\geqslant}
\renewcommand{\ge}{\geqslant}
\let\epsilon=\varepsilon
\numberwithin{equation}{section}
\newcommand\MYcurrentlabel{xxx}
\newcommand{\MYstore}[2]{%
  \global\expandafter \def \csname MYMEMORY #1 \endcsname{#2}%
}
\newcommand{\MYload}[1]{%
  \csname MYMEMORY #1 \endcsname%
}
\newcommand{\MYnewlabel}[1]{%
  \renewcommand\MYcurrentlabel{#1}%
  \MYoldlabel{#1}%
}
\newcommand{\MYdummylabel}[1]{}
\newcommand{\torestate}[1]{%
  \let\MYoldlabel\label%
  \let\label\MYnewlabel%
  #1%
  \MYstore{\MYcurrentlabel}{#1}%
  \let\label\MYoldlabel%
}
\newcommand{\restatetheorem}[1]{%
  \let\MYoldlabel\label
  \let\label\MYdummylabel
  \begin{theorem*}[Restatement of \cref{#1}]
    \MYload{#1}
  \end{theorem*}
  \let\label\MYoldlabel
}
\newcommand{\restatelemma}[1]{%
  \let\MYoldlabel\label
  \let\label\MYdummylabel
  \begin{lemma*}[Restatement of \cref{#1}]
    \MYload{#1}
  \end{lemma*}
  \let\label\MYoldlabel
}
\newcommand{\restateprop}[1]{%
  \let\MYoldlabel\label
  \let\label\MYdummylabel
  \begin{proposition*}[Restatement of \cref{#1}]
    \MYload{#1}
  \end{proposition*}
  \let\label\MYoldlabel
}
\newcommand{\restatefact}[1]{%
  \let\MYoldlabel\label
  \let\label\MYdummylabel
  \begin{fact*}[Restatement of \cref{#1}]
    \MYload{#1}
  \end{fact*}
  \let\label\MYoldlabel
}
\newcommand{\restate}[1]{%
  \let\MYoldlabel\label
  \let\label\MYdummylabel
  \MYload{#1}
  \let\label\MYoldlabel
}
\newcommand{\eps}{\epsilon}
\newcommand*{\Id}{\mathrm{Id}}
\newcommand*{\Normf}[1]{\Norm{#1}_{\mathrm{F}}}
\newcommand{\ind}[1]{\mathbf{1}_{\Brac{#1}}}
\providecommand{\todo}{{\color{red}{\textbf{TODO }}}}
\newcommand*{\transpose}[1]{{#1}{}^{\mkern-1.5mu\mathsf{T}}}
\providecommand{\pdset}{\tilde{\Omega}}
\title{
	Sparse PCA Beyond Covariance Thresholding
	\thanks{This project has received funding from the European Research Council (ERC) under the European Union's Horizon 2020 research and innovation programme (grant agreement No 815464).}
}
\author{
	Gleb Novikov\thanks{ETH Z\"urich.}
}
\date{}
\begin{document}
	
	\pagestyle{empty}
	

	\maketitle
	\thispagestyle{empty} 

	
	\begin{abstract}
		In the Wishart model for sparse PCA
we are given $n$ samples $\bm Y_1,\ldots,  \bm Y_n$ drawn independently 
from a $d$-dimensional Gaussian distribution $N({0, \Id + \beta vv^\top})$, where $\beta > 0$ and $v\in \R^d$ is a 
$k$-sparse unit vector, and we wish to recover $v$ (up to sign).

We show that if $n \ge \Omega(d)$, 
then for every $t \ll k$ there exists an algorithm running in time $n\cdot d^{O(t)}$ that solves this problem as long as
\[
\beta \gtrsim \frac{k}{\sqrt{nt}}\sqrt{\ln({2 + td/k^2})}\,.
\]
Prior to this work, the best polynomial time algorithm in the regime $k\approx \sqrt{d}$, 
called \emph{Covariance Thresholding} 
(proposed in \cite{KNV15} and analyzed in \cite{DBLP:conf/nips/DeshpandeM14}), 
required $\beta \gtrsim \frac{k}{\sqrt{n}}\sqrt{\ln({2 + d/k^2})}$.
For large enough constant $t$ our algorithm runs in polynomial time and has 
better guarantees than Covariance Thresholding.
Previously known algorithms with such guarantees required quasi-polynomial time $d^{O(\log d)}$.

Our idea is based on the idea of \cite{AKS98} for reducing the clique size in the planted clique problem.
Moreover, we show that it is possible to combine our techniques with 
recent results on sparse PCA with symmetric heavy-tailed noise \cite{dNNS22}. 
Their model generalizes both sparse PCA and the planted clique problem.
In particular, 
in the regime $k \approx \sqrt{d}$ we get the first polynomial time algorithm that works with symmetric heavy-tailed noise, 
while the algorithm from \cite{dNNS22} requires quasi-polynomial time in these settings. As a consequence, we get an algorithm that solves a problem that captures both sparse PCA and planted clique and achieves best known guarantees for both of them.

In addition, we show that our techniques work with sparse PCA with adversarial perturbations studied in \cite{dKNS20}. 
This model generalizes not only sparse PCA, 
but also the sparse planted vector problem.
As a consequence, we provide polynomial time algorithms for 
the sparse planted vector problem that have better guarantees than
the state of the art in some regimes.
	\end{abstract}
	
	\clearpage
	

	\microtypesetup{protrusion=false}
	\tableofcontents{}
	\microtypesetup{protrusion=true}

	\clearpage
	
	\pagestyle{plain}
	\setcounter{page}{1}
	

	\section{Introduction} \label{sec:introduction}

We study sparse principal component analysis in the \emph{Wishart} and \emph{Wigner} models. 
First we describe the Wishart model (that is sometimes also called the
	\emph{spiked covariance model}).
In this model, we are given $n$ samples
$\bm Y_1,\ldots,  \bm Y_n$ drawn\footnote{We use boldface to denote random variables.}  independently 
from $d$-dimensional Gaussian distribution
$N\Paren{0, \Id + \beta vv^\top}$, 
where $\beta > 0$ and $v\in \R^d$ is a 
$k$-sparse\footnote{That is, this vector has at most $k$ non-zero coordinates.} unit vector. 
The goal is to compute an estimator $\hat{\bm v}$ such that $\Norm{\hat{\bm v}} = 1$ 
and
$\Abs{\Iprod{\hat{\bm v}, v}}$ is close to $1$ (say, is greater than $0.99$) 
with high probability\footnote{It is impossible to recover the sign of $v$ from $\bm Y_1,\ldots,  \bm Y_n$.}. 
In this paper we mostly focus on the regime when the number of samples $n$ is greater than the dimension $d$, 
and in this section of the paper we always assume that\footnote{ We hide absolute constant multiplicative factors using the standard notations $O(\cdot), \Omega(\cdot), \lesssim, \gtrsim$.}  $n\ge \Omega(d)$ (unless stated otherwise).

\paragraph{Classical settings.}
The standard approach in covariance estimation is to consider the empirical covariance $\tfrac{1}{n}\bm Y^\top \bm Y$ 
(where $\bm Y$ is the matrix with rows $\bm Y_1,\ldots,  \bm Y_n$).
The top eigenvector of $\tfrac{1}{n}\bm Y^\top \bm Y$
is highly correlated with $v$ or $-v$ as long as $\beta \gtrsim \sqrt{\frac{d}{n}}$, 
and in non-sparse settings ($k = d$) these guarantees are information theoretically optimal.
For $k < d$, there exists an estimator with better guarantees. 
It uses exhaustive search over all $\binom{d}{k}$ candidates for the support of $v$ and is close to $v$ or $-v$ 
iff $\beta \gtrsim \sqrt{\frac{k\log\Paren{de/k}}{n}}$, 
and these guarantees are information theoretically optimal in sparse settings \cite{amini2009,DBLP:journals/corr/abs-1304-0828,berthet2013}. 

As was observed in \cite{johnstone-lu-dt}, known algorithmic guarantees for sparse PCA are strictly worse than the statistical
 guarantees described above. 
In the regime $k \gg \sqrt{d}$, no polynomial time algorithm is known to work if $\beta \lesssim \sqrt{\frac{d}{n}}$ 
(recall that if $\beta \gtrsim \sqrt{\frac{d}{n}}$, the top eigenvector of the empirical covariance is a good estimator). 
\cite{johnstone-lu-dt} proposed a polynomial time algorithm 
(called \emph{Diagonal Thresholding}) that finds an estimator that is close to $v$ or $-v$ as long as 
$\beta \gtrsim k\sqrt{\frac{\log d}{n}}$, which is better than the top eigenvector of $\bm Y^\top \bm Y$ if $k \ll \sqrt{d}$, 
but is worse than the information-theoretically optimal estimator by a factor $\sqrt{k}$. 

Later many computational lower bounds of different kind appeared: reductions from the planted clique problem \cite{DBLP:conf/colt/BerthetR13, DBLP:journals/corr/abs-1304-0828, WBS16, GMZ17, BBH18, DBLP:conf/colt/BrennanB19}, 
low degree polynomial lower bounds \cite{DKWB19, dKNS20}, 
statistical query lower bounds \cite{BBH+21}, SDP and sum-of-squares lower bounds \cite{KNV15, DBLP:conf/nips/MaW15, PR22}, 
lower bounds for Markov chain Monte Carlo methods \cite{AWZ20}.
These lower bounds suggest that the algorithms described above
should have optimal guarantees in the regimes $k \ll \sqrt{d}$ (Diagonal Thresholding) and $k\gg \sqrt{d}$ (the top eigenvector),
so it is unlikely that there exist efficient algorithms with significantly better guarantees if $k \ll \sqrt{d}$ or $k \gg \sqrt{d}$.

The regime $k \approx \sqrt{d}$ is more interesting. 
For a long time no efficiently computable estimator with provable
 guarantees better than the top eigenvector of $\bm Y^\top \bm Y$ or than Diagonal Thresholding was known, until
\cite{DBLP:conf/nips/DeshpandeM14} proved that a polynomial time algorithm (called \emph{Covariance Thresholding}) computes an estimator that is close
to $v$ or $-v$ as long as $\beta\gtrsim k\sqrt{\frac{\log\Paren{2+ d/k^2}}{n}}$.
This estimator can exploit sparsity if $k < \sqrt{d}$ and 
is better than Diagonal Thresholding and the top eigenvector of the empirical covariance  in the regime $d^{1/2 - o(1)} < k< \sqrt{d}$. 

These results show that in order to work with smaller signal strength $\beta$, 
one needs either to work with larger number of samples $n$, or to work with a sparser vector $v$ (i.e. smaller $k$).
\cite{DKWB19} (and independently \cite{HSV20}) showed that in some regimes there is another option: 
one can (smoothly) increase the running time needed to compute the estimator in order to work with smaller signal strength. 
Concretely, they showed that for $1 \le t \le k / \log d$ 
there exists an estimator that can be computed in time $d^{O(t)}$ 
(via \emph{limited brute force}) and is close to $v$ or $-v$ as long as
$\beta \gtrsim k\sqrt{\frac{\log d}{tn}}$. 
The following example illustrates their result: 
For some  $n, d, k\in \N$, let $\beta_{\text{DT}}$ be the smallest signal strength such that Diagonal Thresholding,
given an instance $\bm Y$ of sparse PCA with $n$ samples, dimension $d$, sparsity $k$ and signal strength $\beta_{\text{DT}}$,
finds a unit vector $\hat{\bm v}_{\text{DT}}$ such that $\Abs{\Iprod{\hat{\bm v}_{\text{DT}}, v}} \ge 0.99$ with high probability. 
Now suppose that for the same $n, d, k$, we are given an instance $\bm Y'$ of sparse PCA with smaller signal strength 
$\beta_{\text{new}} = 0.01 \cdot \beta_{\text{DT}}$. 
Then their result implies that there 
exists a polynomial time algorithm that, given $\bm Y'$, finds a unit vector $\hat{\bm v}_{\text{new}}$ such that 
$\Abs{\Iprod{\hat{\bm v}_{\text{new}}, v}} \ge 0.99$ with high probability. 

However, the approach of \cite{DKWB19} and \cite{HSV20} is not compatible with the optimal guarantees in the regime $k \approx \sqrt{d}$. 
More precisely, if we define $\beta_{\text{CT}}$ as the smallest signal strength for Covariance Thresholding 
(in the same way as we defined $\beta_{\text{DT}}$ for Diagonal Thresholding), 
then the limited brute force that works with signal strength  $0.01 \cdot \beta_{\text{CT}}$ requires $t$ to be at least $\log d$ and hence
runs in quasi-polynomial time $d^{O\Paren{\log d}}$. Prior to this work it was the fastest algorithm in this regime.

Our result shows that it is possible to smoothly increasy running time in order to work with smaller signal strength as long as 
$k \le O(\sqrt{d})$ and $n\ge \Omega\Paren{d}$.
It can be informally described as follows: Let $\cA$ be an arbitrary currently known polynomial time algorithm for sparse PCA. 
Let $\beta_{\cA}$ be the smallest signal strength such that $\cA$,
given an instance $\bm Y$ of sparse PCA with, dimension $d$, $n \ge \Omega(d)$ samples, sparsity $k\le O\Paren{\sqrt{d}}$,
and signal strength $\beta_{\cA}$,
finds a unit vector $\hat{\bm v}_{\cA}$ such that $\Abs{\Iprod{\hat{\bm v}_{\cA}, v}} \ge 0.99$ with high probability. 
For arbitrary constant $C \ge 1$, let $\beta_C = \frac{1}{C}\beta_{\cA}$. 
Then there exists a polynomial time\footnote{The degree of the polynomial depends on $C$.} algorithm, that, 
given an instance of sparse PCA $\bm Y'$ with signal strength $\beta_C$ and the same parameters $n, d, k$ as for $\bm Y$,  
finds a unit vector $\hat{\bm v}_{\text{new}}$ such that 
$\Abs{\Iprod{\hat{\bm v}_{\text{new}}, v}} \ge 0.99$ with high probability.

In particular, our result implies that there exists a polynomial time algorithm that works with signal strength  $0.01 \cdot \beta_{\text{CT}}$, 
which is a significant improvement compared to the best previously known (quasi-polynomial time) algorithm. 
Moreover, our result also implies the first polynomial time algorithm that can exploit sparsity and has better guarantees than the top eigenvector
 even in the regime $k \ge \sqrt{d}$ (as long as $k \le O\Paren{\sqrt{d}}$).

\paragraph{Semidefinite programming and adversarial perturbations} $\,$ 
\cite{DBLP:conf/nips/dAspremontGJL04} introduced \emph{basic SDP} for sparse PCA. 
basic SDP achieves guarantees of 
both the top eigenvector of the empirical covariance and Diagonal Thresholding. 
Later \cite{dKNS20} proved that it also achieves the guarantees of Covariance Thresholding, 
and hence captures the best currently known polynomial time guarantees. 

Moreover, \cite{dKNS20} showed that basic SDP also works with adversarial perturbations.
 More precisely, if a small (adversarially chosen) value $E_{ij}$
 is added to every entry $\bm Y_{ij}$ of an instance of sparse PCA, basic SDP still recovers $v$ or $-v$ with high probability.  
Known estimators that are not based on semidefinite programming, 
 including top eigenvector of the empirical covariance, Diagonal Thresholding, 
 Covariance Thresholding and limited brute force, do not work with adversarial perturbations. 
\cite{dKNS20} also provided a family of algorithms based on sum-of-squares relaxations that
 work with adversarial perturbations and
 achieves the guarantees of limited brute force from \cite{DKWB19}.
 
 Similar to non-adversarial case, basic SDP and limited brute force based on sum-of-squares are not compatible with each other:
 in the regime $k\approx \sqrt{d}$, 
 the sum-of-squares approach from \cite{dKNS20} requires degree $\log d$ 
  in order to achieve better guarantees than basic SDP,
 so the corresponding estimator can be computed only in quasi-polynomial time.
 
 We show that our technique also works with adversarial perturbations. 
 We remark that we do not use higher degree sum-of-squares, 
 but only basic SDP for sparse PCA (with some preprocessing and postprocessing steps).
 
One of the applications of our result is an improvement in the planted sparse vector problem. For this problem we focus on the regime $\Omega(d) < n < d$.
In this problem, 
we are given an $n$-dimensional
 subspace of $\R^d$ that contains a sparse vector, 
and the goal is to estimate this vector. This problem was extensively studied in literature in different settings \cite{HD13, DBLP:conf/stoc/BarakKS14, DBLP:conf/stoc/HopkinsSSS16, DBLP:journals/corr/abs-2001-06970, DBLP:journals/corr/abs-2105-15081, DBLP:conf/colt/ZadikSWB22, DBLP:conf/colt/DiakonikolasK22a}.
It is not hard to see\footnote{See the discussion before \cref{cor:planted-vector}.} 
that this problem in the \emph{Gaussian basis} model (in the sense of \cite{DBLP:journals/corr/abs-2105-15081}) 
is a special case of sparse PCA with small perturbations. 
Our result shows that as long as $k\le \sqrt{td}$, 
there exists a $d^{O(t)}$ time algorithm for this problem. 
Previously known polynomial time algorithms in the regime 
$n\ge \Omega(d)$ required $k \le C \sqrt{d}$ for some absolute constant $C$ and did not work for $k > C \sqrt{d}$.
Lower bounds against restricted computational models \cite{dKNS20, DBLP:journals/tit/DingKWB21, DBLP:journals/corr/abs-2301-11124}
suggest that in the regime $n\ge \Omega(d)$
this problem is unlikely to be solvable in polynomial time if $k \gg \sqrt{d}$.

\paragraph{The Wigner model and symmetric noise} Our results can be naturally applied also to the {Wigner} model. In this model, 
we are given $\bm Y = \lambda vv^\top + \bm W$, 
where $\lambda > 0$, $v\in \R^d$ is a $k$-sparse unit vector, and $\bm W\sim N\Paren{0,1}^{d\times d}$.
For this model, Covariance Thresholding finds an estimator highly correlated with with $v$ or $-v$ as long as 
$\lambda \gtrsim k\sqrt{\log\Paren{2+ d/k^2}}$, 
while limited brute force from \cite{DKWB19} 
computes in time $d^{O(t)}$ an estimator close to $v$ or $-v$ as long as $\lambda \gtrsim k\sqrt{\frac{\log d}{t}}$.
As in the Wishart model, these approaches are not compatible in the regime $k \approx \sqrt{d}$. 
Our techniques can be naturally applied to the Wigner model, 
leading to the best known algorithms for this problem. 

As in the Wishart model, our techniques also work with adversarial perturbations. 
Moreover, our approach is compatible with the recent study of Sparse PCA with symmetric noise \cite{dNNS22}. 
In this model, Gaussian noise $\bm W$ is replaced by an arbitrary  noise $\bm N$ with symmetric about zero
 independent entries that are only guaranteed to be bounded  by $1$ with probability\footnote{Note that even the first moment is not required to exist.}  $\Omega\Paren{1}$. 
They proposed a quasi-polynomial algorithm for sparse PCA in these settings and provided evidence that in the regime $k\ll \sqrt{d}$
 this running time cannot be improved (via reduction from the planted clique problem). 
Combining their algorithm with our approach, 
we show that in the regime $k\approx \sqrt{d}$ there exists a polynomial time algorithm that solves this problem.

%

\subsection{Results}
Before stating the results, observe that one can write an instance $\bm Y$ sparse PCA in the Wishart model as $\bm Y  = \sqrt{\beta}\bm uv^\top + \bm W$, where $u\sim N(0,1)^n$ and $\bm W \sim N(0,1)^{n\times d}$ are independent.

\paragraph{Classical settings} 
Our first result is estimating $v$ in the Wishart model in classical settings (without perturbations).
\begin{theorem}[The Wishart model]\label{thm:wishart-classical-results}
	Let $n,d,k,t\in \N$, $\beta > 0$. 
	Let 
	$
	\bm Y = \sqrt{\beta}\bm u v^\top + \bm W\,,
	$
	where $\bm u \sim N(0,1)^n$, $v\in \R^d$ is a $k$-sparse unit vector, $\bm W\sim N(0,1)^{n\times d}$ independent of $\bm u$.
	
	There exists an absolute constant $C > 1$, such that if $n \ge Ck$, 
	$k\ge C t\log^2 d$ and
	\[
	\beta \ge C \frac{k}{\sqrt{tn}}\sqrt{\log\Paren{2 + \frac{td}{k^2}\Paren{1 + \frac{d}{n}}  }}\,,
	\]
	then there exists an algorithm that, given $\bm Y$, $k$ and $t$, 
	in time $n \cdot d^{O(t)}$ outputs a unit vector $\hat{\bm v}$ such that
	with probability $1-o(1)$ as $d\to \infty$,
	\[
	\abs{\iprod{\hat{\bm v}, v}} \ge 0.99\,.
	\]
\end{theorem}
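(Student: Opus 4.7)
The plan is to improve the $\sqrt{\ln(d/k^2)}$ factor of Covariance Thresholding by roughly $\sqrt t$, at the cost of running time $d^{O(t)}$, by combining enumeration over size-$t$ subsets with Basic SDP \cite{DBLP:conf/nips/dAspremontGJL04,dKNS20} as the downstream recovery routine. The algorithm iterates over the $\binom{d}{t}=d^{O(t)}$ subsets $S\subset [d]$ of size $t$; for each $S$ it produces a candidate $k$-sparse unit vector $\hat{\bm v}_S$, and it returns the candidate maximizing $\hat{\bm v}_S^\top \hat\Sigma\,\hat{\bm v}_S$, where $\hat\Sigma := \tfrac{1}{n}\bm Y^\top\bm Y - \Id$.

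For the per-subset procedure I would use the $n\times t$ block $\bm Y_{\cdot,S}$ to extract a sample-side hint $\tilde{\bm u}_S\in \R^n$ correlated with $\bm u/\norm{\bm u}$---concretely, an appropriately normalized linear aggregate such as $\bm Y_{\cdot,S}\mathbf{1}_S/\norm{\bm Y_{\cdot,S}\mathbf{1}_S}$. A direct moment computation shows that for $S\subset \supp(v)$ with $\sum_{i\in S} v_i \gtrsim \sqrt{t/k}$ (which holds for an $\Omega(1)$-fraction of the $\binom{k}{t}$ subsets of $\supp(v)$ when $v$ is well-spread), one has $\iprod{\tilde{\bm u}_S,\, \bm u/\norm{\bm u}} = \Omega(\sqrt{\beta t/k})$, regardless of whether the signal sits above or below the BBP phase transition of the $n\times t$ block. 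Projecting the full data onto $\tilde{\bm u}_S$ produces $\bm z_S := \bm Y^\top \tilde{\bm u}_S\in \R^d$, whose entries at indices $i\notin S$ split as $\sqrt\beta\, v_i \iprod{\bm u,\tilde{\bm u}_S} + g_i$ with $g_i \sim N(0,1)$ independent---since $\bm W_{\cdot,i}$ is independent of $\bm Y_{\cdot,S}$ for $i\notin S$---that is, signal of size $\Omega(\beta v_i\sqrt{nt/k})$ against unit noise. I would then feed a suitable deflated matrix built from $\bm z_S$ into Basic SDP with sparsity $k$ and round its solution to a $k$-sparse unit vector $\hat{\bm v}_S$.

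Existence of a good candidate then reduces to verifying that $\beta v_i\sqrt{nt/k}\gtrsim \sqrt{\ln(2+td/k^2)}$ holds for $v_i\gtrsim 1/\sqrt{k}$, which is precisely the hypothesis rearranged; the robustness of Basic SDP on the deflated instance can be taken essentially verbatim from \cite{dKNS20}. The main obstacle is the selection step: a naive union bound over $d^{O(t)}$ subsets would contribute a spurious $\sqrt{t\ln d}$ factor rather than the sharp $\sqrt{\ln(2+td/k^2)}$.

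To recover the sharp factor I would first prune to $O(k)$ plausible support indices via diagonal thresholding (valid because $n\ge \Omega(d)$ and $k\le O(\sqrt d)$ together make the diagonal of $\hat\Sigma$ a reliable indicator), reducing the outer enumeration to $\binom{O(k)}{t}\approx k^t$ candidates. The remaining $\ln(2+td/k^2)$ factor then emerges by mirroring the partial-support accounting of \cite{DBLP:conf/nips/DeshpandeM14} that turns Diagonal Thresholding's $\sqrt{\ln d}$ into Covariance Thresholding's $\sqrt{\ln(d/k^2)}$, now applied uniformly across the $k^t$ parallel inner instances. In effect the present algorithm replaces the brute-force inner loop of \cite{DKWB19,HSV20}---which carries Diagonal-Thresholding-style noise $\sqrt{\ln d}$---with a Covariance-Thresholding-style inner loop, and it is this replacement that converts $\sqrt{t\ln d}$ into $\sqrt{\ln(td/k^2)}$. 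The remaining ingredients, namely the concentration of $\tilde{\bm u}_S$, the independence of off-$S$ noise, and the robustness of Basic SDP, are standard given the cited results.
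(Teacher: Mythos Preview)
Your inner loop does not deliver the claimed factor. Under the natural reading, $\bm z_S=\bm Y^\top\tilde{\bm u}_S$ is (on indices $i\notin S$) a vector of the form $cv+g$ with $c\approx\beta\sqrt{nt/k}$ and $g\sim N(0,\Id_d)$; any ``deflated matrix built from $\bm z_S$'' is essentially rank one, so running Basic SDP on it reduces to $k$-sparse thresholding of $\bm z_S$. That succeeds only when $c/\sqrt k\gtrsim\sqrt{\ln(d/k)}$, i.e.\ $\beta\gtrsim\frac{k}{\sqrt{nt}}\sqrt{\ln(d/k)}$---precisely the Limited Brute Force guarantee of \cite{DKWB19}, not $\sqrt{\ln(2+td/k^2)}$. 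For $k\approx\sqrt d$ the gap is $\sqrt{\ln d}$ versus $\sqrt{\ln t}$. Your assertion that the sharp factor ``emerges by mirroring the partial-support accounting of \cite{DBLP:conf/nips/DeshpandeM14}'' is not backed by any concrete construction, and the Diagonal-Thresholding pruning you propose to enable it fails outright: in the target regime $\beta\approx\frac{k}{\sqrt{tn}}\sqrt{\ln t}$ is far below the $\beta\gtrsim k\sqrt{(\ln d)/n}$ that Diagonal Thresholding needs even for partial support identification. (A smaller issue: using unsigned $\mathbf 1_S$ instead of signed $s\in\cS_t$ already costs a factor $\sqrt t$ in $\iprod{\tilde{\bm u}_S,\bm u/\norm{\bm u}}$.)

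The mechanism you are missing is the one the paper actually uses. Take (essentially) your vector $\bm Y^\top\bm Ys$, but use its large entries not as an estimator---rather, to \emph{select a small set of coordinates} $\{i:\abs{(\bm Y^\top\bm Ys)_i}\ge rtn\}$ of size about $pd+k$ with $p=\exp(-\Theta(\beta^2 tn/k^2))$, and then run ordinary PCA on the \emph{restricted empirical covariance} $(\bm Y^\top\bm Y)\circ(\bm z_s\bm z_s^\top)$ over those coordinates. Under the hypothesis one gets $pd\lesssim k^2/t$, so the restricted instance has ambient dimension $d'\approx k^2/t$, and plain PCA succeeds once $\beta\gtrsim\sqrt{d'/n}=k/\sqrt{tn}$; the $\sqrt{\ln(2+td/k^2)}$ factor comes solely from the Gaussian tail controlling $p$, not from the downstream recovery. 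This is the planted-clique-style dimension reduction described in \cref{sec:techniques}, and it is absent from your proposal. Finally, your union-bound concern is a red herring: the per-$s$ guarantees are applied only to the single deterministic $s^*=\argmax_{s\in\cS_t}\iprod{s,v}$, and the selection over the full list of candidates is handled by a single $k'$-sparse-norm bound on $\bm Y^\top\bm Y-n\Id-\beta\norm{\bm u}^2vv^\top$ via \cref{lem:list-decoding}, with no enumeration over $\cS_t$ entering the probability accounting.
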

Let us compare our guarantees with previously known estimators. For simplicity we assume $n \ge \Omega(d)$.
For this regime, best estimators known prior to this work and their guarantees are listed in Table \ref{table:classical}.

\begin{table}[h!]
\begin{center}
	\begin{tabular}{| c | c | c |}
		\hline
		Estimator & Signal Strength & Time Complexity\\
		\hline\hline
		Statistically optimal estimator & $ \beta\gtrsim \sqrt{\frac{k}{n}\log \Paren{ed/k}}$ & $n\cdot d^{O(k)}$ \\[5pt]
		\hline
		Top eigenvector of the empirical covariance & $\beta\gtrsim \sqrt{\frac{d}{n}}$ & $n\cdot d^{O(1)}$ \\ 
		\hline
		Covariance Thresholding & $\beta\gtrsim \frac{k}{\sqrt{n}}\sqrt{\log \Paren{2 + d/k^2}}$ & $n\cdot d^{O(1)}$ \\ 
		\hline
		Limited brute force from \cite{DKWB19} & $\beta\gtrsim \frac{k}{\sqrt{tn}}\sqrt{\log d}$ & $n\cdot d^{O(t)}$ \\ 
\hline
		Our estimator & $\beta\gtrsim \frac{k}{\sqrt{tn}}\sqrt{\log \Paren{2 + td/k^2}}$ & $n\cdot d^{O(t)}$ \\ 
\hline
	\end{tabular}
\end{center}
 \caption{Estimators for sparse PCA in the Wishart model (assuming $n\ge \Omega(d)$ and $t \le k / \polylog(d)$).}\label{table:classical}
\end{table}

For $k \le d^{1/2-\Omega(1)}$ (say, $k \le d^{0.49}$), the guarantees of the algorithm from \cite{DKWB19} are similar to ours (up to a constant factor). 
For $k \ge d^{1/2-o(1)}$ our algorithm can work with asymptotically smaller signal strength (with the same running time).

To compare with Covariance Thresholding and the top eigenvector of the empirical covariance, consider the regime
$k = \Theta\Paren{\sqrt{d}}$. 
Note in this regime both Covariance Thresholding and the top eigenvector require
\[
\beta \ge c \sqrt{d/n}
\]
for some specific constant $c$ (that depends on $\sqrt{d}/k$), and they do not work for smaller $\beta$.
Our condition on $\beta$ in these settings is
\[
	\beta \gtrsim \frac{k}{\sqrt{tn}}\sqrt{\log t}\,,
\]
so if $\beta = \eps \sqrt{d/n}$ for arbitrary constant $\eps$, we can choose large enough constant $t$ such that 
$\eps\sqrt{d} \gtrsim k\sqrt{\frac{\log t}{t}}$ 
and get an estimator that is highly correlated with $v$ or $-v$ in polynomial time $n\cdot d^{O(t)}$. 
Neither Covariance Thresholding nor the top eigenvector of the empirical covariance can work with small values of $\eps$, 
and limited brute force from \cite{DKWB19} requires quasi-polynomial time $n \cdot d^{O\Paren{\log d}}$ in these settings.

It is also interesting to compare our upper bound with the low degree polynomial lower bound from \cite{dKNS20}.
They showed that in the regime $k \le O\Paren{\sqrt{d}}$, 
polynomials of degree $D \le n /\log^2 n$ 
cannot distinguish\footnote{More precisely, they cannot \emph{strongly distinguish} 
	sequences of distributions in the sense of \cite{DBLP:journals/corr/abs-1907-11636}.} 
$\bm Y_1\ldots, \bm Y_n \sim N(0, \Id + \beta vv^\top)$ 
from  $\bm Y_1\ldots, \bm Y_n \sim N(0, \Id)$ if
\[
\beta \lesssim {\frac{k}{\sqrt{Dn}}\cdot {\log\Paren{2 + \frac{Dd}{k^2} }}}\,.
\]
and hence for such $\beta$ they cannot be used to design an estimator that is close to $v$ or $-v$ with high probability. 
Their lower bound does not formally imply that our upper bound is tight 
(that is, it does not imply that there are no better estimators than ours among low degree polynomials).
However, there is an interesting similarity between the lower bound and the upper bound: They have a very similar logarithmic factor. If this similarity can be formalized, it may lead to an algorithm that works in a small sample regime $n\ll d$.  
The term $d/n$ that we have in the logarithmic factor in the bound on $\beta$ is necessary for our techniques.
Many  other algorithms, like basic SDP or Covariance Thresholding, also have similar terms.
However, the low-degree lower bound does not have this term and 
\cite{dKNS20} provided an algorithm based on low degree polynomials that does not have such a term
and works as long as $\beta \gtrsim \frac{k}{\sqrt{n}} \sqrt{\log\Paren{2 + td/k^2}}$ even for very small $n$ (e.g. $n = d^{0.01}$). 
Finding an estimator with guarantees similar to ours in the small sample regime $n \ll d$ is an interesting open question, 
and low degree polynomials might be useful in designing such an estimator.

\paragraph{Adversarial perturbations}
Our approach also works in the presence of adversarial perturbations. 
\begin{theorem}[The Wishart model with adversarial perturbations]\label{thm:wishart-perturbations-results}
		Let $n,d,k,t\in \N$, $\beta > 0$, $\eps \in (0,1)$. 
	Let 
	$
	Y = \sqrt{\beta}\bm u v^\top + \bm W + E\,,
	$ 
	where $\bm u \sim N(0,1)^n$, $v\in \R^d$ is a $k$-sparse unit vector, $\bm W\sim N(0,1)^{n\times d}$ independent of $\bm u$ 
	and $E \in \R^{n\times d}$ is a matrix such that
	\[
	\Norm{E}_{1\to 2} \le \eps \cdot \min\Set{\sqrt{\beta}, \beta}\cdot \sqrt{n/k}\,,
	\]
	where $\Norm{E}_{1\to 2}$ is the maximal norm of the columns of $E$ and $\eps < 1$. 
	
	There exists an absolute constant $C > 1$, such that if 
	$n \ge C k$, 
	$k\ge C t\log^2 d$,
	\[
	\beta \ge C \frac{k}{\sqrt{tn}}\sqrt{\log\Paren{2 + \frac{td}{k^2}\Paren{1 + \frac{d}{n}} }}\,.
	\]
	and
	$
	\eps\sqrt{\log\Paren{1/\eps}} \le \frac{1}{C} \min\Set{1,\min\Set{\beta, \sqrt{\beta}} \cdot \sqrt{n/d}}\,,
	$
	then there exists an algorithm that, given $Y$, $k$ and $t$, 
	in time $n \cdot d^{O(t)}$ outputs a unit vector $\hat{\bm v}$ such that
	with probability $1-o(1)$ as $d\to \infty$,
	\[
	\abs{\iprod{\hat{\bm v}, v}} \ge 0.99\,.
	\]
\end{theorem}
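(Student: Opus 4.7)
The plan is to run essentially the same algorithm as for \cref{thm:wishart-classical-results} and argue that each step degrades only by the allowed amount once the adversarial perturbation $E$ is added. I would decompose $Y = \sqrt{\beta}\bm u v^\top + \bm W + E$ and separately track the contributions of the signal, the Gaussian noise and $E$ through each stage of the algorithm (preprocessing to identify a small candidate support, restricted recovery via Basic SDP, and final postprocessing/selection). Because the abstract stresses that only Basic SDP is used, I expect the perturbation-robust structural statements to come from plugging the guarantees of Basic SDP (established in \cite{dKNS20}) into the exact same outer loop used in the clean case.

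First, I would use the column-norm bound on $E$ to control its contribution to every restricted instance the algorithm constructs. For any $S \subset [d]$ of size $s$,
\[
\Norm{E_S}_{\text{op}} \le \Norm{E_S}_F \le \sqrt{s}\cdot \Norm{E}_{1\to 2} \le \epsilon\min\Set{\sqrt{\beta},\beta}\sqrt{sn/k}\,.
\]
Comparing with the signal strength $\sqrt{\beta n}\cdot \Norm{v_S}$ and the Gaussian operator norm $\Norm{\bm W_S}_{\text{op}} \asymp \sqrt{n}+\sqrt{s}$, the hypothesis $\epsilon\sqrt{\ln(1/\epsilon)} \le \tfrac{1}{C}\min\{1,\min\{\beta,\sqrt{\beta}\}\sqrt{n/d}\}$ is exactly the balance needed so that $E_S$ contributes at most a constant factor to every operator-norm inequality used in the clean-case proof. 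The diagonal statistics of $Y^\top Y$ used by the preprocessing step are similarly perturbed by at most $O(\epsilon^2\beta n/k)$ per entry, which under our hypothesis remains dominated by the gap between $\text{supp}(v)$ and the rest. Consequently the preprocessing still produces a set of $O(\mathrm{poly}(k/t))$ candidate coordinates containing $\text{supp}(v)$.

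Second, on every restricted instance produced by the enumeration, I would replace any non-robust primitive (e.g., the top eigenvector of a restricted empirical covariance) with Basic SDP applied to the restricted matrix. The robustness of Basic SDP to column-bounded adversarial noise from \cite{dKNS20} then guarantees that on the branch whose candidate subset correctly captures $\text{supp}(v)$, the output is $0.99$-correlated with $v$ or $-v$. The aggregation step selects the best candidate across the $d^{O(t)}$ branches using a global statistic such as $\Norm{Y\hat{\bm v}_S}^2$, which is itself only mildly perturbed by $E$ under the same column-norm bound.

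The main obstacle will be proving that the perturbation does not accumulate across the $d^{O(t)}$ parallel branches. A strategically chosen $E$ can potentially spoil any single subset it targets by concentrating its mass on a handful of columns, but its $\Norm{\cdot}_{1\to 2}$ budget forces this spoiling to be concentrated on few columns of $Y$; in particular, only a small fraction of subsets can be badly perturbed. Making the postprocessing selection step formally robust to this adversarial concentration, while simultaneously ensuring that the ``correct'' branch survives, is the most delicate part of the argument, and is where the $\sqrt{\ln(1/\epsilon)}$ factor in the hypothesis on $\epsilon$ enters: it is the price of a union bound over the $d^{O(t)}$ candidate subsets in the final selection stage.
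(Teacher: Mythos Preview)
Your high-level plan is right, but the mechanism you propose for the ``main obstacle'' is not the one that works, and this is a genuine gap. The dangerous term is $\tfrac{1}{n}E^\top \bm W s$: since $E$ may depend on $\bm W$ and $\|\bm W s\|\approx\sqrt{tn}$, for any \emph{fixed} correct $s$ the adversary can align every column $E_i$ with $\bm W s$ and make $(E^\top \bm W s)_i$ large on all of $\supp(v)$, wiping out the signal in $H(s)$. Your argument that ``$E$'s $\|\cdot\|_{1\to 2}$ budget forces the spoiling to be concentrated on few columns'' does not prevent this. The paper's fix is an \emph{averaging} argument, not a union bound: partition $\supp(v)$ into $m\approx k/t$ disjoint blocks, so the corresponding $s_1,\dots,s_m\in\cS_t$ have $\bm W s_1,\dots,\bm W s_m$ \emph{independent} Gaussians; then $\tfrac{1}{m}\sum_j\langle E_i,\bm W s_j\rangle^2\le O(t n b^2/m)$ by a spectral-norm bound on a Gaussian matrix, which forces the existence of at least one block $s'$ on which $\|(E^\top\bm W s')_{\supp(v)}\|$ is dominated by the signal. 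Without this block-partition step you cannot exhibit a single good branch.

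Two further corrections. First, the preprocessing is not based on diagonal statistics of $Y^\top Y$; it thresholds the entries of $Y^\top Y s$ for each $s\in\cS_t$, so your perturbation analysis of diagonals is aimed at the wrong object. Second, the $\eps\sqrt{\ln(1/\eps)}$ condition does \emph{not} arise from a union bound over $d^{O(t)}$ branches. For the good $s'$ found above, the perturbation still creates up to $O(\eps^2 d)$ extra ``adjacent'' indices $i\notin\supp(v)$ (via a Markov bound on $\|(E^\top Y+Y^\top E)s'\|$), and the spectral norm of the restricted Gaussian noise on those extra rows/columns contributes $\sqrt{(1+\beta)n\cdot \eps^2 d\ln(1/\eps)}$; requiring this to be $\lesssim\beta n$ is exactly the hypothesis $\eps\sqrt{\ln(1/\eps)}\lesssim\min\{\beta,\sqrt{\beta}\}\sqrt{n/d}$. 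The final list-decoding step is robust to $E$ directly through the $k'$-sparse norm bound on $Y^\top Y-n\Id-\beta\|\bm u\|^2 vv^\top$ and needs no separate union-bound protection.
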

To illustrate how large the adversarial perturbations are allowed to be, consider the following example: 
Let $k = \Theta\paren{\sqrt{d}}$, $n = \Theta\Paren{d}$, $\beta = \Theta(1)$, $t \le O(1)$. 
Then the columns of $E$ can have norm as large as $\Omega\Paren{\sqrt{k}}$. 
Note that in these settings column norms of $\sqrt{\beta}uv^\top$ can be $O\Paren{\sqrt{k}}$. 
Hence, in this regime, 
if we allow $E$ to be larger by a constant factor, the adversary can choose $E = -\sqrt{\beta}uv^\top$ and erase the signal.
As was shown in \cite{dKNS20}, in these settings Covariance Thresholding, Diagonal Thresholding
and the top eigenvector of the empirical covariance do not work with some perturbations $E$ such that $\Norm{E}_{1\to 2} \le k^{o(1)}$.

Our assumption on $E$ is stronger than the assumption from \cite{dKNS20}, 
which is $\Norm{E}_{1\to 2} \lesssim \min\Set{\beta, \sqrt{\beta}} \sqrt{n/k}$. 
Designing an estimator with guarantees similar to ours that works with larger $E$ is an interesting problem. 

Similar to the non-adversarial settings, our algorithms have the same guarantees\footnote{Assuming our bound on the columns of $E$.} 
as the sum-of-squares approach from \cite{dKNS20} 
if $k\le d^{1/2-\Omega(1)}$ and has asymptotically better guarantees if $k \ge d^{1/2-o(1)}$. Similarly to Covariance Thresholding in the non-adversarial case, in the regime $n \ge \Omega(d)$ and $k = \Theta\Paren{\sqrt{d}}$ basic SDP requires
$
\beta \ge c \sqrt{d/n}
$
for some specific constant $c$ and does not work for smaller $\beta$.
Our condition on $\beta$ in these settings is
$
\beta \gtrsim \frac{k}{\sqrt{tn}}\sqrt{\log t}\,,
$
so if $\beta = \eps \sqrt{d/n}$ for arbitrary constant $\eps$, we can choose large enough constant $t$ such that 
$\eps\sqrt{d} \gtrsim k\sqrt{\frac{\log t}{t}}$ 
and get an estimator that can be computed in polynomial time $n\cdot d^{O(t)}$. 
basic SDP cannot work with small values of $\eps$, 
and sum-of-squares approach from \cite{dKNS20} requires quasi-polynomial time in these settings.

\paragraph{The sparse planted vector problem}
As was observed in \cite{dKNS20}, sparse PCA with perturbations is a generalization not only for the spiked covariance model, 
but also for the planted sparse vector problem. 
In this problem we are given an $n$-dimensional subspace of $\R^d$ 
spanned by $n-1$ random vectors and a sparse vector, and the goal is to find the sparse vector.
More precisely, let $\bm g_1, \bm g_2,\ldots, \bm g_n$ be standard $d$-dimensional Gaussian vectors 
and let $\bm B$ be an $n\times d$ matrix whose first $n-1$ rows  are  $\bm g_1^\top,\ldots, \bm g_{n-1}^\top$ 
and the last row is a vector $\norm {\bm g_n} v^\top$, where $v\in \R^d$ is $k$-sparse and unit.
Let $\bm R$ be a random rotation of $\R^n$ independent of  $\bm g_1\ldots, \bm g_n$, and let $\bm Y = \bm R \bm B$.
The goal is to recover $v$ from $\bm Y$.

This problem can be seen as a special case    
of sparse PCA with perturbation matrix $E = -\frac{1}{\Norm{\bm u}^2}\bm u\bm u^\top \bm W $ (see \cref{planted-vector} for the proof).
Therefore, we can apply \cref{thm:wishart-perturbations-results} and get
\begin{corollary}[The sparse planted vector problem]\label{cor:planted-vector}
	Let $n,d,k,t\in \N$, $\beta > 0$. 
	Let 
	$
	\bm Y = \sqrt{\bm \beta}\bm u v^\top + \bm W - \frac{1}{\Norm{\bm u}^2}\bm u\bm u^\top \bm W \,,
	$ 
		where $\bm u \sim N(0,1)^n$, $v\in \R^d$ is a $k$-sparse unit vector, $\bm W\sim N(0,1)^{n\times d}$ independent of $\bm u$,
		and $\sqrt{\bm \beta} = \frac{\Norm{\bm u^\top \bm W}}{\Norm{\bm u}^2 }$.
	
	There exists an absolute constant $C > 1$, such that if 
			$d > n$, $n \ge C k$, 
		$k\ge C t\log^2 d$ and
	\[
	k \le \frac{1}{C} \cdot d \sqrt{t/n}\,,
	\]
	then there exists an algorithm that, given $\bm Y$, $k$ and $t$, 
in time $d^{O(t)}$ outputs a unit vector $\hat{\bm v}$ such that
with probability $1-o(1)$ as $d\to \infty$,
\[
\abs{\iprod{\hat{\bm v}, v}} \ge 0.99\,.
\]
\end{corollary}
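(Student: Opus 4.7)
The plan is to observe that the matrix $\bm Y$ in the statement is already in the Wishart-plus-perturbation form of \cref{thm:wishart-perturbations-results}, with signal strength $\sqrt{\bm\beta}$ and perturbation
\[
\bm E \;:=\; -\tfrac{1}{\Norm{\bm u}^2}\bm u\bm u^\top \bm W,
\]
and then to verify the hypotheses of that theorem with probability $1-o(1)$ over $(\bm u,\bm W)$. Three things need to be checked: that $\sqrt{\bm\beta}$ lies in the theorem's range, that the columns of $\bm E$ are small enough, and that the randomness of $\bm\beta$ (a function of $(\bm u,\bm W)$) does not prevent the application of the theorem, which is stated for a deterministic $\beta$.

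Concentration of the signal is immediate. Conditionally on $\bm u$, the vector $\bm u^\top\bm W\in\R^d$ is distributed as $N(0,\Norm{\bm u}^2\Id_d)$, so $\Norm{\bm u^\top\bm W}=\Norm{\bm u}\cdot\chi_d$ in distribution with $\chi_d$ independent of $\bm u$, and hence $\sqrt{\bm\beta}=\chi_d/\Norm{\bm u}$. Combined with $\Norm{\bm u}^2=n(1\pm o(1))$ w.h.p., this gives $\sqrt{\bm\beta}=\sqrt{d/n}\,(1\pm o(1))$ on an event of probability $1-o(1)$; in particular, since $d>n$, on this event $\bm\beta\ge 1$ and $\min\{\sqrt{\bm\beta},\bm\beta\}=\sqrt{\bm\beta}$. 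For the perturbation, the $j$-th column of $\bm E$ equals $-\tfrac{\bm u^\top\bm w_j}{\Norm{\bm u}^2}\bm u$ and therefore has norm $\Abs{\bm u^\top\bm w_j}/\Norm{\bm u}$; since $\bm u^\top\bm w_j/\Norm{\bm u}\sim N(0,1)$ conditional on $\bm u$, a union bound over $j\in[d]$ gives $\Norm{\bm E}_{1\to 2}=O(\sqrt{\ln d})$ with probability $1-o(1)$.

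Plugging these estimates into the hypotheses of \cref{thm:wishart-perturbations-results}, the column bound $\Norm{\bm E}_{1\to 2}\le \eps\sqrt{\bm\beta}\sqrt{n/k}$ becomes $\sqrt{\ln d}\lesssim \eps\sqrt{d/k}$, i.e.\ $\eps\gtrsim\sqrt{k\ln d/d}$, which is a small constant because $n\ge Ck$ and $k\ge Ct\ln^2 d$ together force $k/d\le 1/(C^2\ln d)$; the auxiliary inequality $\eps\sqrt{\ln(1/\eps)}\le \tfrac{1}{C}\min\{1,\sqrt{\bm\beta}\sqrt{n/d}\}$ reduces to $\eps\sqrt{\ln(1/\eps)}\lesssim 1/C$ because $\sqrt{\bm\beta}\sqrt{n/d}\asymp 1$; and the signal inequality $\bm\beta\ge C\tfrac{k}{\sqrt{tn}}\sqrt{\ln(2+\tfrac{td}{k^2}(1+d/n))}$ becomes, after substituting $\bm\beta\asymp d/n$ and using $d>n$, equivalent to $\sqrt{s}\gtrsim\sqrt{\ln s}$ with $s:=td^2/(nk^2)$, which holds whenever $s\ge C^2$---i.e.\ exactly when $k\le d\sqrt{t/n}/C$, the corollary's hypothesis.

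The one remaining point is that $\sqrt{\bm\beta}$ is random whereas \cref{thm:wishart-perturbations-results} expects a fixed $\beta$. I would handle this by picking a deterministic $\beta_0$ close to the typical value of $\bm\beta$ (say $\beta_0=d/n$) and rewriting
\[
\bm Y = \sqrt{\beta_0}\bm u v^\top + \bm W + \tilde{\bm E},\qquad \tilde{\bm E} := (\sqrt{\bm\beta}-\sqrt{\beta_0})\bm u v^\top + \bm E.
\]
Standard $\chi$-concentration gives $\Abs{\sqrt{\bm\beta}-\sqrt{\beta_0}}=O(\sqrt{d}/n)$ on the good event, so the extra contribution to each column norm is at most $O(\sqrt{d}/n)\cdot O(\sqrt{n})\cdot\max_j\Abs{v_j}=O(\sqrt{d/n})$, which is dominated by $\eps\sqrt{d/k}$ since $\sqrt{k/n}\le 1/\sqrt{C}$. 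Invoking the theorem with $(\beta_0,\tilde{\bm E})$ then produces the desired estimator; this derandomization together with the constant-factor bookkeeping above is the only (mild) obstacle, and the rest is mechanical.
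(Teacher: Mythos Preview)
Your argument is correct and matches the paper's proof, which likewise views $\bm Y$ as a Wishart-plus-perturbation instance with $\bm E=-\tfrac{1}{\Norm{\bm u}^2}\bm u\bm u^\top\bm W$, checks $\bm\beta\approx d/n$ and $\Norm{\bm E}_{1\to 2}=O(\sqrt{\ln d})$, and invokes \cref{thm:wishart-perturbations-results}; you are in fact more explicit than the paper about derandomizing $\bm\beta$. One small slip: the inequality $k/d\le 1/(C^2\ln d)$ does not follow from $n\ge Ck$ and $k\ge Ct\ln^2 d$ alone (via $d>n$ these give only $k/d\le 1/C$)---you must also use the main hypothesis $k\le d\sqrt{t/n}/C$, which together with $t\le k/(C\ln^2 d)$ and $n\ge Ck$ gives $\sqrt{t/n}\le 1/(C\ln d)$ and hence $k\le d/(C^2\ln d)$.
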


Prior to this work, in the regime $n \ge \Omega(d)$,
polynomial time estimators were known only if
$k \le c d/\sqrt{n}$ for some small constant $c < 1$ 
(the existence of such an algorithm follows from 
Theorem 4.5 from \cite{dKNS20}).
We show that even if $k \ge 100d/\sqrt{n}$, 
sparsity can still be exploited and there are estimators that can be computed in polynomial time.

\paragraph{The Wigner model and symmetric noise}
Our techniques also work with sparse PCA in the Wigner model.
\begin{theorem}[The Wigner model]\label{thm:wigner-results}
	Let $k, d, t\in\N$, $\lambda > 0$. 	Let 
	$
	Y = \lambda vv^\top + \bm W + E\,,
	$ 
	where $v\in \R^d$ is a $k$-sparse unit vector and $\bm W\sim N(0,1)^{d\times d}$ and $E\in \R^{d\times d}$.

	There exists an absolute constant $C > 1$, such that if 
	$k\ge Ct\log d$,
	$
	\Normi{E} \le \frac{1}{C} \lambda / k\,,
	$ and
	\[
	\lambda \ge C k\sqrt{\frac{\log\Paren{2 + {td}/k^2 } }{t} }\,,
	\]
	then there exists an algorithm that, given $Y$, $k$ and $t$, 
	in time $ d^{O(t)}$ outputs a unit vector $\hat{\bm v}$ such that
	with probability $1-o(1)$ as $d\to \infty$,
	\[
	\abs{\iprod{\hat{\bm v}, v}} \ge 0.99\,.
	\]
\end{theorem}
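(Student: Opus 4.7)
The plan is to obtain \cref{thm:wigner-results} by combining two ingredients: covariance-thresholding-style entrywise preprocessing of $Y$, which tames the spectral norm of the noise in sparse directions, and a brute-force enumeration over all size-$t$ subsets of $[d]$, which localizes part of the support of $v$ and boosts the effective signal-to-noise ratio by a factor of $\sqrt{t}$. This interpolates smoothly between Basic SDP (at $t=O(1)$) and the quasi-polynomial sum-of-squares/limited-brute-force algorithms of \cite{DKWB19,dKNS20} (at $t=\Theta(\log d)$), while remaining compatible with the $\ell_\infty$ adversarial perturbation $E$.

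First I would apply an entrywise hard-threshold $\eta_\tau(y)=y\cdot\mathbf{1}_{\abs{y}>\tau}$ to $Y$ at level $\tau\asymp\sqrt{\ln(2+td/k^2)}$, yielding $Y'=\lambda vv^\top+W'+E'$ where $W'=\eta_\tau(W+\lambda vv^\top+E)-\lambda vv^\top-E'$ is the thresholded noise. The key fact, analogous to the analysis of Covariance Thresholding in \cite{DBLP:conf/nips/DeshpandeM14,dKNS20}, is that for $k\lesssim\sqrt{d}$ and our choice of $\tau$, the operator norm of $W'$ restricted to $k$-sparse directions is controlled well enough that a \emph{seeded} scheme can extract $v$. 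The assumption $\normi{E}\le\lambda/(Ck)$ guarantees that thresholding does not destroy the signal entries $\lambda v_iv_j$ on the support.

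Second, I would enumerate all subsets $S\subseteq[d]$ with $\abs{S}=t$ and, for each, iterate over the $2^t$ sign patterns $s\in\{\pm 1\}^S$. For the correct pair $(S,s)$ with $S\subseteq\supp(v)$ and $s_i=\sign(v_i)$, the aggregated column score
\[
w^{(S,s)}_j \;=\; \frac{1}{\sqrt t}\sum_{i\in S} s_i\,Y'_{ij}
\]
has expectation roughly $\lambda v_j\cdot\tfrac{1}{\sqrt t}\sum_{i\in S}\abs{v_i}\asymp\lambda v_j\sqrt{t/k}$ on the support, while at each fixed $j\notin\supp(v)$ it is sub-Gaussian of variance $O(1)$. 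Selecting the top $k$ coordinates of $w^{(S,s)}$ therefore produces a candidate support $\hat T$ that, for the correct seed, coincides with $\supp(v)$ up to a small error whenever $\lambda\sqrt{t}/k \gtrsim \sqrt{\ln(2+td/k^2)}$; this is exactly the hypothesis of the theorem. Having recovered $\hat T$, I would feed the $\abs{\hat T}\times\abs{\hat T}$ principal submatrix of $Y$ into Basic SDP (whose robustness to adversarial perturbations was proved in \cite{dKNS20}) to extract the final unit vector $\hat{\bm v}$. Among the $d^{O(t)}$ candidate seeds the winner is chosen by, e.g., maximizing the quadratic form $\hat v^\top Y\hat v$.

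The main obstacle is the uniform control of the noise vectors $w^{(S,s)}$ over all $d^{O(t)}\cdot 2^t=d^{O(t)}$ seeds simultaneously. A naive union bound over all coordinates $j\in[d]$ and all seeds would produce a factor of $\sqrt{\ln d + t\ln d}\asymp\sqrt{t\ln d}$, which is too weak. To recover the sharper $\sqrt{\ln(2+td/k^2)}$ factor, I would replace the naive argument with a sparse-direction argument: instead of requiring every non-support coordinate to lie below the threshold, it suffices that only a suitably chosen subset of size $\approx d/k^2$ does (as in the Covariance Thresholding analysis). Combining this with the fact that the enumeration only introduces $t\ln d$ additional log-factors — absorbed into $\ln(2+td/k^2)$ in the relevant regime $k\lesssim\sqrt{d}$ — yields the claimed bound. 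The adversarial term $E'$ is controlled separately using $\normi{E}\le\lambda/(Ck)$, which guarantees that $\sum_{i\in S}s_i E'_{ij}$ contributes at most a constant fraction of the signal.
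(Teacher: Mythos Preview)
Your proposal has a concrete gap in the very first step. You threshold entries of $Y$ at level $\tau\asymp\sqrt{\ln(2+td/k^2)}$ and assert that the signal entries $\lambda v_iv_j$ survive; but under the theorem's hypothesis $\lambda\asymp (k/\sqrt{t})\sqrt{\ln(2+td/k^2)}$, so $|\lambda v_iv_j|\approx\lambda/k\asymp\tau/\sqrt{t}$, which for any $t\gg 1$ lies below the threshold and is erased. After that your score $w^{(S,s)}$ carries no signal on the support. Even if you drop the entrywise thresholding, your second step---selecting the \emph{top $k$} coordinates of $w^{(S,s)}$---requires the $k$-th order statistic of $d-k$ standard Gaussians to fall below $\lambda\sqrt{t}/k$, which forces $\lambda\gtrsim k\sqrt{\ln(d/k)/t}$; at $k\approx\sqrt{d}$ this is $\sqrt{d\ln d/t}$, not the claimed $\sqrt{d\ln(t)/t}$. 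The ``sparse-direction argument'' you invoke to close this gap is not spelled out and is not how the Covariance-Thresholding analysis proceeds (that analysis bounds the spectral norm of an entrywise-thresholded \emph{matrix}, which is a different object from your coordinatewise score vector).

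The paper's argument avoids both problems by never thresholding entries of $Y$. For each seed $s\in\mathcal{S}_t$ it computes the scores $(Ys)_i$ and keeps \emph{all} indices $i$ with $|(Ys)_i|\ge rt$, where $r\asymp\lambda/k$; the resulting index set has size at most $O(pd+k)$ with $p=\exp(-\Theta(tr^2))\approx k^2/(td)$, so $pd\lesssim k^2/t$. It then runs Basic SDP on the principal submatrix of $Y$ restricted to this index set. The noise is controlled by a union bound over \emph{all subsets of $[d]$ of size $\le O(pd+k)$} (not over seeds), yielding $\|\bm W\circ(z_sz_s^\top)\|\lesssim\sqrt{(pd+k)\ln(d/(pd+k))}\lesssim\lambda$ exactly under the stated hypothesis. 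The correct seed is identified at the end by list decoding: sparsify each candidate eigenvector to $O(k)$ coordinates and pick the one maximizing the quadratic form in $Y$. The bound $\normi{E}\le\lambda/(Ck)$ enters only to show that $E$ shifts each score by at most $t\normi{E}\ll rt$ and contributes at most $k\normi{E}\ll\lambda$ to the SDP objective, both of which are immediate.
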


Note that $E$ is allowed to be as large as possible (up to a constant factor). 
Similar to the Wishart model, 
previously known polynomial time algorithms required 
\[
\lambda \gtrsim \min\Set{k\sqrt{\frac{\log d}{t}}, k\sqrt{\log\Paren{2 + d/k^2}}, \sqrt{d}}\,,
\]
where $k\sqrt{\frac{\log d}{t}}$ corresponds to the limited brute force from \cite{DKWB19}, 
and $\min\Set{k\sqrt{\log\Paren{2 + d/k^2}}, \sqrt{d}}$ corresponds to basic SDP.
Similarly to the Wishart model, in the regime $k\ge d^{1/2-o(1)}$ we get asymptotically better guarantees than the algorithm from \cite{DKWB19}. 
In the regime $n \ge \Omega(d)$ and $k = \Theta\Paren{\sqrt{d}}$ basic SDP requires
$\lambda \ge c \sqrt{d}$
for some specific constant $c$ and does not work for smaller $\lambda$.
Our condition on $\lambda$ in these settings is
$
\lambda \gtrsim {k}\sqrt{\frac{\log t}{t}}\,,
$
so if $\lambda = \eps \sqrt{d}$ for arbitrary constant $\eps$, we can choose large enough constant $t$ such that 
$\eps\sqrt{d} \gtrsim k\sqrt{\frac{\log t}{t}}$ 
and get an estimator that can be computed in polynomial time $d^{O(t)}$. 
basic SDP cannot work with small values of $\eps$, 
and limited brute force requires quasi-polynomial time in these settings.

Our techniques can be also applied to more general model with symmetric noise studied in \cite{dNNS22}.
\begin{theorem}[Sparse PCA with symmetric heavy-tailed noise]\label{thm:wigner-symmetric-results}
	Let $k, d, t\in\N$, $\lambda > 0$.
Let 
$
\bm Y = \lambda vv^\top + \bm N\,,
$ 
where $v\in \R^d$ is a $k$-sparse unit vector such that $\normi{v}\le 100/\sqrt{k}$
and $\bm N$ is a random matrix with independent
(but not necessarily identically distributed) symmetric about zero entries\footnote{That is, $\bm N_{ij}$ and $-\bm N_{ij}$ have the same distribution.} such that for all $i,j \in [d]$,
$
\Pr\Brac{\Abs{\bm N_{ij}} \le 1} \ge 0.1\,.
$

There exists an absolute constant $C > 1$, such that if 
$k \ge C {t \log d}$,
	\[\
t \ge C\cdot  {\log\Paren{2+ d/k^2}} \,,
\]
and
\[
\lambda \ge k\,,
\]
then there exists an algorithm that, given $\bm Y$, $k$, $t$ and $\lambda$, 
in time $ d^{O(t)}$ outputs a unit vector $\hat{\bm v}$ such that
with probability $1-o(1)$ as $d\to \infty$,
\[
\abs{\iprod{\hat{\bm v}, v}} \ge 0.99\,.
\]

Moreover, we get the same guarantees if we are given only the upper triangle of $\bm Y$, i.e. the entries $\bm Y_{ij}$ such that $i<j$.
\end{theorem}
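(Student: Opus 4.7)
The plan is to deduce this theorem from Theorem~\ref{thm:wigner-results} (the Wigner model with adversarial perturbation) by an entry-wise preprocessing that converts the symmetric heavy-tailed noise $\bm N$ into a bounded, independent, symmetric noise plus a deterministic perturbation of controlled entrywise size. The preprocessing itself is essentially the one used by the quasi-polynomial algorithm of \cite{dNNS22}; the new content here is that after preprocessing the instance already satisfies the hypotheses of Theorem~\ref{thm:wigner-results}, so we can replace their quasi-polynomial sum-of-squares solver by our polynomial-time pipeline in the regime $k\approx\sqrt d$.

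Concretely, I would pick a bounded odd scoring function $\phi\colon\R\to\R$ (e.g.\ $\phi(y)=\sign(y)\cdot\min(|y|,\tau)$ for an absolute constant $\tau$) and apply $\phi$ entry-wise to $\bm Y$. Symmetry of each $\bm N_{ij}$ about zero gives $\E\phi(\bm N_{ij})=0$; combined with $|\lambda v_i v_j|\le O(\lambda/k)$ (from $\normi{v}\le 100/\sqrt k$) and the density-near-zero assumption $\Pr[|\bm N_{ij}|\le 1]\ge 0.1$, one obtains
\[
\E\bigl[\phi(\bm Y_{ij}) \bigm| v\bigr] \;=\; c\,\lambda\, v_i v_j \;+\; r_{ij},
\]
for an absolute constant $c>0$ and a deterministic remainder $r_{ij}$ whose entrywise size is controlled by the quadratic-in-signal term. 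Setting $\bm W'_{ij}=\phi(\bm Y_{ij})-\E\phi(\bm Y_{ij})$ and $E'=(r_{ij})$ we get the decomposition
\[
\phi(\bm Y) \;=\; c\lambda\, vv^\top \;+\; \bm W' \;+\; E',
\]
in which the entries of $\bm W'$ are independent, symmetric, and bounded by $2\tau$ (hence $O(1)$-sub-Gaussian), and $\Normi{E'}\lesssim (c\lambda)/k$ in the parameter regime of the theorem.

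With this decomposition in hand I would apply Theorem~\ref{thm:wigner-results} to $\phi(\bm Y)$ with effective signal $c\lambda$, noise $\bm W'$, and perturbation $E'$. The additional factor of $\sqrt{\ln(2+td/k^2)}$ in the required lower bound on $\lambda$ (relative to Theorem~\ref{thm:wigner-results}) is introduced by the preprocessing: the truncation level $\tau$ has to be tuned so that both the approximate linearity of $\E[\phi(\bm Y_{ij})]$ in the signal and the bound on the centered noise $\bm W'$ are simultaneously compatible with the entrywise hypothesis on $E'$ demanded by Theorem~\ref{thm:wigner-results}, and this tuning costs exactly one $\sqrt{\log}$ factor.

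The main obstacle, in my view, is transferring the proof of Theorem~\ref{thm:wigner-results} (originally stated for Gaussian noise) to the bounded symmetric noise $\bm W'$. That proof proceeds through Basic SDP plus a postprocessing step whose analysis controls operator-norm, injective-norm, and sparse-restricted-norm quantities of the noise matrix, as well as concentration of low-degree polynomial statistics in the noise. All of these admit standard sub-Gaussian replacements (matrix Bernstein inequalities, $\eps$-net arguments, Hanson-Wright), and Gaussian rotational invariance is never genuinely used, but one has to audit each step to confirm this. A secondary subtlety is justifying the approximate linearity of $\E[\phi(\bm Y_{ij})]$ in $\lambda v_i v_j$ uniformly over the (arbitrary) laws of $\bm N_{ij}$, which is precisely where the assumption $\Pr[|\bm N_{ij}|\le 1]\ge 0.1$ enters.
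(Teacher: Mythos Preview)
Your reduction to Theorem~\ref{thm:wigner-results} has a real gap at the decomposition step. You write $\E[\phi(\bm Y_{ij})]=c\,\lambda v_iv_j+r_{ij}$ with an \emph{absolute} constant $c$, but the entries $\bm N_{ij}$ are not identically distributed. For the truncation $\phi$ at level $\tau\ge 1+\lambda\normi{v}^2$ one only gets $\E[\phi(\lambda v_iv_j+\bm N_{ij})]=c_{ij}\,\lambda v_iv_j$ with $c_{ij}\in[0.1,1]$, and nothing in the hypotheses keeps $c_{ij}$ from ranging over this whole interval as $(i,j)$ varies. Taking the common value $c=0.1$, the residual satisfies $\Normi{E'}\le 0.9\,\lambda\normi{v}^2\approx 10^4\,\lambda/k$, i.e.\ the \emph{same order} as the effective signal entries, so the hypothesis $\Normi{E}\le\tfrac1C\lambda'/k$ of Theorem~\ref{thm:wigner-results} fails. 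This is not cosmetic: the Hadamard product $(c_{ij})\circ(\lambda vv^\top)$ need not have top eigenvector close to $v$ (e.g.\ $k=2$, $v=(1,1)/\sqrt2$, $c_{11}=1$, $c_{12}=c_{21}=c_{22}=0.1$), so Basic SDP on the preprocessed matrix can be fooled. Any odd bounded $\phi$ has the same defect, since the slope $\partial_a\E[\phi(a+\bm N_{ij})]$ is distribution-dependent. A related issue: to get $c_{ij}\ge 0.1$ you need $\tau\gtrsim\lambda\normi{v}^2\asymp\lambda/k$, so $\bm W'$ is only $O(\lambda/k)$-sub-Gaussian, not $O(1)$; carrying this through a sub-Gaussian version of Theorem~\ref{thm:wigner-results} gives the constraint $t\gtrsim\ln(2+td/k^2)$ rather than a bound on $\lambda$, and in particular does not cover small $t$.

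The paper does \emph{not} reduce to Theorem~\ref{thm:wigner-results}. It truncates entrywise at level $h\asymp\lambda/k$ only to build the adjacency indicators $\bm z_s$ (via Hoeffding/Chernoff on $\langle s,\bm T_i\rangle$, exactly as in the Gaussian Wigner argument), and then applies the Huber-loss estimator of \cite{dNNS22}, restated as Theorem~\ref{thm:meta-theorem}, to the \emph{untruncated} $\bm Y$ restricted to the index set $\{i:\bm z_{si}=1\}$, over $\cP_Q=\{X\succeq 0,\ \Tr X\le\lambda,\ \Normo{X}\le\lambda k\}$. The Huber loss handles symmetric heavy-tailed noise directly, with no linearization and no fixed $c$; the gain from the paper's pipeline is that each submatrix has only $O(pd+k)$ rows, so the Gaussian-width term $\gamma$ in Theorem~\ref{thm:meta-theorem} drops from $\lambda\sqrt d$ to $\lambda\sqrt{pd+k}$. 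A union bound over all index sets of this size handles the dependence of $\bm z_s$ on $\bm N$, and since $\lambda$ is given, one identifies the correct $s$ by comparing $\Normf{\hat{\bm X}(s)}$ to $\lambda$ rather than through the list-decoding postprocessing.
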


If $k = \Theta\Paren{\sqrt{d}}$, this algorithm runs in polynomial time as long as 
$\lambda \ge \eps\sqrt{d}$ for (arbitrary) constant $\eps$. 
It is the first known polynomial time algorithm for this model, since the algorithm from \cite{dNNS22} requires quasi-polynomial time. 
For example, our algorithm finds an estimator close to $v$ or $-v$ in polynomial time if $k = \sqrt{d}$ and $\lambda = k / 100$ 
when the noise has iid Cauchy entries\footnote{Cauchy noise is very heavy-tailed, the entries do not even have a finite first moment.} (with location $0$ and scale $1$), 
while prior to this work the fastest known algorithm in this regime required quasi-polynomial time even for standard Gaussian noise.

In the special case of Gaussian noise and $\lambda = k$, this algorithm matches the best known algorithmic guarantees.
Note, however, that in the regime $k \le d^{1/2 - \Omega(1)}$ this algorithm runs in quasipolynomial time. This is not surprising, since as was observed in \cite{dKNS20}, sparse PCA with symmetric noise actually generalizes  the planted clique problem.

More precisely, let $\bm G \sim G(d,1/2,k)$ be a random graph with a planted clique of size $k$. Let $\bm A$ be the adjacency matrix of $\bm G$. Let $J$ be the matrix with all entries equal to $1$ and let $\bm C = 2\bm A-J$. Note that the upper triangle of  $\bm C$ coincides with the upper triangle of $k \cdot vv^\top + \bm \eta$, where $\sqrt{k}\cdot v$ is the indicator vector of the vertices of the clique (so it is $k$-sparse), and $\bm \eta$ is the noise whose entries that correspond to the vertices of the clique are zero, and other entries are iid uniform over $\{\pm1\}$. 

The algorithm from \cref{thm:wigner-symmetric-results} solves the planted clique problem in time $n^{O\Paren{\log\Paren{2 + n/k^2}}}$, which matches the best known algorithmic guarantees for the planted clique\footnote{Up to a constant factor in the degree.}. Moreover,  for some $k = n^{\Omega(1)}$ it is conjectured to be impossible to solve it in time $n^{o(\log n)}$ (see \cite{planted_clique_conjecture} for more details). 
  Note that our algorithm  achieves best known algorithmic guarantees for both sparse PCA in the Wigner model and the planted clique problem\footnote{Assuming $\lambda = k$ for sparse PCA.}.
	\section{Techniques}
\label{sec:techniques}

The idea of our approach is similar to the well-known technique of reducing the constant in the planted clique problem. 
Recall that an instance of the planted clique problem is a random graph $\bm G$ sampled according to the following distribution: First, a graph is sampled from Erd\H{o}s-R\'enyi distribution $\cG(m, 1/2)$ (i.e. each pair of vertices is chosen independently to be an edge with probability $1/2$), 
and then a random subset of vertices of size $k$ is chosen (uniformly from the sets of size $k$ and independently from the graph)
and the clique corresponding to these vertices is added to the graph.
The goal is to find the clique. The problem can be solved in quasi-polynomial time, however, no polynomial time algorithm is known in the regime
$k \le o\Paren{\sqrt{m}}$.

\cite{AKS98} proposed a spectral algorithm that can be used to find the clique
in polynomial time if $k \gtrsim \sqrt{m}$.
They also introduced a technique that allows to find the clique in polynomial time if $k \ge \eps \sqrt{m}$ for arbitrary constant $\eps > 0$. 
The idea is to look at every subset $T$ of vertices of size $t \gtrsim \log\Paren{1/\eps}$ and consider the subgraph $\bm H(T)$ 
induced by the vertices of $\bm G$ that are adjacent to  $T$ (i.e. adjacent to every vertex of $T$). 
This subgraph has approximately $m' = 2^{-t} m \lesssim \eps^2 m$ vertices, and if $T$ was a part of the clique, then the clique is preserved in $\bm H(T)$, 
and since $k \gtrsim \sqrt{m'}$, 
we can find a clique applying the spectral algorithm to  $\bm H(T)$.
The running time of the algorithm is $m^{O(t)}$, so it is polynomial for constant $\eps$.

A similar  (but technically more challenging)  idea can be also used for sparse PCA. Recall that the instance of sparse PCA (in the Wishart model) is 
$
\bm Y = \sqrt{\beta}\bm u v^\top + \bm W\,,
$
where $\bm u \sim N(0,1)^n$, $v\in \R^d$ is a $k$-sparse unit vector, $\bm W\sim N(0,1)^{n\times d}$ independent of $\bm u$. 
To illustrate the idea, we assume that $v$ is flat, i.e. its nonzero entries are $\pm 1/\sqrt{k}$.
Instead of the adjacency matrix of the graph, we have the empirical covariance $\tfrac{1}{n} \bm Y^\top \bm Y$. 
For simplicity, let us ignore cross terms and assume that 
\[
\tfrac{1}{n} \bm Y^\top \bm Y \approx \frac{\norm{\bm u}^2}{n}\beta vv^\top + \tfrac{1}{n} \bm W^\top \bm W\,.
\]
Since $\bm u \sim N(0, \Id)$, $\norm{\bm u}^2 \approx n$. 
So we assume that we are given $\beta vv^\top + \tfrac{1}{n} \bm W^\top \bm W$, and the goal is to recover $v$. 
Similar to the planted clique, if $\beta \gtrsim \sqrt{d/n}$, there is a spectral algorithm for this problem (that computes the top eigenvector of the empirical covariance).

Suppose that $n\ge d$, $k \approx \sqrt{d}$ and $\beta = \eps \sqrt{d/n} \approx \eps k /\sqrt{n}$ for some constant $\eps>0$.
We can look at each subset $T$ of entries of size $t < k$ and try to find a (principal) submatrix of $\tfrac{1}{n} \bm Y^\top \bm Y$ that would be an
analogue of the graph $\bm H(T)$ from the algorithm for the planted clique. 
One option is to say that an entry $i$ is ``adjacent'' to $T$ if the sum of the elements in the $i$-th row of 
$\tfrac{1}{n} \bm Y^\top \bm Y$ is large. 
However, since $v$ has both positive and negative entries, the sum can be small even if here were no noise and $T\subset \supp(v)$. 
Hence we also need to take the signs of the entries of $v$ into account. 

Let $\cS_t$ be the set of all $t$-sparse vectors with entries from $\Set{0,\pm 1}$. 
Let us call $s \in \cS_t$ \emph{correct} if $\supp(s) \subset \supp(v)$ and for all nonzero $s_i$, $\sign(s_i) = \sign(v_i)$. If $s\in\cS_t$ is correct, then
\[
\Abs{\sum_{j \in \supp(s)} \beta v_i v_j s_j} = \beta\Abs{ v_i v^\top s} = \beta t/k\,.
\]

For $s\in \cS$ let us call an entry $i\in [d]$ \emph{adjacent} to $s$ if either $\Abs{\Paren{\tfrac{1}{n} \bm Y^\top \bm Ys}_i} \ge \beta t / (2k)$ or $i\in \supp(s)$, 
and let $\bm H(s)$ be a principal submatrix induced by indices adjacent to $s$. 
The size of $\bm H(s)$ is close to $pd$, 
where $p$ is the probability that $\Abs{\Paren{\tfrac{1}{n} \bm W^\top \bm Ws}_i}$ is greater than $\beta t / (2k)$.
We need to count $i\notin \supp(s)$ adjacent to $s$. 
The vector $\bm Ws$ has distribution $N(0, t\cdot \Id)$, hence $\norm{\bm Ws} \approx \sqrt{tn}$.
Since $i\notin \supp(s)$, the $i$-th row of $\bm W^\top$ is independent of $\bm Ws$, 
and the distribution of $\Paren{\tfrac{1}{n} \bm W^\top \bm Ws}_i$ is close to $N(0, t/n)$. By the tail bound for the Gaussian distribution,
\[
\Pr\Brac{\Abs{\Paren{\tfrac{1}{n} \bm W^\top \bm Ws}_i} \ge x \sqrt{t/n}} \le \exp\Paren{-x^2/2}\,.
\]
In our case, $x = \frac{\beta\sqrt{tn}}{2k}$. Hence for 
	\[
\beta \gtrsim \frac{k}{\sqrt{tn}}\sqrt{\log\Paren{td/k^2}}\,,
\]
we get $x \gtrsim \sqrt{\log\Paren{td/k^2}}$ and $p = \exp\Paren{-x^2/2} \lesssim {k^2}/{(td)}$. 
Therefore, $\bm H(s)$ has $d' \lesssim k^2/t$ entries. 
Moreover, by the same argument, $k' \approx \Paren{1-p}k \ge 0.999k$ entries of $v$ are adjacent to correct $s$, so the signal part of $\bm H(s)$ is 
close to $\beta vv^\top$.
Since for correct $s$ we get $\beta \gtrsim \sqrt{d'/n}$, we can try to use the spectral algorithm to recover the sparse vector from $\bm H(s)$.

Here we see the difference between planted clique and sparse PCA. In the planted clique problem, 
if we take a subset of the clique, we can easily recover the whole clique from the output of the spectral algorithm and we do not need to consider other sets after that.
In sparse PCA, since we do not know $\beta$ exactly, 
it might not be easy to understand if the observed $s$ was correct or not from the output of the spectral algorithm. 

We use the following observation: if we have computed the list $\bm L = \Set{\tilde{\bm v}(s)}$ of the top eigenvectors of $\bm H(s)$ for all $s\in \cS_t$, 
we can compute a vector close to $v$ (or to $-v$) from this list.
Indeed, if we erase all but the largest $O\Paren{k}$ entries (in absolute value) of the vectors from $\bm L$, 
we get a new list $\bm L'$ of $O(k)$-sparse vectors. 
It turns out that for correct $s$ not only $\tilde{\bm v}(s)$, but also the corresponding $O(k)$-sparse vector ${\bm v}'(s)\in \bm L'$ is close to $v$.
Moreover, for all $O(k)$-sparse unit vectors $x$ (in particular, for all vectors in $\bm L'$),
\[
x\Paren{\tfrac{1}{n}\bm W^\top \bm W}x = 1 \pm \tilde{O}\Paren{\sqrt{k/n}}\,.
\]
Hence we can just compute $\hat{\bm v} \in \argmax_{x\in \bm L'} x \Paren{\tfrac{1}{n}\bm Y^\top \bm Y}x$, and it is close to $v$, since
\[
x \Paren{\tfrac{1}{n}\bm Y^\top \bm Y}x \approx 1 + \beta \Iprod{v,x}^2 \pm \tilde{O}\Paren{\sqrt{k/n}}\,,
\]
and for $\beta > \frac{k}{\sqrt{tn}}$, the term $\tilde{O}\Paren{\sqrt{k/n}}$ is smaller than $\beta$ (as long as $t \ll k$).

Note that in the definition of adjacent entries we used $\beta$, which might be unknown. 
But that is not a problem: if we use some value $\beta / 2 < \beta' \le \beta$ instead of $\beta$, the algorithm still works. 
Hence we can use all possible candidates from $n^{-O(1)}$ to $n^{O(1)}$ 
such that one of them differs from $\beta$ by at most factor of $2$,
and in the end work not with the list $\bm L$, 
but with a list of size $O\Paren{\log n} \cdot \card{\bm L}$.

\begin{remark*}[Comparison with the Covariance Thresholding analysis from \cite{DBLP:conf/nips/DeshpandeM14}]\label{remark:covariance-thresholding}
	Our algorithm for $t = 1$ has running time $O(nd^2) + \tilde{O}\Paren{d^3}$. For $n\ge d$, 
	the running time is comparable to the running time of Covariance Thresholding $O(nd^2)$. 
	The guarantees of both algorithms are the same (up to a constant factor).
	One advantage of our algorithm is that it is much easier to analyze. 
	The crucial difference is that in Covariance Thresholding one has to bound the spectral norm of thresholded Wishart matrix, 
	which requires a sophisticated probabilistic argument. In our algorithm, 
	we only need to bound principal submatrices of the Wishart matrix, 
	and such bounds easily follow from concentration of the spectral norm of $\Paren{\tfrac{1}{n}\bm W^\top \bm W - \Id}$ 
	and a union bound argument. 
	Another advantage of our algorithms is that we can get better guarantees than Covariance Thresholding (by increasing $t$ and hence also the running time) 
	and use the same analysis for all $t$.
\end{remark*}

\begin{remark*}[Comparison with the algorithms from \cite{DKWB19}]\label{remark:limited-brute-force}
	The algorithms from \cite{DKWB19} also use vectors $s\in \cS_t$. 
	However, the crucial difference between our approaches is that they work with $s'\in \cS_t$ 
	that maximizes $s \Paren{\tfrac{1}{n}\bm Y^\top \bm Y}s$.
	This approach works only if $\beta \gtrsim \frac{k}{\sqrt{t}}\sqrt{\log d}$, 
	since for smaller $\beta$ the maximizer of $s \Paren{\tfrac{1}{n}\bm Y^\top \bm Y}s$ might be completely unrelated to $v$. 
	For our analysis it is not a problem, since the correct $s$ is only determined in the end from the list $\bm L'$.
\end{remark*}

\paragraph{Adversarial Perturbations}
Similar approach also works in the presence of adversarial perturbations, that is,
if the input is $Y = \sqrt{\beta}\bm u v^\top + \bm W + E$ 
such that the columns of $E$ have norm bounded by $b \ll \beta \sqrt{n/k}$.
This is interesting, since known algorithms for sparse PCA that use thresholding techniques and
are not based on semidefinite programming, like Diagonal Thresholding, Covariance Thresholding, 
or the algorithms from \cite{DKWB19}, do not work in these settings (see \cite{dKNS20} for more details).

As in the non-adversarial case, we can compute the submatrices $H(s)$ that are 
induced by indices adjacent to $s$, that is, indices $i$ such that 
either $\Abs{\Paren{\tfrac{1}{n}  Y^\top  Ys}_i} \ge \beta t / (2k)$ or $i\in \supp(s)$. 
Then, instead of computing the top eigenvector of $H(s)$, we compute 
$\tilde{X}(s) \in \argmax_{X \in \cP_k} \Iprod{X, H(s)}$, where
\[
\cP_k = \Set{X \in \R^{d\times d} \;\suchthat\; X\succeq 0\,, \Tr{X} = 1 \,, \Normo{X}\le k}
\]
is the feasible region of the basic SDP for sparse PCA. 
Then, we can compute the list $\bm L$ of top eigenvectors $\tilde{v}(s)$ of $\tilde{X}(s)$, 
and perform the same procedure as in the non-adversarial case to recover $\hat{v}$ from $\bm L$.

In order to show the correctness, we need to bound all terms of $\frac{1}{n}Y^\top Ys$. 
In adversarial settings cross terms can be large, and the most problematic term is $\frac{1}{n}E^\top \bm Ws$. 
Rows of $E^\top$ do not have large norm, but $\bm Ws$ has large norm $\Norm{\bm Ws} \approx \sqrt{tn}$,
and since $E$ can depend on $\bm W$, the entries of $E^\top \bm Ws$ can be large. 
In particular, for each correct $s$, the adversary can always choose $E$ such that the term 
$\frac{1}{n}E^\top \bm Ws$ is large enough to make $H(s)$ useless for recovering $v$.

To resolve this issue, we work with some probability distribution over the set of correct $s$ 
and show that $\frac{1}{n}E^\top \bm Ws$ has small expectation with respect to this distribution. 
In particular, it implies that for each $E$ there exists some $s'$ such that 
the term $\frac{1}{n}E^\top \bm Ws'$ is small\footnote{More precisely, this term has small norm, and it is enough for our analysis.}. 
For flat $v$, we just divide the support of $v$ into $m = k/t$ blocks of size $t$, and then each block corresponds to some correct $s\in \cS_t$.
Then it is enough to consider uniform distribution $\cU$ over the set $\Set{s_1,\ldots, s_m}$ of such $s$. 
By the concentration of spectral norm of $\bm W$, with high probability
\[
\frac{1}{n^2}\E_{\bm s \sim\cU} \Iprod{E_i, \bm W\bm s}^2 = \frac{1}{n^2m}\sum_{j=1}^m \Iprod{E_i, \bm W s_j}^2 
\le O\Paren{\frac{b^2t}{n^2m}\Paren{m +  n}} 
\le O\Paren{\frac{b^2t^2}{nk}} \ll \frac{\beta^2 t^2}{k^2}\,.
\]
Hence there exists some $s' \in \Set{s_1,\ldots, s_m}$ such that
\[
\Norm{\Paren{\frac{1}{n}E^\top \bm Ws'}_{\supp(v)}}^2 \ll \Norm{\beta vv^\top s'}^2\,.
\]
The other terms of $\frac{1}{n}Y^\top Ys$ can be bounded only assuming that $s'$ is correct 
(so it is not needed to use properties of $\cU$ anymore), 
hence the signal part of $H(s')$ is close to $\beta vv^\top$. 

In addition to $pd$ entries $i\in [d]\setminus \supp(v)$ adjacent to $s'$ that appear due to the term $\frac{1}{n}\bm W^\top \bm Ws$,
there could be some entries adjacent to $s'$ that appear from $\frac{1}{n}\Paren{E^\top Y + Y^\top E}s'$. 
We show that the number of such entries is at most $\eps^2\log(1/\eps) d$, 
where $\eps$ is the same as in \cref{thm:wishart-perturbations-results}.
Assuming our bound\footnote{
	This is the reason why our bound on $E$ 
	is worse than the bound from \cite{dKNS20}. } on the maximal norm of columns of $E$, 
we get $\eps^2\log(1/\eps) d \lesssim \beta n$,
and by standard properties of basic SDP for sparse PCA, the top eigenvector of $\tilde{X}(s')$ is close to $v$. 

To finish the argument, we need to show that we can still compute $\hat{v}$ close to $v$ or $-v$ from $\bm L$ 
even in the presence of perturbations. 
It is not hard since our argument depends only on the upper bound on 
$x \Paren{\frac{1}{n} Y^\top Y - \Id - \beta vv^\top}x$ for all $O(k)$-sparse $x$. 
As was shown in \cite{dKNS20}, our assumption on the maximal norm of columns of $E$ 
is enough to obtain the desired upper bound.

\paragraph{The Wigner model and symmetric noise} In the Wigner model the input is n
 $\bm Y = \lambda vv^\top + \bm W$. 
 The same argument as for the Wishart model works in these settings, 
 and the proof is technically simpler since there are no cross terms 
 and $\bm W$ is easier to analyze than $\frac{1}{n}\bm W^\top \bm W$ 
 that appears in the Wishart model. 
 Our approach for sparse PCA with perturbations also works for Wigner model, and
 the proof  is much easier in this case since the adversary cannot exploit magnitude 
 of the columns of $\bm W$.
 
The Wigner model with symmetric noise is more challenging.
 In these settings we assume that $\lambda$ is known.
 We cannot work with $\bm Ys$, since the noise is unbounded. 
 So we first threshold the entries of $\bm Y$. 
 Concretely, for $h > 0$ and $x\in \R$, let $\tau_{h}\Paren{x}$ be $x$ if $x\in [-h, h]$ and $\sign(x)\cdot h$ otherwise.
We apply this transformation for some $h = \Theta(\lambda / k)$ to the entries of $\bm Y$ and get a new matrix $\bm T$.
Then we use our approach for matrix $\bm T$ and for all $s\in \cS_t$ we compute the submatrices $\bm H(s)$. 
As long as $k \gtrsim d$, 
their algorithm applied to $\bm Y$ outputs a matrix that is close to $\lambda vv^\top$ in polynomial time. 
As in the Gaussian case, the submatrices $\bm H(s)$ have small size, 
and we can apply their result to every $\bm H(s)$ . 
However, since $\bm H(s)$ depends on $\bm Y$, 
the noise part of $\bm H(s)$ might not have the same distribution as  $\bm N$.
Fortunately, the error probability in \cite{dNNS22} is very small, which allows us to use union bound 
and conclude that for correct $s$, the output of the algorithm from \cite{dNNS22} on $\bm H(s)$ is close to $\lambda vv^\top$ . 
Moreover, since we know $\lambda$, 
we do not even need to work with the list of candidates in these settings: 
It is enough to check the norm of the output, and if it is close to $\lambda$, the output is close to $\lambda vv^\top$, 
and we can recover $v$ from it (up to sign).

	\section{The Wishart Model}
\label{sec:Wishart}

\paragraph{Notation} For $m_1,m_2\in \N$, we use the notation $\R^{m_1\times m_2}$ 
for the set of $m_1\times m_2$ matrices with entries from $\R$.
We denote by $N(0,1)^m$ an $m$-dimensional random vector with iid standard Gaussian entries. 
Similarly, we denote by $N(0,1)^{m_1\times m_2}$ an $m_1\times m_2$ random matrix with iid standard Gaussian entries. 
For $m\in \N$, we denote by $[m]$ the set $\Set{1,2,\ldots, m-1, m}$. 
For a vector $v\in \R^m$, we denote by $\norm{v}$ its $\ell_2$ norm, and for $p\in [1,\infty]$ we denote by $\norm{v}_p$ its $\ell_p$ norm. 
For a matrix $M\in \R^{m_1\times m_2}$, we denote by $\Norm{M}$ its spectral norm, and by $\Normf{M}$ its Frobenius norm. We write $\log$ for the logarithm to the base $e$.

$\,$

Recall that the input is an $n\times d$ matrix $\bm Y = \sqrt{\beta}\bm u v^\top + \bm W$, 
where $\bm u \sim N(0,1)^n$, $v\in \R^d$ is a $k$-sparse unit vector, $\bm W\sim N(0,1)^{n\times d}$ independent of $\bm u$. 
The goal is to compute a unit vector $\hat{\bm v}$ such that $\abs{\iprod{\hat{\bm v}, v}} \ge 0.99$ 
with high probability (with respect to the randomness of $\bm u$ and $\bm W$).

	First we define vectors $\bm z_{s}(r)$ that we will use in the algorithm. 
	Let $t\in \N$ be such that $1\le t \le k$ and let $\cS_t$ be the set of all $d$-dimensional vectors whose entries are in $\set{-1,0,1}$ 
	that have exactly $t$ nonzero coordinates.
	For $s\in \cS_t$ and $r > 0$ let  $\bm z_s\Paren{r}$  be the $d$-dimensional (random) vector defined as 
	
	\[
	 \bm z_{si}\Paren{r} =  
	 \begin{cases}
	 \ind{\Paren{\bm Y^\top \bm Y s}_i \ge r\cdot t\cdot n} \quad \text{if } s_i = 0\\
	 1 \qquad\qquad\qquad\; \text{otherwise}
	 \end{cases}
	\]
	
	The following theorem is a restatement of \cref{thm:wishart-classical-results}.
	\begin{theorem}\label{thm:wishart-classical-technical}
		Let $n,d,k,t\in \N$, $\beta > 0$, $0 < \delta < 0.1$. 
		Let $\bm Y = \sqrt{\beta}\bm u v^\top + \bm W$, 
		where $\bm u \sim N(0,1)^n$, $v\in \R^d$ is a $k$-sparse unit vector, $\bm W\sim N(0,1)^{n\times d}$ independent of $\bm u$.
		
Suppose that $n \gtrsim k + \frac{t\ln^2 d}{\delta^4}$,
$k\gtrsim {\frac{t\ln d}{\delta^2}}$ and
\[
\beta \gtrsim \frac{k}{\delta^2\sqrt{tn}}\sqrt{\ln\Paren{2 + \frac{td}{k^2}\cdot \Paren{1 + \frac{d}{n} }}}\,.
\]

	Then there exists an algorithm that, given $\bm Y$, $k$, $t$ and $\delta$, 
	in time $n \cdot d^{O(t)}$ outputs a unit vector $\hat{\bm v}$ such that
with probability $1-o(1)$ as $d\to \infty$,
\[
\abs{\iprod{\hat{\bm v}, v}} \ge 1 - \delta\,.
\]
	\end{theorem}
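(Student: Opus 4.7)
The plan is to implement the algorithmic idea sketched in \cref{sec:techniques}. For every candidate $s \in \cS_t$ and for every guess $\beta' \in \set{2^{-j} : j = -O(\log d),\dots, O(\log d)}$, form the principal submatrix $\bm H(s,\beta')$ of $\tfrac{1}{n}\bm Y^\top \bm Y$ indexed by the coordinates $\set{i : \bm z_{si}(\beta'/2) = 1}$, compute its top eigenvector $\tilde{\bm v}(s,\beta')$, truncate it to its top $O(k/\delta^2)$ coordinates in magnitude to obtain ${\bm v}'(s,\beta')$, renormalize, and put the result in a list $\bm L'$. Finally output $\hat{\bm v} \in \argmax_{x \in \bm L'} x^\top (\tfrac{1}{n}\bm Y^\top \bm Y) x$. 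Since $|\cS_t|\le \binom{d}{t} 2^t = d^{O(t)}$ and for each $s$ the work is $\tilde O(nd + d'^3)$ with $d' \ll d$, the total runtime is $n\cdot d^{O(t)}$ as required.

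\paragraph{Analysis for a correct $s$.}
The crux is to show that there exists at least one ``correct'' $s^* \in \cS_t$, meaning $\supp(s^*) \subset \supp(v)$ with $\sign(s^*_i)=\sign(v_i)$, such that (i) $s^*$ is (approximately) aligned with the largest entries of $v$ (so that $|v^\top s^*| \gtrsim t/\sqrt{k}$), and (ii) the corresponding submatrix $\bm H(s^*,\beta')$ behaves like a small-dimensional spiked covariance with spike $v$. Writing $\tfrac{1}{n}\bm Y^\top \bm Y s = \tfrac{\norm{\bm u}^2}{n}\beta (v^\top s) v + \sqrt{\beta}\tfrac{1}{n}(\bm u^\top \bm W s) v + \sqrt{\beta}\tfrac{v^\top s}{n}\bm W^\top \bm u + \tfrac{1}{n}\bm W^\top \bm W s$, the signal coordinates $i\in\supp(v)$ obey $|(\tfrac{1}{n}\bm Y^\top \bm Y s)_i| \gtrsim \beta|v_i||v^\top s^*|$, which for our parameter regime dominates the threshold $r\cdot t$ for all but $\delta k$ of them. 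For the spurious coordinates $i\notin\supp(v)\cup\supp(s^*)$, the $i$-th row of $\bm W$ is independent of $\bm Ws^*$, so $(\tfrac{1}{n}\bm W^\top \bm W s^*)_i$ is, conditionally, a one-dimensional Gaussian with variance $\approx t/n$, and a Gaussian tail bound combined with a union bound gives that at most $p\cdot d$ of them pass the threshold, where
\[
p \;\lesssim\; \exp\Paren{-\Omega(\beta^2 t n / k^2)} \;\lesssim\; \frac{k^2}{td}\Paren{1+\tfrac{d}{n}}^{-1}
\]
under the hypothesis on $\beta$. Thus $\bm H(s^*,\beta')$ has dimension $d' \lesssim k + pd \lesssim k^2/t$, and its signal part is a rank-one spike $\approx \beta \bar v \bar v^\top$ (where $\bar v$ is $v$ restricted to the surviving coordinates, with $\|\bar v\|^2 \ge 1-\delta$).

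\paragraph{Spectral step and list selection.}
On this reduced instance the effective signal-to-noise ratio is $\beta/\sqrt{d'/n} \gtrsim 1/\delta$ after the parameter translation, so standard Davis--Kahan / Wedin bounds on $\bm H(s^*,\beta')$ give that its top eigenvector $\tilde{\bm v}(s^*,\beta')$ satisfies $|\langle \tilde{\bm v}(s^*,\beta'), v\rangle| \ge 1 - O(\delta)$. Truncating to the top $O(k/\delta^2)$ coordinates preserves this correlation (because $v$ is $k$-sparse and the truncation error is $O(\delta)$ in $\ell_2$). Then the list $\bm L'$ consists of $O(k/\delta^2)$-sparse unit vectors, and by a standard union bound over sparse subsets, for every such $x$ simultaneously
\[
\Abs{x^\top \Paren{\tfrac{1}{n}\bm W^\top \bm W - \Id} x} \;\lesssim\; \sqrt{\tfrac{k\ln d}{\delta^2 n}} \;\ll\; \beta,
\]
and similarly the cross terms with $\sqrt{\beta}\bm u v^\top$ are $O(\sqrt{\beta/n})\cdot |\iprod{v,x}|$. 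Hence $x^\top (\tfrac{1}{n}\bm Y^\top \bm Y) x \approx 1 + \beta\iprod{v,x}^2$ uniformly over $\bm L'$, so the argmax $\hat{\bm v}$ satisfies $|\iprod{\hat{\bm v},v}|\ge 1-\delta$.

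\paragraph{Handling non-flat $v$ and unknown $\beta$.}
The informal sketch assumed $v$ is flat. For general $v$, I would replace the single guess by the family indexed by $s^*$ chosen to be the signs of the $t$ largest-magnitude coordinates of $v$, so that $|v^\top s^*|\ge \sqrt{t/k}$ still holds (up to a constant loss absorbed into $\delta$); the adjacency test with $r = c\beta'$ is essentially scale-invariant in $\beta'$ because a factor-$2$ misestimation only changes the exponent in $p$ by a constant. Enumerating $O(\log d)$ values of $\beta'$ guarantees one of them is within a factor of $2$ of $\beta$, at no asymptotic cost. The main obstacle is the uniform concentration step: we need the tail bound on $(\tfrac{1}{n}\bm W^\top \bm W s)_i$ to hold for all $s\in \cS_t$ and all $i\notin\supp(s)$ simultaneously, which requires a careful application of Gaussian hypercontractivity (or Hanson--Wright) together with a union bound over $|\cS_t|\cdot d = d^{O(t)}$ events; the hypothesis $k \gtrsim t\ln d/\delta^2$ is exactly what is needed to absorb this union bound into the exponent of $p$. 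Once this uniform tail estimate and the uniform sparse quadratic-form bound on $\tfrac{1}{n}\bm W^\top \bm W-\Id$ are in place, the three steps above compose to yield the theorem.
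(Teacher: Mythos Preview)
Your architecture matches the paper's --- enumerate $\cS_t$, form adjacency submatrices, take top eigenvectors, sparsify, select by the quadratic form --- but the gap is not where you locate it. Nothing uniform over $s\in\cS_t$ is needed: the analysis is carried out only for the single deterministic $s^*\in\argmax_s\langle s,v\rangle$, and the list-selection step (the sparse quadratic-form bound over $O(k/\delta^2)$-sparse directions) disposes of whatever the incorrect $s$ produce. If you actually pursued the $d^{O(t)}$ entrywise union bound you describe, you would be forced into $nr^2\gtrsim\ln d$, hence $\beta\gtrsim k\sqrt{\ln d/n}$, which is the Diagonal Thresholding guarantee and loses exactly the $\sqrt t$ gain that is the content of the theorem in the regime $k\approx\sqrt d$.

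The step you wave through as ``standard Davis--Kahan with SNR $\beta/\sqrt{d'/n}$'' is where the real work lies: the index set of $\bm H(s^*,\beta')$ is an \emph{adaptive} function of $\bm W$, so its noise block is not a fresh $d'$-column Wishart and its spectral norm is not $O(\sqrt{d'/n})$. The paper handles this by a union bound over all $\binom{d}{m}$ principal submatrices of size $m \le O(pd+k)$ (not over $\cS_t$), which costs a $\sqrt{\ln(ed/m)}$ factor and gives a noise bound of order $\sqrt{(1+\beta)\,m\ln(ed/m)/n}+m\ln(ed/m)/n$. With $m\asymp k^2/t$ this is $\asymp (k/\sqrt{tn})\sqrt{(1+\beta)\ln(td/k^2)}$, which is exactly $\delta^2\beta$ under the hypothesis on $\beta$; the log coming from adaptivity is precisely what produces the $\sqrt{\ln(2+\tfrac{td}{k^2}(1+\tfrac d n))}$ in the statement. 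Verifying this inequality (together with the cross-term and $pd$ contributions) is where $k\gtrsim t\ln d/\delta^2$ and $n\gtrsim t\ln^2 d/\delta^4$ are actually consumed. (Two minor slips: the threshold should be $r\asymp\delta\beta/k$, not $r=\beta'/2$, else the signal coordinates themselves fail the test; and $|v^\top s^*|\ge\sqrt{\delta}\,t/\sqrt k$, not $\sqrt{t/k}$.)
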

	
	\begin{lemma}\label{lem:wishart-erasing-probability}
		For all $s\in  \cS_t$ such that $s_i = 0$ and for all $\tau > 0$,
		\[
		\Pr\Brac{\Paren{\bm W^\top \bm Y s}_i \ge 
			\tau\cdot\Paren{\Norm{\bm  Ws} + \abs{\iprod{v,s}}\cdot\sqrt{\beta} \cdot \Norm{\bm u}} \;\given\; \Norm{\bm u}, \Norm{ \bm Ws}}
		\le \exp\Paren{-\tau^2 / 2}\,.
		\]
		\begin{proof}
			Denote $\bm g = \bm Ws \sim N\Paren{0, t\cdot \Id}$.
			\[
			\Paren{\bm W^\top \bm Y s}_i 
			= \Iprod{\bm W_i,\bm g} +  
			\sqrt{\beta}\cdot \Iprod{v,s}\cdot \Iprod{\bm W_i,\bm u}\,.
			\]
			Since $\bm W_i$, $\bm g$ and $\bm u$ are independent, 
			conditional distribution of $\Paren{\bm W^\top \bm Y s}_i$ 
		    given $\Norm{\bm u}$ and $\Norm{\bm g}$ is 
		    $N\Paren{0, \Norm{\bm g + \sqrt{\beta}\cdot  \Abs{\Iprod{v,s}} \cdot \bm u}^2}$.
		    The lemma follows from the triangle inequality and the tail bound for Gaussian distribution (\cref{fact:gaussian-tail-bound}). 
		\end{proof}
	\end{lemma}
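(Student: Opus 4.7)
The plan is direct: expand $\Paren{\bm W^\top \bm Y s}_i$ using the definition of $\bm Y$, exploit the independence that $s_i=0$ gives us, reduce to a one-dimensional centered Gaussian whose variance can be controlled by the triangle inequality, and then invoke the standard Gaussian tail bound \cref{fact:gaussian-tail-bound}.

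First I would substitute $\bm Y = \sqrt{\beta}\bm u v^\top + \bm W$ into the expression to get
\[
\Paren{\bm W^\top \bm Y s}_i \;=\; \sqrt{\beta}\cdot \Iprod{v,s}\cdot \Iprod{\bm W_i,\bm u} \;+\; \Iprod{\bm W_i,\bm W s}\,,
\]
where $\bm W_i$ denotes the $i$-th column of $\bm W$. Writing $\bm g := \bm W s$ (as in the proof sketch), this becomes a single inner product $\Iprod{\bm W_i,\, \bm g + \sqrt{\beta}\,\iprod{v,s}\,\bm u}$.

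The key observation is the independence structure coming from the hypothesis $s_i = 0$: the vector $\bm g = \bm W s$ is a linear combination of the columns $\bm W_j$ with $j\in\supp(s)$, so it is independent of $\bm W_i$; and $\bm W_i$ is also independent of $\bm u$ by assumption. Hence conditional on the pair $(\bm u, \bm g)$, the random variable $\Paren{\bm W^\top \bm Y s}_i$ is a centered one-dimensional Gaussian with variance $\Norm{\bm g + \sqrt{\beta}\,\iprod{v,s}\,\bm u}^2$. By the triangle inequality this variance is bounded by $\bigl(\Norm{\bm g} + \abs{\iprod{v,s}}\sqrt{\beta}\,\Norm{\bm u}\bigr)^2$, so \cref{fact:gaussian-tail-bound} gives
\[
\Pr\Brac{\Paren{\bm W^\top \bm Y s}_i \;\ge\; \tau\bigl(\Norm{\bm W s}+\abs{\iprod{v,s}}\sqrt{\beta}\,\Norm{\bm u}\bigr)\;\given\;\bm u,\bm g} \;\le\; \exp(-\tau^2/2)\,,
\]
uniformly in $\bm u$ and $\bm g$.

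Since the right-hand side does not depend on $\bm u$ or $\bm g$ at all, taking a further conditional expectation with respect to $(\bm u,\bm g)$ given only the pair of norms $(\Norm{\bm u}, \Norm{\bm W s})$ preserves the bound and yields the statement of the lemma. There is no real obstacle: the argument is essentially a one-dimensional Gaussian tail bound, made visible by isolating the $i$-th column of $\bm W$.
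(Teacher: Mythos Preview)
Your proposal is correct and follows essentially the same approach as the paper: expand $(\bm W^\top \bm Ys)_i$ as $\Iprod{\bm W_i,\bm g+\sqrt{\beta}\,\iprod{v,s}\,\bm u}$, use the independence of $\bm W_i$ from $(\bm g,\bm u)$ afforded by $s_i=0$, bound the resulting Gaussian's standard deviation via the triangle inequality, and apply \cref{fact:gaussian-tail-bound}. If anything, your conditioning argument (first on $(\bm u,\bm g)$, then averaging down to the norms) is slightly more explicit than the paper's phrasing.
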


	\begin{lemma}\label{lem:wishart-signal-preserved}
	Let $0 < \delta < 0.1$, $r \le  \frac{\beta \delta }{100k}$ 
	and suppose that $n\gtrsim t + \ln^2 d$, $\beta \gtrsim \frac{k}{\delta^2\sqrt{tn}}$ and
	$k \gtrsim \frac{\ln d}{\delta^2}$. 
	
	Let $s^* \in \argmax_{s\in \cS_t}\iprod{s, v}$.
	Then,\footnote{Here and further we denote by $a\circ b$ the entrywise product of vectors $a, b \in \R^d$.}
	\[
	\Norm{v\circ \bm z_{s^*}\Paren{r}}^2 = \iprod{v\circ \bm z_{s^*}\Paren{r}, v} \ge 1 - \delta\,,
	\]
with probability $1-d^{-10}$.
	\begin{proof}
		To simplify the notation we write $\bm z_{si}$  instead of $\bm z_{si}\Paren{r}$.
		Let $\bm g = \bm Ws^*$. 		 
		By \cref{fact:chi-squared-tail-bounds} with probability at least $1-\exp\paren{-n / 10}$,
		$n/2 \le \Norm{\bm u}^2\le 2{n}$ and $t n/2 \le \Norm{\bm g}^2\le  2tn$, 
		and in this proof we assume that these bounds on $\norm{\bm u}$ and $\norm{\bm g}$ are satisfied.
%
	
		 Let $T = \supp\paren{s^*}$. 
		 Note that it is the set of $t$ largest entries of $v$ (by absolute value).
Since $v$ is unit, there are two possibilities: either 
$\Norm{v_T}^2 > 1-\delta$, or $\Norm{v_T}^2 \le 1-\delta$. 
If $\Norm{v_T}^2 > 1-\delta$, then the statement is true since $\Norm{v\circ \bm z_{s^*}}^2 \ge \Norm{v_T}^2$.
If $\Norm{v_T}^2 \le 1-\delta$, 
then for every $j\in T$, $\abs{v_j} \ge {\sqrt{\delta}}/{\sqrt{k}}$. Hence
\[
\iprod{s^*, v} = \Normo{v_T} \ge \sqrt{\delta}\cdot  t/\sqrt{k}\,.
\]

		 Let $\cL = \Set{i \in [n] \suchthat \abs{v_i} \ge \frac{\sqrt{\delta}}{10\sqrt{k}}}$.
		  For all $i\in \cL$, $\beta \norm{\bm u}^2  \Iprod{v,s^*} \abs{v_i}\ge 10rtn$. Note that $\norm{v_\cL}^2 \ge 1 - \delta / 10$.

		 Let us write $\bm Y^\top \bm Ys^*$ as follows
		 \[
		 \bm Y^\top \bm Ys^*
		 = \beta \norm{\bm u}^2 \Iprod{v,s^*}  v+ \bm W ^\top\bm Ws^*
		 + \sqrt{\beta} v \bm u^\top \bm Ws^* +  \sqrt{\beta} \Iprod{v,s^*}  \bm W^\top \bm u \,.
		 \]
		 We will bound each term separately
		 
		 Consider the term $\bm W^\top \bm Ws^* = \bm W^\top\bm g$. By the Chi-squared tail bound (\cref{fact:chi-squared-tail-bounds}), 
		 with probability at least $1-\exp\paren{-\tau/2}$, 
		 \[
		 \Norm{\Paren{\bm W^\top\bm g}_{\cL\setminus\supp(s^*)}} \le 2\sqrt{tn\cdot \Paren{\Card{\cL} + \tau}}\,.
		 \]
		 Since $\Card{\cL} \le k$, with probability at least $1-\exp\paren{-k}$,
		 \[
		 \Norm{\Paren{\bm W^\top\bm g}_{\cL\setminus\supp(s^*)}} \le 10\sqrt{ktn} 
		 \le  \frac{\delta}{100}\Norm{\beta \norm{\bm u}^2 \iprod{v,s^*} \cdot v}\,,
		 \]
		 where we used $\beta \gtrsim \frac{k}{\delta^2\sqrt{nt}}$.
		 
		 Consider the term $\sqrt{\beta} \Iprod{v,s^*}  \bm W^\top \bm u$. 
		 Since $\bm W$ and  $\bm u$ are independent, by \cref{fact:chi-squared-tail-bounds} with probability at least $1-\exp(-k/10)$,
		 \[
		 \Norm{\Paren{\sqrt{\beta} \Iprod{v,s^*}  \bm W^\top \bm u}_{\cL}} \le 2\sqrt{\beta}\norm{\bm u} \sqrt{k} \cdot \iprod{v,s^*} 
		 \le \frac{\delta}{100}\Norm{\beta \norm{\bm u}^2 \iprod{v,s^*} \cdot v}\,,
		 \]
		 where we used $\beta n \gtrsim \frac{k\sqrt{n}}{\delta^2\sqrt{t}}$ and the fact that $n \ge t$.
		 
		 Consider the term $\sqrt{\beta} v \bm u^\top \bm Ws^*$. With probability at least $1-\exp(-0.1\sqrt{nt})$,
		 \[
		 \Norm{{\sqrt{\beta} v \bm u^\top \bm Ws^*}} 
		 \le 2\sqrt{\beta} \cdot \sqrt{t} \cdot \norm{\bm u}\cdot \paren{nt}^{1/4} \le  
		 \frac{\delta}{100}\Norm{\beta \norm{\bm u}^2 \iprod{v,s^*} \cdot v}\,,
		 \]
		 where we used the fact that the distribution of $\bm u ^\top\bm Ws^*$ given $\norm{\bm u}$ is $N(0, t\cdot \norm{\bm u}^2)$.
		 
		 By \cref{lem:small-perturbation-thresholding}, 
		 \[
		 \norm{v_{\cL} \circ \bm z_{s^*}}^2 \ge \Paren{1-\delta/10} \norm{v_\cL}^2 \ge 1 - \delta\,.
		 \]
	\end{proof}
\end{lemma}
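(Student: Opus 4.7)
The plan is to decompose $\bm Y^\top \bm Y s^*$ into a dominant signal term aligned with $v$ plus three noise contributions, argue that on coordinates where $\abs{v_i}$ is not too small the signal beats both the threshold $rtn$ and the aggregated noise, and conclude that $\bm z_{s^*}(r)$ preserves almost all of the $\ell_2$-mass of $v$. Explicitly,
\[
\bm Y^\top \bm Y s^* = \beta \norm{\bm u}^2 \iprod{v, s^*}\, v + \bm W^\top \bm W s^* + \sqrt{\beta}\, v\, (\bm u^\top \bm W s^*) + \sqrt{\beta}\, \iprod{v, s^*}\, \bm W^\top \bm u.
\]

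The first step is to pin down $\iprod{v, s^*}$. Because $s^*$ maximizes $\iprod{v, s}$ over $\cS_t$, its support $T$ consists of the top-$t$ coordinates of $v$ in magnitude and $\iprod{v, s^*} = \normo{v_T}$. Either $\norm{v_T}^2 \ge 1 - \delta$, in which case the conclusion is immediate since $\bm z_{s^*, i}(r) = 1$ for all $i \in T$, or, by the $k$-sparsity of $v$ and the fact that $T$ picks the top-$t$ coordinates, each $j \in T$ has $\abs{v_j} \ge \sqrt{\delta/k}$ and so $\iprod{v, s^*} \ge t\sqrt{\delta/k}$. I then define $\cL = \set{i \,:\, \abs{v_i} \ge \sqrt{\delta}/(10\sqrt{k})}$, which satisfies $\norm{v_\cL}^2 \ge 1 - \delta/10$, and condition on the high-probability events $\norm{\bm u}^2 = \Theta(n)$ and $\norm{\bm W s^*}^2 = \Theta(tn)$ from standard $\chi^2$ concentration. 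For every $i \in \cL$ the signal entry $\beta \norm{\bm u}^2 \iprod{v, s^*} v_i$ then has magnitude at least $\Omega(\beta n \delta t / k)$, which under the assumption $r \le \beta \delta / (100 k)$ exceeds $10\, r t n$ with room to spare.

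The main work is bounding the three noise terms on $\cL \setminus \supp(s^*)$ by a small fraction of the signal norm $\Norm{\beta \norm{\bm u}^2 \iprod{v, s^*}\, v} = \Theta(\beta n \iprod{v, s^*})$. Setting $\bm g = \bm W s^*$ and conditioning on $\bm g$, the coordinates of $\bm W^\top \bm g$ outside $\supp(s^*)$ are independent centered Gaussians of variance $\norm{\bm g}^2 = \Theta(tn)$, so a $\chi^2$ bound over the (at most $k$) indices of $\cL$ gives $\ell_2$-norm $O(\sqrt{k t n})$. Analogously, conditioning on $\bm u$ yields $\Norm{(\bm W^\top \bm u)_\cL} = O(\sqrt{k n})$, bounding the third term by $O(\sqrt{\beta k n}\, \iprod{v, s^*})$. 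The rank-one term $\sqrt{\beta}\, v\, (\bm u^\top \bm W s^*)$ has a scalar factor of Gaussian size $\sqrt{tn}\, \norm{\bm u}$, hence $\ell_2$-norm $O(\sqrt{\beta}\, n \sqrt{t})$. Each of these is at most a $\delta/100$ fraction of the signal norm under the hypotheses $\beta \gtrsim k/(\delta^2 \sqrt{tn})$ and $k \gtrsim \ln d / \delta^2$, with the latter absorbing the union-bound cost and the $d^{-10}$ failure probability.

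To finish, I invoke a simple thresholding principle: if an observed vector equals a reference $c \cdot v$ with $c > 0$ plus a perturbation whose $\ell_2$-norm on $\cL$ is at most $\eta \cdot \Norm{c\, v_\cL}$, then the coordinates of $\cL$ where the observation falls below a threshold of order $c \min_{i \in \cL} \abs{v_i}$ account for at most $O(\eta^2) \norm{v_\cL}^2$ of mass, so $\norm{v \circ \bm z_{s^*}(r)}^2 \ge (1 - \delta/10) \norm{v_\cL}^2 \ge 1 - \delta$. The main obstacle I anticipate is the rank-one noise term $\sqrt{\beta}\, v\, (\bm u^\top \bm W s^*)$: it is parallel to $v$ and can be as large in norm as the signal itself, so one must show it only rescales the signal rather than cancelling it. The quantitative lower bound $\iprod{v, s^*} \ge t\sqrt{\delta/k}$ from step two is precisely what prevents the rescaling from driving any coordinate in $\cL$ below threshold.
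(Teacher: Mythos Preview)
Your proposal is correct and follows the paper's proof essentially step for step: the same decomposition of $\bm Y^\top \bm Y s^*$, the same case split on $\norm{v_T}^2$ giving $\iprod{v,s^*} \ge t\sqrt{\delta/k}$, the same set $\cL$, the same three noise bounds via $\chi^2$ concentration, and the same small-perturbation thresholding lemma to finish. The only wrinkle is your stated size $O(\sqrt{\beta}\,n\sqrt{t})$ for the term $\sqrt{\beta}\,v\,(\bm u^\top \bm W s^*)$, which double-counts $\norm{\bm u}$ (the scalar $\bm u^\top \bm W s^*$ has standard deviation $\sqrt{t}\,\norm{\bm u}\approx\sqrt{tn}$, so the correct order is $\sqrt{\beta t n}$ times a sub-polynomial deviation factor); but your own closing observation that this term is parallel to $v$ and merely rescales the signal already neutralizes it.
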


\begin{lemma}\label{lem:wishart-number-of-nonzero-entries}
	Let $r > 0$ and $s\in \cS_t$. 
	With probability at least $1-\delta$, number of entries $i$ such that
	\[
	\Abs{\Paren{\bm W^T Y s}_i} \ge t\cdot r \cdot n
	\]
	is bounded by $pd + 2\sqrt{pd \ln \Paren{1/\delta}}$, where $p = \exp\Paren{-tnr^2/2}$.
\end{lemma}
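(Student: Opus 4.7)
The plan is to reduce the count to a sum of (conditionally) independent Bernoullis and apply a standard Chernoff--Bernstein bound. First, I decompose
\[
\bm W^\top \bm Y s \;=\; \bm W^\top \bm W s \;+\; \sqrt{\beta}\,\iprod{v,s}\, \bm W^\top \bm u,
\]
so that each coordinate has the form $(\bm W^\top \bm Y s)_i = \iprod{\bm W_{\cdot i},\, \bm W s + \sqrt{\beta}\iprod{v,s}\bm u}$, where $\bm W_{\cdot i}$ denotes the $i$-th column of $\bm W$. For every $i \in [d]\setminus \supp(s)$ we have $s_i = 0$, so the column $\bm W_{\cdot i}$ is jointly independent of $\bm W s$ and of $\bm u$. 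Consequently, conditional on the $\sigma$-algebra $\cF$ generated by $\bm u$ together with the columns $\{\bm W_{\cdot j} : j \in \supp(s)\}$, the coordinates $\{(\bm W^\top \bm Y s)_i\}_{i \notin \supp(s)}$ are mutually independent centered Gaussians, each with common variance $\sigma^2 \defeq \Norm{\bm W s + \sqrt{\beta}\iprod{v,s}\bm u}^2$.

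The next step will be to control $\sigma^2$. I would introduce the typical event $\cE$ on which $\Norm{\bm W s}^2 \le tn$ and $\sqrt{\beta}\abs{\iprod{v,s}}\Norm{\bm u}$ is negligible compared with $\sqrt{tn}$; this holds with probability $1-d^{-\omega(1)}$ via the $\chi^2$ tail bound \cref{fact:chi-squared-tail-bounds}, exactly as in the proof of \cref{lem:wishart-signal-preserved}. On $\cE$ we have $\sigma^2 \le tn$, and so \cref{lem:wishart-erasing-probability} (the Gaussian tail bound) yields, for each $i \notin \supp(s)$,
\[
\Pr\Brac{\abs{(\bm W^\top \bm Y s)_i} \ge trn \given \cF} \;\le\; p \;=\; \exp\Paren{-tnr^2/2}.
\]
Writing $\bm X_i \defeq \mathbf{1}\brac{\abs{(\bm W^\top \bm Y s)_i} \ge trn}$, the variables $\{\bm X_i\}_{i \notin \supp(s)}$ are conditionally independent Bernoullis with conditional mean at most $p$, and the $t$ entries in $\supp(s)$ contribute at most $t$ to the count (which is absorbed).

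Finally, I apply a one-sided Chernoff/Bernstein bound to $\bm N = \sum_{i \notin \supp(s)} \bm X_i$: conditional on $\cF$ and on the event $\cE$, $\bm N$ is stochastically dominated by $\mathrm{Bin}(d,p)$, hence
\[
\Pr\Brac{\bm N \ge pd + \lambda \given \cF} \;\le\; \exp\Paren{-\frac{\lambda^2}{2pd + 2\lambda/3}},
\]
and the choice $\lambda = 2\sqrt{pd\ln(1/\delta)}$ drives the right-hand side below $\delta$. Combining with the (much smaller) failure probability of $\cE$ via a union bound delivers the claimed bound $\bm N \le pd + 2\sqrt{pd\ln(1/\delta)}$ with probability at least $1-\delta$.

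The only delicate step is the verification that the conditional variance $\sigma^2$ is at most $tn$, i.e., that the cross term $\sqrt{\beta}\iprod{v,s}\bm u$ does not inflate the variance beyond $\sqrt{tn}$; this is what allows the per-coordinate probability to match the stated $p = \exp(-tnr^2/2)$. Once that is handled, everything else is a standard independence-and-Chernoff argument.
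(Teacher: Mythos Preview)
Your approach is the paper's: the paper's two-line proof just invokes \cref{lem:wishart-erasing-probability} for the per-coordinate tail and then Chernoff (\cref{fact:chernoff}) for the sum, and you have spelled out exactly the conditioning (on $\bm u$ and the columns $\{\bm W_{\cdot j}:j\in\supp(s)\}$) that makes those two steps fit together.

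One concrete slip to flag. You write that the event $\{\Norm{\bm Ws}^2\le tn\}$ holds with probability $1-d^{-\omega(1)}$, but since $\bm Ws\sim N(0,t\cdot\Id_n)$ we have $\E\Norm{\bm Ws}^2=tn$, so that event has probability only about $1/2$. What the $\chi^2$ tail (\cref{fact:chi-squared-tail-bounds}) actually delivers is $\Norm{\bm Ws}^2\le 2tn$ with overwhelming probability, which yields a per-coordinate bound of the form $\exp(-c\,tnr^2)$ with $c<1/2$ rather than exactly $p=\exp(-tnr^2/2)$. Similarly, for a general $s\in\cS_t$ the term $\sqrt{\beta}\abs{\iprod{v,s}}\Norm{\bm u}$ need not be negligible next to $\sqrt{tn}$ (take $\supp(s)\subset\supp(v)$ and $\beta\ge 1$), which again only perturbs the constant in the exponent. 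The paper's proof glosses over the same point; in all downstream uses (\cref{lem:wishart-spectral-bound}, \cref{lem:spectral-bound-technical}) only the form $p=\exp(-\Theta(tnr^2))$ matters, so this is harmless once you adjust the constant.
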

\begin{proof}
	By \cref{lem:wishart-erasing-probability} and Chernoff bound \cref{fact:chernoff},
	number of such entries is at most $pd + \tau d$ 
	with probability at least $1-\exp\Paren{-\frac{\tau^2 d}{2p}}$. With $\tau = 2\sqrt{\ln\Paren{1/\delta}p/d}$ we get the desired bound.
\end{proof}

\begin{lemma}\label{lem:wishart-spectral-bound}
	For $s\in \cS_t$, let ${\bm N}(s)=\Paren{\bm Y^\top \bm Y - n\cdot \Id - \beta \Norm{\bm u}^2 v^\top v}\circ \Paren{\bm z_{s}\Paren{r}\bm z_{s}^\top\Paren{r}}$ and let $p = \exp\Paren{-tnr^2/2}$.
	Suppose that $k\ge \ln d$.
	
	Then for each $s\in \cS_t$, with probability $1-2d^{-10}$,
	\[
	\Norm{{\bm N}(s)} \le 10 \sqrt{\Paren{n+\beta n}\cdot \Paren{pd + k} \ln\Paren{\frac{ed}{pd + k}}}
	+ 10 \Paren{pd + k} \ln\Paren{\frac{ed}{pd + k}}
	\]
\end{lemma}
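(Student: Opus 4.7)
The plan is to decompose $\bm Y^\top\bm Y - n\Id - \beta\Norm{\bm u}^2 vv^\top = (\bm W^\top\bm W - n\Id) + \sqrt{\beta}(v\bm u^\top\bm W + \bm W^\top\bm u v^\top)$, observe that $\bm N(s)$ vanishes outside the random index set $\bm S = \supp(\bm z_s(r))$, and bound each piece restricted to $\bm S\times\bm S$. The two main ingredients are (i) a high-probability bound on $\Card{\bm S}$, and (ii) uniform spectral-norm control over all subsets of $[d]$ of that size, handled by union bound. The dependence of $\bm S$ on $\bm W$ (and $\bm u$) is the main obstacle; this is why a union bound is unavoidable and why the bound picks up the $\ln(ed/m)$ factor.

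For (i), invoke \cref{lem:wishart-number-of-nonzero-entries} with $\delta = d^{-10}$: the number of $i\notin\supp(s)$ with $\Abs{(\bm W^\top \bm Y s)_i}\ge trn$ is at most $pd + 2\sqrt{10\,pd\ln d}$. By AM-GM and $k\ge \ln d$, this is $O(pd + k)$. Adding the $t$ indices of $\supp(s)$ (for which $\bm z_{si}(r)=1$ by definition) and the at most $k$ indices of $\supp(v)$ (which are the only additional ones where $(\bm Y^\top\bm Y s)_i$ can differ from $(\bm W^\top\bm Y s)_i$), we conclude $\Card{\bm S}\le m$ for some $m = O(pd+k)$ with probability at least $1-d^{-10}$.

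For (ii), fix $S\subseteq [d]$ with $\Card{S}=m$. By standard Wishart (Davidson--Szarek) concentration, $\Norm{(\bm W^\top\bm W - n\Id)_{S\times S}}\lesssim \sqrt{n(m+\tau)} + (m+\tau)$ with probability $1-2e^{-\tau}$. The cross term restricted to $S\times S$ equals $b a^\top + a b^\top$ with $a = (\bm W^\top\bm u)_S$ and $b = v_S$, hence has rank at most $2$ and spectral norm at most $2\Norm{v_S}\Norm{(\bm W^\top\bm u)_S} \le 2\Norm{(\bm W^\top\bm u)_S}$. Conditional on $\bm u$, $(\bm W^\top\bm u)_S\sim N(0,\Norm{\bm u}^2)^m$, so by a $\chi^2$ tail bound and $\Norm{\bm u}^2\lesssim n$ (which holds with probability $1-e^{-\Omega(n)}$), $\Norm{(\bm W^\top\bm u)_S}\lesssim \sqrt{n(m+\tau)}$ with probability $1-e^{-\tau}$.

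Choose $\tau = C'\paren{m\ln(ed/m) + \ln d}$ with $C'$ large enough that $\binom{d}{m}e^{-\tau}\le d^{-10}$, which is possible since $\binom{d}{m}\le (ed/m)^m$. Then a union bound over all $\binom{d}{m}$ size-$m$ subsets ensures that both estimates hold for every such $S$, and in particular for $\bm S$. Since $m\ge k\ge \ln d$, one has $\tau\asymp m\ln(ed/m)$, so combining the bounds gives
\[
\Norm{\bm N(s)}\lesssim \sqrt{nm\ln(ed/m)} + m\ln(ed/m) + \sqrt{\beta nm\ln(ed/m)}
\lesssim \sqrt{(n+\beta n)\,m\ln(ed/m)} + m\ln(ed/m).
\]
Substituting $m\asymp pd+k$ (which is justified by step (i)) and adjusting absolute constants gives exactly the stated inequality, and collecting the failure probabilities of the events above yields the total failure bound $2d^{-10}$.
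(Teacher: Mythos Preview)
Your proof is correct and follows essentially the same route as the paper: first bound the support size of $\bm z_s(r)$ via \cref{lem:wishart-number-of-nonzero-entries} together with the observation that for $i\notin\supp(v)\cup\supp(s)$ one has $(\bm Y^\top\bm Y s)_i=(\bm W^\top\bm Y s)_i$, and then control the spectral norm uniformly over all index sets of that size by decomposing into the Wishart piece $\bm W^\top\bm W-n\Id$ and the rank-two cross term $\sqrt\beta(v\bm u^\top\bm W+\bm W^\top\bm u v^\top)$, applying standard tail bounds to each on a fixed subset, and union-bounding over the $\binom{d}{m}$ subsets. The paper packages the second step as the separate \cref{lem:spectral-bound} (together with \cref{lem:spectral-norm-cross} for the cross term), but the content is identical to what you wrote inline.
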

\begin{proof}
		To simplify the notation we write $\bm z_{si}$  instead of $\bm z_{si}\Paren{r}$. 
 By \cref{lem:wishart-number-of-nonzero-entries},
	number of nonzero $\bm z_{si}$ is at most  
 $2pd + 2\sqrt{pd \ln \paren{d}} + 2k \le 4pd  + 4k$ with probability at least $1-d^{-10}$.
	Applying \cref{lem:spectral-bound},
	we get the desired bound.
\end{proof}

\begin{lemma}\label{lem:spectral-bound-technical}
	Let $0 < \delta < 0.1$ and  $\frac{\delta\beta}{200k}\le r \le \frac{\delta\beta}{100k}$. 	Let $p =\exp\paren{-tnr^2 / 2}$.
	Suppose that $n \ge t\ln^2 d$, $k \ge t \ln d$ and
	\[
	\beta \gtrsim \frac{k}{\delta^2\sqrt{tn}}\sqrt{\ln\Paren{2 + \frac{td}{k^2}\cdot \Paren{1 + \frac{d}{n} }}}\,.
	\]
	Then 
	\[
	\sqrt{\Paren{n+\beta n}\cdot \Paren{pd + k} \ln\Paren{\frac{ed}{pd + k}}}  +\Paren{pd + k} \ln\Paren{\frac{ed}{pd + k}}
	\le \delta \beta n / 10\,.
	\]
\end{lemma}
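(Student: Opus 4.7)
The plan is to carry out a direct computation in three stages. The starting observation is that the lower bound $r \ge \delta\beta/(200k)$ yields $p = \exp(-tnr^2/2) \le \exp(-tn\delta^2\beta^2/(80000 k^2))$, and the hypothesis on $\beta$ rewrites as $tn\beta^2/k^2 \gtrsim \ln L /\delta^4$, where $L := 2 + (td/k^2)(1 + d/n)$. Combining these, $p \le L^{-A}$ for any desired absolute constant $A$, provided the implicit constant in the hypothesis is taken correspondingly large. Using $L \ge td/k^2$, this gives $pd \le k^2/t$; together with $k \ge t\ln d \ge t$, we obtain $M := pd + k \le 2k^2/t$ and $\ln(ed/M) \le \ln(ed/k) \le \ln d$.

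Setting $T := M\ln(ed/M)$, the next step is to reduce to a single clean inequality. Since $(1+\beta)\min(\beta,\beta^2) \le 2\beta^2$, the estimate $T \le c_0 \delta^2 \min(\beta, \beta^2) n$ for a small absolute constant $c_0$ (say $c_0 = 1/800$) immediately gives $\sqrt{n(1+\beta)T} \le \delta\beta n\sqrt{2 c_0} \le \delta\beta n / 20$ and $T \le \delta\beta n/20$, summing to at most $\delta\beta n/10$. To bound $T$, I use $\ln(ed/M) \le \min\{\ln(e/p), \ln(ed/k)\}$ to split $T \le pd\ln(e/p) + k\ln(ed/k)$. The second summand $k\ln(ed/k)$ is the main content; the first summand is polynomially small in $L$ by the strong bound on $p$ from the first stage (take $A$ large enough to absorb any polynomial factor in $\ln L$ coming from $\ln(e/p)$).

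It remains to show $k\ln(ed/k) \lesssim \delta^2\min(\beta,\beta^2)n$, which I would handle by a short case analysis on $\beta$. For $\beta \le 1$: using $\beta^2 n \gtrsim k^2\ln L/(\delta^4 t)$ together with $k \ge t\ln(ed/k)$ (which follows from $k \ge t\ln d$ since $\ln d \gtrsim \ln(ed/k)$), the desired bound reduces to $\ln L \gtrsim \delta^2$, which is immediate from $\ln L \ge \ln 2$. For $\beta \ge 1$: using $\beta n \gtrsim (k/\delta^2)\sqrt{n\ln L/t}$ together with $n \ge t\ln^2 d \gtrsim t\ln^2(ed/k)$, the inequality reduces to $\ln L$ exceeding an absolute constant, again immediate. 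The main technical obstacle is just keeping track of the absolute constants so that a single choice of the implicit constant in the hypothesis on $\beta$ simultaneously makes $A$ large enough in the first stage and validates the comparisons in the final stage; no deeper conceptual ideas are needed.
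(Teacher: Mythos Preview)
Your outline is correct and follows essentially the same path as the paper: split $(pd+k)\ln(ed/(pd+k))$ into a $pd$-part and a $k$-part and bound each. Your reduction to the single target $T \le c_0\,\delta^2\min(\beta,\beta^2)\,n$ is clean, and the case analysis for $k\ln(ed/k)$ is fine.

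The gap is in the $pd\ln(e/p)$ term. Saying it is ``polynomially small in $L$'' is not enough: from $p \le L^{-A}$ you get $pd\ln(e/p) \lesssim dL^{-A/2}$, but you still must show $dL^{-A/2} \lesssim \delta^2\min(\beta,\beta^2)\,n$, and this does \emph{not} follow from $L \ge \max(2,\, td/k^2)$ alone. For instance, take $t=1$, $k = 2\sqrt{d}$, $n = \ln^2 d$: then $td/k^2 < 1$, so that part of $L$ is useless and $L$ could be $\approx 2$; here $\beta \gg 1$, $\delta^2\beta n \asymp k\sqrt{n/t} = 2\sqrt{d}\,\ln d$, yet $dL^{-A/2}$ is still of order $d$, which is far larger. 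What rescues the bound is precisely the $d/n$ contribution: $L \ge td^2/(k^2 n)$ is large in this regime. In general one needs $L \ge \max(2,\, td^2/(k^2 n))$ to get $dL^{-A/2} \lesssim k\sqrt{n/t}$ when $\beta \ge 1$, and $L \ge \max(2,\, td/k^2)$ to get $dL^{-A/2} \lesssim k^2/t$ when $\beta \le 1$. This is exactly why the paper's proof first passes to the auxiliary quantity $2 + td/k^2 + \sqrt{td^2/(k^2 n)}$ and bounds $\exp(-tnr^2/4)$ by $\min(\beta,\sqrt{\beta})\,\delta^2\sqrt{n/d}$: the square root packages both regimes simultaneously. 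Your argument can be completed along the same lines, but as written it glosses over the one place where the $(1+d/n)$ factor in the hypothesis on $\beta$ is actually used.
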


\begin{proof}
	First note that since
	\[
	\ln\Paren{2 + \frac{td}{k^2} + \frac{td^2}{k^2n}} \ge 0.5\cdot \ln\Paren{2 + \frac{td}{k^2} + \sqrt{\frac{td^2}{k^2n}}}\,,
	\]
	we get
	\[
	\beta \gtrsim \frac{k}{\delta^2\sqrt{tn}}\sqrt{\ln\Paren{2 + \frac{td}{k^2}\cdot \Paren{1 + \frac{k}{\sqrt{nt} } }}}\,.
	\]
	
	Since
	\[
	\sqrt{\Paren{n+\beta n}\cdot \Paren{pd + k} \ln\Paren{\frac{ed}{pd + k}}} \le \sqrt{\Paren{n+\beta n}\cdot pd\ln\Paren{\frac{e}{p}}} +\sqrt{\Paren{n+\beta n}\cdot k\ln\Paren{\frac{ed}{pd + k}}}\,,
	\]
	we can bound the term with $pd$ and the term with $k$ separately. 
	Let us bound the first term. Note that 
	\[
	\exp\paren{-tnr^2 / 4}\le  \exp\Brac{-\frac{1}{\delta^2} \ln\Paren{2 + \frac{td}{k^2}\cdot \Paren{1 + \frac{k}{\sqrt{nt}} } }}
	\le \frac{\delta^2 k}{\sqrt{td + kd\sqrt{t/n}}} \lesssim \min\set{\beta,\sqrt{\beta}}\cdot \delta^2  \sqrt{n/d} \,.
	\]
	Hence
	\[
	\Paren{\sqrt{n}+\sqrt{\beta n}}\cdot\Paren{\exp\Paren{-\frac{tnr^2}{2}} tnr^2\sqrt{dn}}\le 
	\Paren{\sqrt{n}+\sqrt{\beta n}}\cdot\Paren{\exp\Paren{-\frac{tnr^2}{4}} \sqrt{dn}}
	\le 0.01 \delta^2\beta n\,.
	\]
	
	Since $t\le \frac{k}{\ln\Paren{ed/k}}$ and $n \ge k\ln\Paren{\frac{ed}{k}}$, the second term can be bounded as follows
	\[
	\sqrt{\Paren{n+\beta n}\cdot k\ln\Paren{\frac{ed}{pd + k}}} \le
	\sqrt{n k\ln\Paren{\frac{ed}{k}}} + \sqrt{\beta n k\ln\Paren{\frac{ed}{pd + k}}} \le 0.01 \delta\beta n\,.
	\]
	
	Now, let us bound
	\[
	\Paren{pd + k} \ln\Paren{\frac{ed}{pd + k}} \le pd\ln(e/p) + k\ln(ed/k)\,.
	\]
	The first term can be bounded as follows:
	\[
	 pd\ln(e/p) \le d \cdot \frac{k^2}{td + kd\sqrt{t/n}} \le k\sqrt{n/t}  \le 0.01 \delta \beta n\,.
	\]
	For the second term,
	\[
	k\ln(ed/k) \le k\sqrt{n/t} \le 0.01 \delta \beta n\,.
	\]
\end{proof}
	
\begin{lemma}\label{lem:wishart-correlation-with-argmax}
Let $0 < \delta < 0.1$ and  $\frac{\delta\beta}{200k}\le r \le \frac{\delta\beta}{100k}$. 
For $s\in \cS_t$ let $\hat{\bm v}(s)$ be the top\footnote{A unit eigenvector that corresponds to the largest eigenvalue} 
eigenvector of $\Paren{\bm Y^\top \bm Y}\circ\Paren{\bm z_s\Paren{r} \bm z^\top_s\Paren{r}}$.

Suppose that $n \gtrsim \frac{t\ln^2 d}{\delta^4}$,
$k\gtrsim {\frac{t\ln d}{\delta^2}}$ and
\[
\beta \gtrsim \frac{k}{\delta^2\sqrt{tn}}\sqrt{\ln\Paren{2 + \frac{td}{k^2}\cdot \Paren{1 + \frac{d}{n} }}}\,.
\]

Then there exists $s'\in \cS_t$ such that with probability $1-10d^{-10}$,
\[
\abs{\iprod{\hat{\bm v}(s'), v}} \ge 1 - 20\delta\,.
\]
\end{lemma}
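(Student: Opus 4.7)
The plan is to take $s' = s^*$ where $s^* \in \argmax_{s \in \cS_t}\langle s, v\rangle$, and then apply a standard eigenvector perturbation argument to the principal submatrix of $\bm Y^\top \bm Y$ induced by $\supp(\bm z_{s^*}(r))$.

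First I would decompose the matrix whose top eigenvector we want. Write $\bm z = \bm z_{s^*}(r)$ and $m = |\supp(\bm z)|$. Using the identity $(A \circ \bm z \bm z^\top)_{ij} = A_{ij} \bm z_i \bm z_j$ and the fact that $\bm z_i \in \{0,1\}$ on its support, we get
\[
\bigl(\bm Y^\top \bm Y\bigr)\circ \bigl(\bm z\bm z^\top\bigr) = n\cdot \diag(\bm z) + \beta\Norm{\bm u}^2\cdot\bigl(v\circ \bm z\bigr)\bigl(v\circ \bm z\bigr)^\top + \bm N(s^*),
\]
where $\bm N(s^*)$ is exactly the matrix bounded in \cref{lem:wishart-spectral-bound}. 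Restricted to $\supp(\bm z)$, the first term is $n\cdot \Id_m$, so if we set $\bm w = (v\circ\bm z)/\Norm{v\circ \bm z}$ and $\lambda = \beta\Norm{\bm u}^2 \Norm{v\circ \bm z}^2$, the relevant $m\times m$ matrix is $n\Id_m + \lambda \bm w\bm w^\top + \tilde{\bm N}$, whose top eigenvalue is simple with eigengap exactly $\lambda$.

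Next I collect the quantitative bounds and apply Davis–Kahan. By the $\chi^2$ tail bound (\cref{fact:chi-squared-tail-bounds}), $\Norm{\bm u}^2 \ge n/2$ with probability $1-\exp(-n/10)$; by \cref{lem:wishart-signal-preserved}, $\Norm{v\circ \bm z}^2 \ge 1-\delta$ with probability $1-d^{-10}$; combining these, $\lambda \ge \beta n(1-\delta)/2$. By \cref{lem:wishart-spectral-bound} and \cref{lem:spectral-bound-technical}, $\Norm{\bm N(s^*)} \le \delta\beta n/10$ with probability $1-2d^{-10}$ (here one uses that the hypothesis on $\beta$, $n$, $k$ in the present lemma implies the hypothesis of \cref{lem:spectral-bound-technical} for any $r \in [\delta\beta/(200k),\delta\beta/(100k)]$). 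Davis–Kahan then yields
\[
\sin\theta\bigl(\hat{\bm v}(s^*),\bm w\bigr) \le \frac{\Norm{\bm N(s^*)}}{\lambda} \le \frac{\delta/10}{(1-\delta)/2} \le \delta,
\]
so $|\langle \hat{\bm v}(s^*),\bm w\rangle| \ge \sqrt{1-\delta^2}$.

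Finally, I convert this correlation with $\bm w$ into one with $v$. Since $\bm w = (v\circ \bm z)/\Norm{v\circ \bm z}$ we have $\langle \bm w,v\rangle = \Norm{v\circ \bm z} \ge \sqrt{1-\delta}$. Writing $v = \langle \bm w,v\rangle\cdot \bm w + \sqrt{1-\langle \bm w,v\rangle^2}\cdot z$ for some unit $z\perp \bm w$ and similarly decomposing $\hat{\bm v}(s^*)$, the triangle inequality gives
\[
\bigl|\langle \hat{\bm v}(s^*),v\rangle\bigr| \ge \sqrt{1-\delta^2}\cdot \sqrt{1-\delta} - \delta\cdot \sqrt{\delta} \ge 1 - 20\delta,
\]
which is the claim. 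A union bound over the three probabilistic events above gives failure probability $\le 10 d^{-10}$.

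The main obstacle is not any single step but keeping the constants and the role of the large diagonal term $n\cdot\diag(\bm z)$ straight: this term contributes an additive shift of $n$ to every eigenvalue on $\supp(\bm z)$, so one must verify that the relevant eigengap for Davis–Kahan is $\lambda$ (the ``spike'') rather than $n + \lambda$, which requires restricting to $\supp(\bm z)$ before applying the perturbation bound.
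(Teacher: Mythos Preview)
Your proof is correct and follows the same route as the paper: take $s' = s^* \in \argmax_{s}\langle s,v\rangle$, decompose the masked matrix into the spike $\beta\|\bm u\|^2(v\circ\bm z)(v\circ\bm z)^\top$ plus $\bm N(s^*)$, bound $\|\bm N(s^*)\|$ via \cref{lem:wishart-spectral-bound} together with \cref{lem:spectral-bound-technical}, and invoke \cref{lem:wishart-signal-preserved} for $\|v\circ\bm z\|^2 \ge 1-\delta$. The only differences are cosmetic---the paper uses \cref{lem:maximizer-lemma} and \cref{fact:three-close-vectors} where you use Davis--Kahan directly (and is in fact less explicit than you about the harmless $n\cdot\diag(\bm z)$ shift); note however that the two lemmas combine to give $\|\bm N(s^*)\| \le \delta\beta n$, not $\delta\beta n/10$, though your final inequality still holds comfortably with the corrected constant.
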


\begin{proof}	
	Let $s' \in \argmax_{s\in \cS_t}\iprod{s, v}$ 
	and let $\tilde{\bm v} = v \circ \bm z_{s'}(r)$.
	Then
	\[
	\Paren{\bm Y^\top \bm Y}\circ\Paren{\bm z_{s'}\Paren{r} \bm z^\top_{s'}\Paren{r}}= \beta \Norm{\bm u}^2\tilde{\bm v} \tilde{\bm v}^\top 
	+ {\bm N}(s')\,,
	\]
	where ${\bm N}(s')$ is the same as in \cref{lem:wishart-spectral-bound}. 
	By \cref{lem:spectral-bound-technical} and \cref{lem:wishart-spectral-bound}, with probability $1-2d^{-10}$,
	\[
	\Norm{\bm N(s')} \le\delta {\beta n}\,.
	\]
	
	Since with probability at least $1-\exp(-n/10)$, $\Norm{u}^2 \ge n/2$, we can use standard (non-sparse) PCA. Concretely, by \cref{lem:maximizer-lemma},
	$\abs{\iprod{\hat{\bm v}(s'), \tilde{\bm v}}} \ge \norm{\tilde{v}}^2 - 2\delta$.
	By \cref{lem:wishart-signal-preserved}, with probability $1-10d^{-10}$, $\norm{\tilde{v}}^2 = \iprod{\tilde{\bm v}, v} \ge 1 - \delta$. 
	Using \cref{fact:three-close-vectors} we get the desired bound.
\end{proof}

\begin{proof}[Proof of \cref{thm:wishart-classical-technical}] 
Since $\sqrt{n/2} \le \Norm{\bm u} \le\sqrt{2n}$ with probability at least $1-\exp\paren{-n/10}$, 
in this proof we assume that this bound on $\Norm{\bm u}$ holds.

Let $\tilde{\delta} = \delta / 100$.
Note that we can apply \cref{lem:wishart-correlation-with-argmax} if $\frac{\tilde{\delta} \beta}{200k}\le r \le \frac{\tilde{\delta} \beta}{100k}$.
Since we do not know $\beta$, we can create a list of candidates for $r$ of size at most $2\ln n$ 
(from $r = 1/n$ to $r = 1$).

For all $s\in \cS_t$ and for all candidates for $r$ we compute $\hat{\bm v}(s)$ as in \cref{lem:wishart-correlation-with-argmax}. By \cref{lem:wishart-correlation-with-argmax},  we get a list of vectors $\bm L$ of size $2\ln (n)\cdot \card{\cS_t}$ such that with probability $1-20\ln(n)d^{-10} \ge 1 - d^{-9}$  there exists $\bm v^* \in \bm L$ such that $\abs{\iprod{\bm v^*, v}} \ge 1 - 20\tilde{\delta} \ge 1 - \delta / 5$. 

By  \cref{fact:k-sparse-norm-gaussian} and \cref{lem:spectral-norm-cross}, 
$k'$-sparse norm of $\bm Y^\top \bm Y - n\Id - \beta\norm{\bm u}^2 vv^\top$ is bounded
by
\[
10\sqrt{nk'\ln\Paren{ed/k'}}+ 10\sqrt{\beta n k'\ln\Paren{ed/k'}}
\]
with probability at least $1-2d^{-9}$.
Let us show that if  $k' \lesssim \frac{\delta^2\beta^2 n}{\Paren{1+\beta}\ln d}$, then this $k'$-sparse norm is bounded by $\delta \beta n / 10$.
 Indeed, if $\beta \ge 1$, then
\[
\frac{\delta^2\beta^2 n}{\Paren{1+\beta}\ln d} \ge \frac{\delta^2\beta n}{2\ln d} \gtrsim k\sqrt{\frac{n}{t \ln^2 d}} \gtrsim k/\delta^2\,,
\]
and if $\beta < 1$,
\[
\frac{\delta^2\beta^2 n}{\Paren{1+\beta}\ln d} \ge \frac{\delta^2\beta^2 n}{2\ln^2 d} \gtrsim \frac{k^2}{\delta^2 t \ln d} \gtrsim k/\delta^2\,.
\]

Applying \cref{lem:list-decoding} with $k' = \lceil100k/\delta^2 \rceil$,
we can compute a unit vector $\hat{\bm v}$  such that
\[
\Abs{\Iprod{\hat{\bm v}, v}}\ge 1 - \delta\,.
\]
\end{proof}
	\section{Adversarial Perturbations}
\label{sec:perturbations}

\paragraph{The sparse planted vector problem}\label{planted-vector} Let us show that the sparse planted vector problem is a special case 
of sparse PCA with perturbation matrix $E = -\frac{1}{\Norm{\bm u}^2}\bm u\bm u^\top \bm W $ . Let $\bm Y = \bm R \bm B$. 
It follows that
\[
\bm Y = \norm{\bm g_n} \bm R_n v^\top + \sum_{i=1}^{n-1} \bm R_i \bm g_i^\top\,.
\]
Note that $\sum_{i=1}^{n-1} \bm R_i \bm g_i^\top$ 
has the same distribution as $\Paren{\Id - \bm R_n \bm R_n^\top}\bm W$, where $\bm W\sim N(0,1)^{n\times d}$ is independent of $\bm R$.
Hence for Gaussian vector $\bm u$ such that $\frac{1}{\norm{\bm u}}\bm u = \bm R_n$, we get
\[
\bm Y =\frac{\norm{\bm g_n}}{\norm{\bm u}} \bm u v^\top + \bm W - \frac{1}{\Norm{\bm u}^2}\bm u\bm u^\top \bm W \,.
\] 
With high probability $\beta := \frac{\norm{\bm g_n}^2}{\norm{\bm u}^2} = \Paren{1+o(1)} d/n$.
Note that columns of $-\frac{1}{\Norm{\bm u}^2}\bm u\bm u^\top \bm W$ have norm at most $10\sqrt{\log d}$ with high probability.
In these settings, $\eps$ from \cref{thm:wishart-classical-results} is allowed to be as large as $\Omega(1)$, and we get a bound $\Norm{E}_{1\to 2} \lesssim \sqrt{d/k}$. Hence for $d \gtrsim k \log d$, $E$ is allowed to have columns of norm  $10\sqrt{\log d}$.

\paragraph{The Wishart model with perturbations} The following theorem is a restatement of \cref{thm:wishart-perturbations-results}.
\begin{theorem}\label{thm:wishart-perturbations-technical}
	Let $n,d,k,t\in \N$, $\beta > 0$, $0 < \delta < 0.1$. 
	Let 
	\[
	\tilde{Y} = \sqrt{\beta}\bm u v^\top + \bm W + E\,,
	\] 
	where $\bm u \sim N(0,1)^n$, $v\in \R^d$ is a $k$-sparse unit vector, $\bm W\sim N(0,1)^{n\times d}$ independent of $\bm u$ 
	and $E \in \R^{n\times d}$ is a matrix such that
	\[
	\Norm{E}_{1\to 2} = b\le \eps \cdot \min\set{\sqrt{\beta}, \beta}\cdot \sqrt{n/k}\,,
	\]
	where $\Norm{E}_{1\to 2}$ is the maximal norm of the columns of $E$ and 
	$\eps\sqrt{\ln\Paren{1/\eps}} \lesssim \delta^6\min\Set{1,\min\Set{\beta, \sqrt{\beta}} \cdot \sqrt{n/d}}$.
	
Suppose that $n \gtrsim k + \frac{t\ln^2 d}{\delta^4}$,
$k\gtrsim {\frac{t\ln d}{\delta^2}}$ and
	\[
	\beta \gtrsim \frac{k}{\delta^6\sqrt{tn}}\sqrt{\ln\Paren{2 + \frac{td}{k^2}\Paren{1 + \frac{d}{n}} }}\,.
	\]
	
	Then there exists an algorithm that, given $\bm Y$, $k$ and $t$, 
	in time $n \cdot d^{O(t)}$ outputs a unit vector $\hat{\bm v}$ such that
	with probability $1-o(1)$ as $d\to \infty$,
	\[
	\abs{\iprod{\hat{\bm v}, v}} \ge 1-\delta\,.
	\]
\end{theorem}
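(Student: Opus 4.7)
The plan is to parallel the proof of \cref{thm:wishart-classical-technical}, modifying it in two essential ways to accommodate the perturbation $E$. First, the analogue of \cref{lem:wishart-signal-preserved} must pick $s'$ by an averaging argument rather than by maximising $\iprod{s,v}$, because the worst-case choice of $E$ can tailor itself to the single maximiser and corrupt the signal term $\frac{1}{n}E^\top \bm W s^*$. Second, the top eigenvector of the thresholded submatrix has to be replaced by the top eigenvector of a sparse-PCA SDP solution $\tilde{\bm X}(s) \in \argmax_{X \in \cP_k} \iprod{X, \bm H(s)}$, since even after restricting to the adjacency set the matrix still carries an adversarial component, and the Basic SDP robustness guarantees from \cite{dKNS20} are exactly what is needed to handle it. The remaining structure (list decoding, scanning over $O(\log n)$ scales of $r$, the final $\argmax$ selection using $k'$-sparse norm bounds on $\tilde Y^\top \tilde Y - n\Id - \beta\|\bm u\|^2 vv^\top$) goes through as before.

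For the averaging step, which I expect to be the main obstacle, I would (after a standard reduction to the approximately-flat case) split the large coordinates of $v$ into $m \approx k/t$ disjoint blocks $T_1,\dots,T_m$ of size $t$ and let $s_j$ be the signed indicator of $T_j$, so that each $s_j$ is correct and the vectors $\bm W s_1,\dots,\bm W s_m$ are pairwise uncorrelated with $\bbE \|\bm W s_j\|^2 = tn$. For any column $E_i$ of $E$,
\[
\E_{\bm s \sim \cU}\Iprod{E_i, \bm W \bm s}^2 = \tfrac{1}{m}\sum_{j=1}^m \Iprod{E_i, \bm W s_j}^2 \lesssim \tfrac{b^2}{m}\bigl(\|\bm W\|^2 \cdot (t/n) + t\bigr) \lesssim \tfrac{b^2 t^2}{nk}\,,
\]
using $\|\bm W\|\lesssim \sqrt{n}$ and $n\gtrsim k$. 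By the bound on $b = \|E\|_{1\to 2}$ this expectation is much smaller than $\beta^2 t^2 / k^2 = \|\beta vv^\top s_j\|^2/k$ for each $i \in \supp(v)$, so a Markov bound produces at least one index $j^\star$ for which $\|(\frac{1}{n}E^\top \bm W s_{j^\star})_{\supp(v)}\|$ is negligible compared with $\|\beta vv^\top s_{j^\star}\|$. The other cross terms $E^\top E$, $\sqrt{\beta}\bm u v^\top E$, $E^\top v \bm u^\top$, $\bm W^\top E$ are handled by direct norm estimates using $\|E\|_{1\to 2}\le b$ and the Gaussian concentration already invoked in the non-adversarial proof.

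Next I would redo the analogues of \cref{lem:wishart-number-of-nonzero-entries} and \cref{lem:wishart-spectral-bound} for this $s'=s_{j^\star}$. The number of indices $i \notin \supp(v)$ adjacent to $s'$ via $\frac{1}{n}\bm W^\top \bm W s'$ is $\le pd + O(\sqrt{pd\log d})$ as before, and the extra indices introduced by $\frac{1}{n}(E^\top \tilde Y + \tilde Y^\top E)s'$ can be bounded by $\lesssim \eps^2 \log(1/\eps)\, d$ via a Chebyshev-type count on the columns whose contribution exceeds the threshold $rtn \asymp \delta\beta tn/k$, where the hypothesis $\eps\sqrt{\ln(1/\eps)} \lesssim \delta^6 \min\{1,\min\{\beta,\sqrt{\beta}\}\sqrt{n/d}\}$ is used precisely to make this count negligible compared with $pd + k$. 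Consequently the restricted matrix $\bm H(s')$ splits as $\beta\|\bm u\|^2 \tilde{\bm v}\tilde{\bm v}^\top + \bm N(s') + \bm E(s')$, where the Gaussian noise $\bm N(s')$ is controlled exactly as in \cref{lem:spectral-bound-technical} and the perturbation $\bm E(s')$ has $\ell_1\to\ell_\infty$-type norm compatible with the Basic-SDP robustness analysis from \cite{dKNS20}. Applying that robustness statement to $\tilde{\bm X}(s')$ produces a top eigenvector $\tilde{\bm v}(s')$ with $|\iprod{\tilde{\bm v}(s'), v}|\ge 1 - O(\delta)$.

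Finally, since we do not know which $s$ is $s'$ (nor the right scale for $r$), we run the construction of $\tilde{\bm X}(s)$ and its top eigenvector for every $s\in \cS_t$ and every $r$ on a geometric grid of length $O(\log n)$, collecting a list $\bm L$ of size $d^{O(t)}\log n$. We truncate each vector to its $\lceil 100k/\delta^2\rceil$ largest entries and pick $\hat{\bm v} \in \argmax_{x\in \bm L'} x^\top (\tilde Y^\top \tilde Y) x$ exactly as in the proof of \cref{thm:wishart-classical-technical}; the $k'$-sparse norm bound on $\tilde Y^\top \tilde Y - n\Id - \beta\|\bm u\|^2 vv^\top$ still holds with the same parameters once the $E$-contribution is added, because $\|E\|_{1\to 2}\cdot \sqrt{k'}$ is absorbed by the hypothesis on $\eps$. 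The main technical obstacle throughout is the averaging step: it is the only place where the precise bound on $\|E\|_{1\to 2}$ enters in an essential (not just additive) way, and it is what forces the slightly suboptimal dependence on $\eps$ noted by the author in the comparison with \cite{dKNS20}.
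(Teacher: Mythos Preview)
Your proposal is correct and mirrors the paper's proof closely: the block-averaging argument to produce a good $s'$ (the paper's \cref{lem:perturbations-column-norm-of-noise}), the switch to the Basic SDP optimizer over $\cP_k$ (\cref{lem:perturbations-noise-decomposition} and \cref{lem:correlation-with-argmax-perturbations}), the $\eps^2\log(1/\eps)\,d$ count of extra adjacent indices via a Markov-type bound, and the list-decoding finish with $k'\asymp k/\delta^2$ are all exactly what the paper does. The only slip is cosmetic: in your displayed averaging bound the right-hand side $\tfrac{b^2 t^2}{nk}$ is actually the bound on $\tfrac{1}{n^2}\E_{\bm s}\langle E_i,\bm W\bm s\rangle^2$, not on $\E_{\bm s}\langle E_i,\bm W\bm s\rangle^2$ as written (the un-normalized bound is $\tfrac{t b^2 n}{m}\asymp \tfrac{t^2 b^2 n}{k}$, obtained from the spectral norm of the $n\times m$ Gaussian matrix with columns $\bm W s_j/\sqrt{t}$), and the intermediate expression $\tfrac{b^2}{m}(\|\bm W\|^2\cdot(t/n)+t)$ is garbled; neither affects the argument.
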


\begin{lemma}\label{lem:perturbations-column-norm-of-noise}
	Let $\cL \subset [d]$ be the set of indices of the largest (in absolute value) $\ell$  entries of $v$, where $\ell = \min\Set{\lceil \normo{v}^2/\delta^2\rceil, k}$.
	Suppose that $\beta \gtrsim \frac{k}{\delta^6\sqrt{nt}}$ and $n \gtrsim k + t \ln^2 d$, $k \gtrsim \ln d$.
	
	Then
	with probability $1-d^{-10}$ there exists $s' \in \cS_t$ such that either
	\[
	\Norm{\Paren{\Paren{\tilde{Y}^\top\tilde{Y} -  \beta \norm{\bm u}^2 vv^\top}s'}_{\cL\setminus \supp(s')}}  
	\le 10\delta \cdot \Norm{\beta \norm{\bm u}^2 vv^\top s'} \,,
	\]
	or 
	\[
	\Norm{v_{\supp (s')}}\ge 1-\delta.
	\]
	
	Moreover, 
	\[
	\Norm{\Paren{E^\top\bm{Y} + \bm{Y}^\top E}s'}
	\le 10^4\cdot \frac{\eps}{\delta^2} \cdot\sqrt{d} \cdot  \frac{\beta n t}{\sqrt{k}\cdot\normo{v}}\,.
	\]
\end{lemma}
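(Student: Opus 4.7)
The plan is a probabilistic averaging argument. Define $\mu$ to be the uniform distribution on a family of ``correct'' sign patterns: if $\ell \ge t$, partition $\cL$ (or as much of it as fits) into $m = \lfloor \ell/t \rfloor$ disjoint blocks $B_1, \ldots, B_m$ of size $t$ and, for each block, let $s_j$ have $s_{j,i} = \sign(v_i)$ on $B_j$ and $0$ elsewhere. The critical structural property is
\[
\E_\mu[ss^\top] = \tfrac{1}{m}\sum_{j=1}^m s_j s_j^\top,
\]
a block-diagonal matrix of operator norm $t/m$. The degenerate case $\ell < t$, and the case in which some single block already satisfies $\|v_{B_j}\| \ge 1-\delta$, are handled separately: taking $s' = s_j$ with $\supp(s') \supseteq \cL$ (padded arbitrarily if needed) gives conclusion~2 directly via $\|v_\cL\|^2 \ge 1 - \delta^2$ (an elementary consequence of $|v_i| \le \normo{v}/\ell \le \delta^2/\normo{v}$ for $i \notin \cL$).

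For the main case I expand
\[
\Paren{\tilde Y^\top \tilde Y - \beta \Norm{\bm u}^2 vv^\top}s = W^\top W s + \sqrt\beta\Paren{v\bm u^\top \bm W + \bm W^\top \bm u v^\top}s + \Paren{E^\top \bm Y + \bm Y^\top E}s + E^\top E s.
\]
The all-Gaussian terms I would bound for \emph{every} $s \in \cS_t$ by standard concentration plus a union bound over $|\cS_t| \le (2d)^t$; for instance, $(\bm W^\top \bm W s)_i$ for $i \notin \supp(s)$ is Gaussian (conditional on $\bm Ws$) with variance $\Norm{\bm Ws}^2 \approx tn$, so its $\ell_2$ contribution on $\cL \setminus \supp(s)$ is $\lesssim \sqrt{\ell tn \ln d}$, which the hypothesis on $\beta$ makes at most $\delta \cdot \Norm{\beta \Norm{\bm u}^2 vv^\top s}$ for every correct $s$. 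For the adversarial pieces I would use the averaging identity $\E_\mu \Norm{Ms}^2 \le (t/m)\Normf{M}^2$, applied to each summand in $(E^\top \bm Y + \bm Y^\top E)s = \sqrt\beta \iprod{v, s} E^\top \bm u + E^\top \bm W s + \sqrt\beta\iprod{\bm u, Es} v + \bm W^\top E s$. Combining $\Normf{E}^2 \le db^2$, $\Norm{\bm W}_{\mathrm{op}}^2 \lesssim n+d$, $m \asymp \normo{v}^2/(\delta^2 t)$ and the bound $b \le \eps\min(\sqrt\beta,\beta)\sqrt{n/k}$ gives
\[
\E_\mu \Norm{\Paren{E^\top \bm Y + \bm Y^\top E}s}^2 \lesssim \delta^6 \cdot \Paren{\tfrac{10^4 \eps}{\delta^2}\cdot \sqrt d \cdot \frac{\beta n t}{\sqrt k \cdot \normo{v}}}^2,
\]
and $E^\top E s$ is controlled analogously using $\Normf{E^\top E} \le \Normf{E}^2 \le db^2$ together with the hypothesis on $\eps$.

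By Markov there exists some $s' = s_j$ achieving the ``moreover'' bound. The main obstacle---coupling both conclusions to the \emph{same} $s'$---resolves cleanly because the deterministic Gaussian tail bounds of the previous step hold uniformly over $\cS_t$, so the same $s'$ also satisfies conclusion~1 (once we add the adversarial contribution, bounded by the averaging argument). The delicate bookkeeping is matching the block count $m \asymp \normo{v}^2/(\delta^2 t)$ to the $\normo{v}$-dependence appearing in the target ``moreover'' bound, and separately verifying that the estimates survive both the $\beta \ge 1$ regime (where $b \le \eps\sqrt\beta\sqrt{n/k}$ is the binding constraint) and the $\beta < 1$ regime (where $b \le \eps\beta\sqrt{n/k}$ binds); in each regime one needs the hypothesis $\eps\sqrt{\ln(1/\eps)} \lesssim \delta^6 \min\set{1,\min\set{\beta,\sqrt\beta}\sqrt{n/d}}$ precisely so that the $E^\top E$ remainder is absorbed.
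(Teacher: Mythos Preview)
Your proposal is correct and follows essentially the same route as the paper: partition $\cL$ into disjoint size-$t$ blocks, average over the resulting sign-aligned $s_j$'s to control the adversarial contribution, and extract a good $s'$ by Markov; the degenerate case $\ell \le t$ is handled identically. A few technical differences are worth noting. First, the paper averages \emph{only} the problematic term $E^\top \bm W s$ and bounds the remaining $E$-related pieces ($\bm W^\top E s$, $\sqrt\beta\,v\,\bm u^\top E s$, $\sqrt\beta\iprod{v,s}E^\top\bm u$, $E^\top E s$) deterministically for each fixed $s_j$ via $\norm{Es_j}\le tb$; this is a bit cleaner than averaging every summand. Second, and more substantively, your bound $\E_\mu\norm{E^\top\bm Ws}^2\le (t/m)\normf{E^\top\bm W}^2\le (t/m)\normf{E}^2\norm{\bm W}^2$ introduces a factor $\norm{\bm W}^2\approx n+d$, whereas the paper exploits that the $s_j$ have disjoint supports, so the $\bm Ws_j$ are \emph{independent} $N(0,t\Id)^n$ vectors; the $n\times m$ matrix with those columns then has operator norm $\lesssim\sqrt{t(n+m)}$, and one gets $\sum_j\iprod{E_i,\bm Ws_j}^2\le t(n+m)\norm{E_i}^2$ with the tighter factor $n+m\approx n$ (since $m\le k\le n$). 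In the regime $n\gtrsim d$ the two coincide, but the independence trick is what makes the paper's bound hold without that restriction. Finally, your union bound over all of $\cS_t$ for the Gaussian terms is overkill---the paper only needs to union-bound over the $m$ block-vectors $s_j$---though your version still goes through under the theorem-level hypothesis $k\gtrsim t\ln d$.
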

\begin{proof}
First note that
\[
\normo{v_\cL} \ge \normo{v} - \delta \ge \Paren{1-\delta}\normo{v}\,.
\]
and
\[
\norm{v_\cL} \ge \norm{v} - \normo{v_{[d]\setminus \cL}} \ge \norm{v} - \delta \ge \Paren{1-\delta}\norm{v}\,.
\]
Also note that for all $i\in \cL$, $\abs{v_i} \ge \frac{\delta}{\normo{v}}$.

We will define a distribution over $s \in \cS_t$, and then we show via probabilistic method that $s'$ with desired properties exists by bounding terms of $\tilde{Y}^\top\tilde{Y}s$ one by one. 
Also, with probability at least $1-\exp(n/2)$, $2\sqrt{n} \ge \norm{u} \ge \sqrt{n}/2$ 
and in the proof we will always assume that $2\sqrt{n} \ge \norm{u} \ge \sqrt{n/2}$.

Consider the partition of $\cL$ into $m = \lceil \card{\cL}/t \rceil$ disjoint blocks $b_1, \ldots, b_m$ 
of size\footnote{If the last block has smaller size, we can add arbitrary entries to it.} $t$.
Each block $b_j$ corresponds to $s(j)\in\cS_t$ such that $s(j)_i = 0$ for $i\notin b_j$ and $s(j)_i = \sign(v_i)$ for $i\in b_j \cap \supp\paren{v}$.

If $\card{\cL} \le t$, then $\Norm{v_{\supp(s(1))}}\ge \Norm{v_{\cL}} \ge 1-\delta$.

If $\Card{\cL} > t$, consider the uniform distribution $\cU$ over $s(j)$. We get
\[
\E_{\bm s\sim \cU}  \Iprod{v, \bm s} = \frac{1}{m} \sum_{j=1}^m\Normo{v\circ s(j)} =\frac{1}{m}  \Normo{v} 
\ge \frac{t}{2\cL}\Normo{v} \ge \frac{\delta^2t}{2\normo{v}} \ge \frac{\delta^2t}{2\sqrt{k}}\,.
\]
Moreover, 
\[
\E_{\bm s\sim \cU}  \Iprod{v, \bm s} = \frac{1}{m}  \Normo{v} \le \frac{t}{\cL}\Normo{v} \le \frac{t}{\normo{v}}\,.
\]

Let us write $\tilde{Y}^\top\tilde{Y}$ as follows:
\[
\tilde{Y}^\top\tilde{Y} 
= \beta \norm{\bm u}^2 vv^\top + \bm W ^\top\bm W
+ \sqrt{\beta} v \bm u^\top \bm W +  \sqrt{\beta}  \bm W^\top \bm u v^\top 
+ E^\top E 
+ \sqrt{\beta} v \bm u^\top E + \sqrt{\beta}  E^\top \bm u v^\top 
+ \bm W^\top E + E^\top \bm W\,.
\]
We will bound each term separately.

Consider the term $E\bm W^\top$.
\[
\E_{\bm s\sim \cU} \Abs{\Iprod{E_i, \bm W\bm s}}^2
= \frac{1}{m}\sum_{j = 1}^m \Abs{\Iprod{E_i, \bm Ws(j)}}^2
\]
Since for different $j_1$ and $j_2$, $s(j_1)$ and $s(j_2)$ have disjoint supports, 
$\bm Ws(j)\sim N(0,t)^n$ are independent. By the concentration of spectral norm of Gaussian matrices (\cref{fact:bound-covariance-gaussian}),
\[
\sum_{j=1}^m \Iprod{E_i, \bm Ws(j)}^2 \le 4t\Paren{m + n} \cdot \Norm{E_i}^2 \le 10tn b^2\,
\]
with probability at least $1-\exp(-n)$. Hence
\[
\E_{\bm s\sim \cU} \Abs{\Iprod{E_i, \bm W\bm s}}^2 \le \frac{10tb^2 n}{m} \,.
\]
Therefore, with probability at least $1-d\exp(-n)$,
\[
\E_{\bm s\sim \cU} \sum_{i\in \cL}\Abs{\Iprod{E_i, \bm W\bm s}}^2 \le \card{\cL} \frac{10tb^2 n}{m} 
\]
and
\[
\E_{\bm s\sim \cU} \sum_{i\in [d]}\Abs{\Iprod{E_i, \bm W\bm s}}^2 \le d \frac{10tb^2 n}{m} 
\]
Hence with probability at least $1-d\exp(-n/2)$ there exists $s' = s(j)$ such that 
\[
\frac{10t}{\normo{v}} \ge \Iprod{v, s(j)} \ge \frac{\delta^2t}{10\normo{v}} \ge \frac{\delta^2t}{10\sqrt{k}}\,,
\]
and
\[
\sum_{i\in\cL} \Abs{\Iprod{E_i, \bm W s(j)}}^2
\le\card{\cL} \frac{40tb^2 n}{m} \le 40t^2\eps^2\beta^2 n^2/k \le  \Paren{\frac{100 \eps}{\delta^2}}^2  \Norm{\beta \norm{\bm u}^2 vv^\top s(j)}^2
\le  {\delta^2}{}\Norm{\beta \norm{\bm u}^2 vv^\top s(j)}^2 \,,
\]
where we used $\card{\cL} \le mt$, and
\[
\sum_{i\in [d]}\Abs{\Iprod{E_i, \bm W\bm s'}}^2 \le  
\Paren{\frac{100 \eps}{\delta^2}}^2 \cdot\frac{d}{k} \cdot\Norm{\beta \norm{\bm u}^2 vv^\top s(j)}^2\,.
\]

Consider the term $\bm W^\top E$. By \cref{fact:bound-covariance-gaussian}, with probability at least $1-\exp(-n/2)$,
\[
\Norm{\bm W^TEs(j)} \le tb \Norm{\bm W^\top} \le 10tb\sqrt{n}\le 10t \eps\beta n / \sqrt{k}
\le \frac{400\eps}{\delta^2}\Norm{\beta \norm{\bm u}^2 \iprod{v,s(j)} \cdot v} 
\le {\delta}\Norm{\beta \norm{\bm u}^2 vv^\top s(j)} \,.
\]

Consider the term $\sqrt{\beta}  E^\top \bm u v^\top$:
\[
\Norm{\Paren{\sqrt{\beta}  E^\top \bm u v^\top s(j)}_{\cL\setminus \supp(s')} }
\le \sqrt{k} \Normi{\sqrt{\beta}  E^\top \bm u v^\top s(j)}
\le \sqrt{k\beta} \cdot b \cdot \norm{\bm u}\cdot \iprod{v,s(j)} 
\le  10\eps\Norm{\beta \norm{\bm u}^2 \iprod{v,s(j)} \cdot v}\,.
\]

Consider the term $ \sqrt{\beta} v \bm u^\top E $:
\[
\Norm{\sqrt{\beta} v \bm u^\top E s(j)} \le\sqrt{\beta} tb\norm{\bm u} 
\le \frac{\eps}{\delta^2}\cdot \frac{\delta^2 t}{\sqrt{k}}\cdot \beta \sqrt{n} \norm{\bm u} 
\le\frac{10\eps}{\delta^2}\Norm{\beta \norm{\bm u}^2 \iprod{v,s(j)} \cdot v}\,.
\]

Consider the term $E^\top E $:
\[
\Norm{\Paren{E^\top Es(j)}_{\cL\setminus \supp(s')}} 
\le \sqrt{k} \cdot\Normi{E^\top Es(j)} 
\le \sqrt{k}\cdot b^2t 
\le \sqrt{k}\cdot\frac{\eps^2}{\delta^2}\cdot \frac{\delta^2t}{k}\beta n 
\le  {10\eps}\Norm{\beta \norm{\bm u}^2 \iprod{v,s(j)} \cdot v}\,.
\]

Moreover, note that from the bounds above we also get
\[
\Norm{\Paren{E^\top\bm{Y} + \bm{Y}^\top E}s(j)}
\le 1000\cdot{\frac{\eps}{\delta^2}} \cdot \sqrt{\frac{d}{k}} \cdot \Norm{\beta \norm{\bm u}^2 vv^\top s(j)}\le 
10^4 \frac{\eps}{\delta^2} \cdot\sqrt{d} \cdot  \frac{\beta n t}{\sqrt{k}\cdot\normo{v}}\,.
\]

Consider the term $\bm W^\top \bm W$. By the Chi-squared tail bound (\cref{fact:chi-squared-tail-bounds}), 
with probability at least $1-\exp\paren{-\tau/2}$, 
\[
\Norm{\Paren{\bm W^\top \bm Ws(j)}_{\cL\setminus\supp(s(j))}} \le 2\sqrt{tn\cdot \Paren{\Card{\cL} + \tau}}\,.
\]
Since $\Card{\cL} \le k$, with probability at least $1-\exp\paren{-k}$,
\[
\Norm{\Paren{\bm W^\top \bm Ws(j)}_{\cL\setminus\supp(s(j))}} \le 10\sqrt{ktn} 
\le  \delta\Norm{\beta \norm{\bm u}^2 \iprod{v,s(j)} \cdot v}\,,
\]
where we used $\beta \gtrsim \frac{k}{\delta^3\sqrt{nt}}$.

Consider the term $\sqrt{\beta}  \bm W^\top \bm u v^\top$. Since $\bm W$ and  $\bm u$ are independent, with probability at least $1-\exp(-k/2)$,
\[
\Norm{\Paren{\sqrt{\beta}  \bm W^\top \bm u v^\top s(j)}_{\cL}} \le 2\sqrt{\beta}\norm{\bm u} \sqrt{k} \cdot \iprod{v,s(j)} 
\le \delta\Norm{\beta \norm{\bm u}^2 \iprod{v,s(j)} \cdot v}\,,
\]
where we used $\beta n \gtrsim \frac{k\sqrt{n}}{\delta^2\sqrt{t}}$ and the fact that $n \ge t$.

Consider the term $\sqrt{\beta}  \bm W^\top \bm u v^\top$. With probability at least $1-\exp(-0.1\sqrt{nt})$,
\[
 \Norm{{\sqrt{\beta} v \bm u^\top \bm Ws(j)}} 
 \le 2\sqrt{\beta} \cdot \sqrt{t} \cdot \norm{\bm u}\cdot \paren{nt}^{1/4} \le  
 \delta\Norm{\beta \norm{\bm u}^2 \iprod{v,s(j)} \cdot v}\,,
\]
where we used the fact that the distribution of $\bm u ^\top\bm Ws(j)$ given $\norm{\bm u}$ is $\sim N(0, t\cdot \norm{u}^2)$.

\end{proof}

For $s\in \cS_t$ let  ${z}_s\Paren{r}$  be the $n$-dimensional (random) vector defined as 
\[
{z}_{si}\Paren{r} =  
\begin{cases}
\ind{\Paren{ \tilde{Y}^\top \tilde{Y} s}_i \ge r\cdot t\cdot n} \quad \text{if } s_i = 0\\
1 \qquad\qquad\qquad\;\; \text{otherwise}
\end{cases}
\]

\begin{lemma}\label{lem:perturbations-noise-decomposition}
	Suppose that 
	$k\gtrsim t\ln d$ and
	\[
	\beta \gtrsim \frac{k}{\delta^2\sqrt{tn}}\sqrt{\ln\Paren{2 + \frac{td}{k^2}\cdot \Paren{1 + \frac{d}{n} }}}\,.
	\]
	Let $r$ be such that
	\[
	\frac{\beta\delta}{1000\normo{v}^2}\le  r \le \frac{\beta\delta}{500\normo{v}^2}\,.
	\]
	Let $s\in \cS_t$ and
	\[
	N(s) = \Paren{\tilde{Y}^\top\tilde{Y} - n\Id - \beta \norm{\bm u}^2 vv^\top} \circ \Paren{{z}_{s}\Paren{r} {z}_{s}^\top\Paren{r}}\,.
	\]
Let 
\[
\cP_k = \Set{X \in \R^{d\times d} \;\suchthat\; X\succeq 0\,, \Tr{X} = 1 \,, \Normo{X}\le k}\,.
\]

	Then with probability $1-10\cdot d^{-10}$, for all $X \in \cP_k$,
	\[
	\Abs{\Iprod{X, N}} \le \delta\beta n\,.
	\]
\end{lemma}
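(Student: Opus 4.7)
The plan is to expand the noise matrix and bound each piece uniformly over $X \in \cP_k$, leveraging a Schur-product trick that converts the Hadamard multiplication with $z_s z_s^\top$ into a support restriction on $X$. Concretely, I would first write
\begin{align*}
\tilde Y^\top \tilde Y - n\Id - \beta\Norm{\bm u}^2 vv^\top
&= \Paren{\bm W^\top \bm W - n\Id} + \sqrt{\beta}\Paren{v\bm u^\top \bm W + \bm W^\top \bm u v^\top} \\
&\quad + E^\top E + \sqrt{\beta}\Paren{v\bm u^\top E + E^\top \bm u v^\top} + \Paren{\bm W^\top E + E^\top \bm W}.
\end{align*}
The key identity is $\Iprod{X,\, M \circ \Paren{z_s z_s^\top}} = \Iprod{X \circ \Paren{z_s z_s^\top},\, M}$; since the Hadamard product of two PSD matrices is PSD, $X' \mathrel{\mathop:}= X \circ \Paren{z_s z_s^\top}$ is PSD with $\Tr X' \le 1$, $\Normo{X'} \le k$, and $\supp(X') \subseteq S \times S$ for $S \mathrel{\mathop:}= \supp(z_s)$. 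Hence for any symmetric $M$,
\[
\Abs{\Iprod{X,\, M \circ \Paren{z_s z_s^\top}}} \le \min\Set{\Norm{M_{S\times S}},\; k\cdot \Normi{M_{S\times S}}}.
\]

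Second, I would control $\card{S}$ by $O(pd + k)$ with probability $1 - d^{-10}$: the Gaussian piece $\bm W^\top \bm W s$ contributes at most $O(pd)$ adjacent indices exactly as in \cref{lem:wishart-number-of-nonzero-entries}, while the hypothesis $\eps\sqrt{\ln(1/\eps)} \lesssim \delta^6$ combined with the computation sketched in the techniques section limits the contribution of the $E$-terms to $O(\eps^2 \ln(1/\eps)\, d)$, which is dominated by $pd + k$.

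Third, I would bound the five noise pieces separately. For $\bm W^\top \bm W - n\Id$, a union bound over subsets of size $O(pd+k)$ together with the standard estimate $\Norm{(\bm W^\top \bm W - n\Id)_{S\times S}} \lesssim \sqrt{n(\card S + \log d)} + \card S + \log d$ recovers \cref{lem:wishart-spectral-bound}, and by \cref{lem:spectral-bound-technical} this is at most $\delta\beta n / 10$. The rank-two Gaussian cross term $\sqrt\beta\Paren{v\bm u^\top \bm W + \bm W^\top \bm u v^\top}$ has operator norm on $S\times S$ at most $2\sqrt\beta\Norm{v}\cdot \Norm{\Paren{\bm W^\top \bm u}_S} \lesssim \sqrt{\beta n (pd + k)}$, again dominated by $\delta\beta n$. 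For $E^\top E$ I would use $\Iprod{X', E^\top E} \le k\cdot \Normi{E^\top E} \le kb^2 \lesssim \eps^2 \beta n$. For the rank-two term $\sqrt\beta\Paren{v\bm u^\top E + E^\top \bm u v^\top}$, the operator norm is $\le 2\sqrt\beta\Norm{\bm u}\, b \lesssim \eps\beta n/\sqrt k$. Finally, for $\bm W^\top E + E^\top \bm W$, the deterministic bound $\Norm{(\bm W^\top E)_{S\times S}} \le \Norm{\bm W_{:,S}}\cdot b$, combined with the union-bounded estimate $\Norm{\bm W_{:,S}} \lesssim \sqrt n + \sqrt{\card S}$, gives $\lesssim b\sqrt n \lesssim \eps\beta n/\sqrt k$. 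Summing the five contributions and using the hypotheses on $\eps$ and $\beta$ yields the total bound $\delta\beta n$. A final union bound over $s \in \cS_t$, paying $\card{\cS_t} \le d^{O(t)}$ in the failure probability, preserves the $1 - 10 d^{-10}$ guarantee.

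The main obstacle is the cross term $\bm W^\top E + E^\top \bm W$: because $E$ is adversarial and may depend on $\bm W$, one cannot apply Gaussian concentration to the individual entries of $\bm W^\top E$, so the only available handle is the deterministic inequality $\Norm{\bm W^\top E} \le \Norm{\bm W} \cdot b$. This forces a tight $S$-uniform control of $\Norm{\bm W_{:,S}}$ across all admissible subsets of size $O(pd+k)$; fortunately the associated union bound is affordable under the hypothesis $n \gtrsim t \ln^2 d$, so the argument still closes.
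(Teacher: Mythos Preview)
Your operator-norm estimates for the adversarial cross terms are incorrect, and this is where the argument breaks. You write $\Norm{(\bm W^\top E)_{S\times S}} \le \Norm{\bm W_{:,S}}\cdot b$ and, for the rank-two piece, that the operator norm of $\sqrt\beta\Paren{v\bm u^\top E + E^\top \bm u v^\top}$ is at most $2\sqrt\beta\Norm{\bm u}\, b$. Both bounds silently replace the spectral norm $\Norm{E_{:,S}}$ by the maximum-column norm $b = \Norm{E}_{1\to 2}$. These are not comparable: if the adversary takes $E = (b/\sqrt n)\,\mathbf{1}_n\mathbf{1}_d^\top$, every column has norm $b$ but $\Norm{E_{:,S}} = b\sqrt{\card S}$, and a direct computation gives $\Norm{(\bm W^\top E)_{S\times S}}$ of order $b\,\card S$ rather than $b\sqrt n$. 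With $\card S$ of order $pd + k$ this loses a polynomial factor, and the resulting bound no longer fits under $\delta\beta n$. The only control you have on $E$ beyond column norms is the $\ell_1$ constraint $\Normo{X}\le k$ on the test matrix, and a pure spectral bound discards precisely this.

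The paper avoids operator norms on the $E$-terms entirely. It groups all $E$-dependence into $N_E = \bigl(\tilde Y^\top\tilde Y - (\tilde Y-E)^\top(\tilde Y-E)\bigr)\circ(zz^\top)$ and applies \cref{lem:perturbation-inner-product}: for $X'\succeq 0$,
\[
\Abs{\Iprod{X', A^\top B + B^\top A}} \le 2\sqrt{\Iprod{X', A^\top A}\cdot\Iprod{X', B^\top B}}\,.
\]
With $B = E$ the second factor is bounded via $\Iprod{X', E^\top E}\le \Normo{X'}\cdot\Normi{E^\top E}\le k b^2$, exactly the place where the $\ell_1$ constraint is used; the first factor is at most $\Norm{(\tilde Y - E)^\top(\tilde Y-E)\circ(zz^\top)}\lesssim (1+\beta)n$. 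This yields $\Abs{\Iprod{X, N_E}}\lesssim b^2 k + b\sqrt{k(1+\beta)n}\lesssim \eps\beta n$ without any union bound over subsets for the $E$-pieces. The remaining part $N - N_E$ is the purely Gaussian noise on the support $S$, handled by \cref{lem:spectral-bound} and \cref{lem:spectral-bound-technical} as you describe.

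A secondary gap: your bound on the number of $E$-induced entries in $S$ relies on $\Norm{(E^\top\bm Y + \bm Y^\top E)s}$ being small, but this is established in \cref{lem:perturbations-column-norm-of-noise} only for the specific $s'$ produced by the probabilistic argument there, not for every $s\in\cS_t$. Consequently the lemma is really only applied (and only proved) for that particular $s'$; your proposed union bound over all $s\in\cS_t$ cannot be carried out, and would in any case multiply the failure probability by $d^{O(t)}$, swamping the $d^{-10}$ target.
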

\begin{proof}
			To simplify the notation we write $z$  instead of ${z}_{s}\Paren{r}$ and $N$ instead of $N(s)$.
	Let 
	\[
	N_E = \Paren{\tilde{Y}^\top \tilde{Y} - \Paren{\tilde{Y} - E}^\top  \Paren{\tilde{Y} - E}} \circ \Paren{zz^\top}\,.
	\]
	Note that
	\[
	\Abs{\Iprod{X, N}} \le \Norm{N - N_E} + \Abs{\Iprod{X, N_E}} \,.
	\]
	By \cref{lem:perturbation-inner-product},
	\[
	\Abs{\Iprod{X, N_E}} \le b^2 k + 2b\sqrt{k \Norm{\Paren{\tilde{Y} - E}^\top  \Paren{\tilde{Y} - E} \circ \Paren{zz^\top}}}\,.
	\]
	The first term can be bounded as follows: 
	\[
	b^2 k \le \eps \beta^2 n/k \le \delta \beta \sqrt{n}/100 \le \delta \beta n / 100\,.
	\]
	Note that with probability at least $1-\exp\paren{-n}$,  
	\[
	 \Norm{\Paren{\tilde{Y} - E}^\top  \Paren{\tilde{Y} - E}\circ \Paren{zz^\top}} \le 10\Paren{\beta n + n}\,.
	 \]
	Hence for the second term,
	\[
	b\sqrt{k\Paren{\beta n + n}} \le \eps \cdot \frac{2\beta}{1 + \sqrt{\beta}} n \Paren{1 + \sqrt{\beta}} \le \eps \beta n \le \delta \beta n / 100\,.
	\]
	Therefore,
	\[
	\Abs{\Iprod{X, N_E}}  \le \delta \beta n /10\,.
	\]
	
	By \cref{fact:markov-thresholding} and the bound on 
	$\Norm{\Paren{E^\top\bm{Y} + \bm{Y}^\top E}s'}$ from  \cref{lem:perturbations-column-norm-of-noise}, 
	at most $\tilde{\eps}^2 d= \Paren{10^4\eps/\delta}^2 d$ entries of $\Paren{E^\top\tilde{Y} + \tilde{Y}^\top E}s'$ 
	are larger (in absolute value) than $rtn/2$.
By \cref{lem:wishart-number-of-nonzero-entries}, with probability at least $1-d^{-10}$, number of entries $i$ such that
\[
\Abs{\Paren{\bm W^T Y s'}_i} \ge t\cdot r \cdot n/2
\]
is bounded by $pd + 10\sqrt{pd \ln d}$, where $p = \exp\Paren{-tnr^2/8}$.
Hence number of nonzero entries of $z$ is at most $10\tilde{\eps}^2d + pd + 10\sqrt{pd \ln d} + 2k \le 10\tilde{\eps}^2d + 10pd + 10k$ 
with probability at least $1-d^{-10}$.
	
	Therefore, by \cref{lem:spectral-bound} and \cref{lem:spectral-bound-technical}, with probability at least $1-2d^{-10}$,
	\[
		\Norm{N - N_E}  \le \delta\beta n/10 + 
		\sqrt{\Paren{n+\beta n}\cdot \tilde{\eps}^2 d  \ln\Paren{1/\tilde{\eps}}} + \tilde{\eps}^2 d  \ln\Paren{1/\tilde{\eps}}\,.
	\]
	Since the second and the third terms are bounded by $\delta \beta n / 10$, we get the desired bound.
\end{proof}

\begin{lemma}\label{lem:correlation-with-argmax-perturbations}
	Let $0 < \delta < 0.1$ and  let $\frac{\beta\delta^3}{1000\normo{v}^2}\le  r \le \frac{\beta\delta^3}{500\normo{v}^2}$.
	
	For $s\in \cS_t$ let $\hat{\bm v}(s)$ be the top
	eigenvector of $X\in \cP_k$ that maximizes $\Iprod{X, \tilde{Y} \circ \Paren{\tilde{z}_{s}\tilde{z}_{s}^\top}}$.
	
	Suppose that $n \gtrsim k + t \ln^2 d$, 
	$k\gtrsim t\ln d$ and
	\[
	\beta \gtrsim \frac{k}{\delta^6\sqrt{tn}}\sqrt{\ln\Paren{2 + \frac{td}{k^2}\cdot \Paren{1 + \frac{d}{n} }}}\,.
	\]
	
	Then there exists $s'\in \cS_t$ such that with probability $1-20d^{-10}$,
	\[
	\abs{\iprod{\hat{\bm v}(s'), v}} \ge 1 - \delta/10\,.
	\]
\end{lemma}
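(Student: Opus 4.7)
The plan is to mimic the argument of \cref{lem:wishart-correlation-with-argmax} (the non-adversarial case) but replace the top-eigenvector analysis by an SDP-optimality argument. First I condition on $\sqrt{n/2} \le \Norm{\bm u} \le \sqrt{2n}$, which fails with probability at most $\exp(-n/10)$, and apply \cref{lem:perturbations-column-norm-of-noise} with parameter $\delta' = c\delta$ for a small absolute constant $c$. This yields $s' \in \cS_t$ satisfying at least one of: (a) $\Norm{v_{\supp(s')}} \ge 1 - \delta'$, or (b) the $\ell_2$ noise bound
\[
\Norm{\Paren{\Paren{\tilde Y^\top \tilde Y - \beta\Snorm{\bm u} vv^\top}s'}_{\cL \setminus \supp(s')}} \le 10\delta' \Norm{\beta \Snorm{\bm u} vv^\top s'}.
\]
Let $\tilde v = v \circ z_{s'}(r)$. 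In case (a), $\supp(s') \subseteq \supp(z_{s'}(r))$ gives $\Snorm{\tilde v} \ge (1-\delta')^2 \ge 1 - O(\delta)$ for free. In case (b), the choice $r = \Theta(\beta \delta^3/\normo{v}^2)$ is calibrated so that at every $i \in \cL$ the signal $\beta\Snorm{\bm u}\abs{\iprod{v,s'}}\cdot\abs{v_i}$ exceeds the threshold $rtn$ by a factor $\Omega(1/\delta)$; an $\ell_2$-to-coordinate Markov argument on the noise vector, identical in spirit to the small-perturbation thresholding step used in \cref{lem:wishart-signal-preserved}, then shows that the mass of $v$ whose indicator fails is at most $O(\delta)\Snorm{v_\cL}$, hence $\Snorm{\tilde v} \ge 1 - O(\delta)$.

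With $\tilde v$ in hand, I decompose the restricted matrix
\[
M(s') \;:=\; \tilde Y^\top \tilde Y \circ \Paren{z_{s'}(r) z_{s'}(r)^\top} \;=\; n\Id \circ \Paren{z_{s'}(r) z_{s'}(r)^\top} \;+\; \beta \Snorm{\bm u} \tilde v \tilde v^\top \;+\; N(s'),
\]
where $N(s')$ is precisely the noise matrix of \cref{lem:perturbations-noise-decomposition}. Set $X^\star := \tilde v \tilde v^\top/\Snorm{\tilde v}$. Since $\tilde v$ is $k$-sparse, Cauchy--Schwarz gives $\Normo{X^\star} = \normo{\tilde v}^2/\Snorm{\tilde v} \le k$, and evidently $X^\star \succeq 0$ with unit trace, so $X^\star \in \cP_k$. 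SDP optimality of $\tilde X(s')$ then yields $\Iprod{\tilde X(s'), M(s')} \ge \Iprod{X^\star, M(s')}$. Because $\supp(\tilde v) \subseteq \supp(z_{s'}(r))$, one has $\Iprod{X^\star, n\Id \circ (z_{s'}(r) z_{s'}(r)^\top)} = n$, while $\Iprod{\tilde X(s'), n\Id \circ (z_{s'}(r) z_{s'}(r)^\top)} \le n\Tr\tilde X(s') = n$, so this identity term contributes a nonnegative quantity on the signal side that I can drop. Applying \cref{lem:perturbations-noise-decomposition} to bound $\abs{\Iprod{X, N(s')}} \le \delta\beta n$ for both $X = \tilde X(s')$ and $X = X^\star$, and dividing through by $\beta\Snorm{\bm u} \ge \beta n/2$, I obtain $\Iprod{\tilde X(s'), X^\star} \ge 1 - O(\delta)$.

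Finally, a standard SDP-rounding step---a one-line spectral decomposition of $\tilde X(s')$, analogous to \cref{lem:maximizer-lemma} in the non-adversarial case---converts this into $\abs{\Iprod{\hat{\bm v}(s'), \tilde v/\Norm{\tilde v}}}^2 \ge 1 - O(\delta)$. Combining with $\Iprod{\tilde v/\Norm{\tilde v}, v} = \Norm{\tilde v} \ge 1 - O(\delta)$ via \cref{fact:three-close-vectors}, I conclude $\abs{\Iprod{\hat{\bm v}(s'), v}} \ge 1 - \delta/10$ after choosing the constant $c$ in $\delta' = c\delta$ small enough. A union bound over all failure events (concentration of $\Norm{\bm u}$, \cref{lem:perturbations-column-norm-of-noise}, and \cref{lem:perturbations-noise-decomposition}) keeps the total probability below $20d^{-10}$. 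The main obstacle is case (b) of the first step: the $\ell_2$-type noise bound from \cref{lem:perturbations-column-norm-of-noise} is inherently global on $\cL$, so passing it through the coordinate-wise indicator requires the per-coordinate signal to exceed the threshold by a factor $\gtrsim 1/\delta$. This is exactly the reason the threshold $r$ must carry an extra $\delta^3$ factor compared to the non-adversarial lemma, and ultimately why the bound on $\beta$ in the present lemma is a $\delta^{-6}$ factor worse than the analogous bound in \cref{thm:wishart-classical-technical}.
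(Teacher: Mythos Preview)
Your proposal follows essentially the same route as the paper: pick $s'$ via \cref{lem:perturbations-column-norm-of-noise}, argue that $\tilde v = v \circ z_{s'}(r)$ retains most of the mass of $v$ via an $\ell_2$-thresholding argument (the paper cites \cref{lem:small-perturbation-thresholding}), invoke \cref{lem:perturbations-noise-decomposition} to control $\langle X, N(s')\rangle$ uniformly over $\cP_k$, use SDP optimality against the feasible point $\tilde v\tilde v^\top/\Snorm{\tilde v}$ (the paper phrases this as \cref{lem:maximizer-lemma}), extract the top eigenvector via \cref{lem:linear-algebra-correlation-eigenverctor-large-quadratic-form}, and finish with \cref{fact:three-close-vectors}. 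Your explicit handling of the $n\Id\circ(zz^\top)$ term is actually cleaner than the paper, which silently applies \cref{lem:maximizer-lemma} to a matrix that is not literally of the form $S+N$.

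There is, however, a calibration error in your parameter choice. If you feed \cref{lem:perturbations-column-norm-of-noise} parameter $\delta' = c\delta$, then for $i\in\cL$ one has $\abs{v_i}\ge \delta'/\normo{v}$ and $\iprod{v,s'}\ge \delta'^2 t/(10\normo{v})$, so the per-coordinate signal is $\gtrsim \beta n\,\delta'^3 t/\normo{v}^2$, while the threshold is $rtn\asymp \beta\delta^3 tn/\normo{v}^2$. The ratio is therefore $\Theta((\delta'/\delta)^3)=\Theta(c^3)$, a \emph{constant}, not $\Omega(1/\delta)$ as you assert; and if you take $c$ too small (below about $(1/25)^{1/3}$) the signal no longer clears the threshold at all. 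The mechanism that makes the lost mass $O(\delta)$ is not a large signal-to-threshold gap but rather the $\ell_2$ noise bound $10\delta'\cdot\Norm{\text{signal}}$ from \cref{lem:perturbations-column-norm-of-noise} combined with \cref{lem:small-perturbation-thresholding}: the fraction of $v$-mass on coordinates where the noise beats a fixed constant fraction of the signal is $O(\delta')$. So you should take $\delta'$ comparable to $\delta$ (e.g.\ $\delta'=\delta$), not ``small enough''; the final $1-\delta/10$ then comes from tracking the explicit constants, exactly as the paper (somewhat loosely) does. Once this is fixed your argument is correct and matches the paper's.
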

\begin{proof}
	Let $s'$ be as in \cref{lem:perturbations-column-norm-of-noise}
and let $\tilde{\bm v} = v \circ  z_{s'}(r)$.
Then
\[
\Paren{\tilde{Y}^\top \tilde{Y} - n\Id}\circ\Paren{ z_{s'}\Paren{r}  z^\top_{s'}\Paren{r}}= \beta \Norm{\bm u}^2\tilde{\bm v} \tilde{\bm v}^\top 
+ { N}(s)\,,
\]
where ${N}(s)$ is the same as in \cref{lem:perturbations-noise-decomposition}. 
By \cref{lem:perturbations-noise-decomposition},  with probability $1-10\cdot d^{-10}$, for all $X \in \cP_k$,
\[
\Abs{\Iprod{X, N}} \le \delta\beta n\,.
\]
Consider 
\[
\hat{{\bm X}}(s^*) \in 
\argmax_{X\in \cP} \Iprod{X, \tilde{Y} \circ \Paren{\tilde{z}_{s^*}\tilde{z}_{s^*}^\top}}\,.
\]
By \cref{lem:maximizer-lemma},
\[
\Iprod{\hat{{\bm X}}(s^*), vv^\top \circ \Paren{\tilde{z}_{s^*}\tilde{z}_{s^*}^\top} }
\ge 1 - 6\tilde{\delta}\,.
\]
Hence by \cref{fact:three-close-vectors},
\[
\Iprod{\hat{{\bm X}}(s^*), vv^\top}
\ge 1 - 24\tilde{\delta}\,.
\]

By \cref{lem:linear-algebra-correlation-eigenverctor-large-quadratic-form},
$\abs{\iprod{\hat{\bm v}(s'), \tilde{\bm v}}} \ge 1-100\tilde{\delta}$.
By \cref{lem:perturbations-column-norm-of-noise} and \cref{lem:small-perturbation-thresholding}, with probability $1-d^{-10}$, 
\[
\iprod{\tilde{\bm v}, v} \ge 1 - 100\tilde{\delta}\,.
\]
Using \cref{fact:three-close-vectors} we get the desired bound.
\end{proof}

\begin{proof}[Proof of \cref{thm:wishart-perturbations-technical}] 
	Since $\sqrt{n/2} \le \Norm{\bm u} \le\sqrt{2n}$ with probability at least $1-\exp\paren{-n/10}$, 
	in this proof we assume that this bound on $\Norm{\bm u}$ holds.
	Let $\tilde{\delta} = \delta / 1000$.
	Note that we can apply \cref{lem:perturbations-noise-decomposition} if 
	$\frac{\beta\tilde{\delta}^3 }{1000\normo{v}^2}\le  r \le \frac{\beta\tilde{\delta}^3 }{500\normo{v}^2}$.
	Since we do not know $\beta$ and $\normo{v}$, we can create a list of candidates for $r$ of size at most $2\ln n$ 
	(starting from $r = 1/n$ and finishing at $r = 1$).
	
	For all $s\in \cS_t$ and for all candidates for $r$ we compute $\hat{\bm v}(s)$ as in \cref{lem:correlation-with-argmax-perturbations}. By \cref{lem:correlation-with-argmax-perturbations},  we get a list of vectors $\bm L$ of size $2\ln (n)\cdot \card{\cS_t}$ such that with probability $1-20\ln(n)d^{-10} \ge 1 - d^{-9}$  there exists $\bm v^* \in \bm L$ such that $\abs{\iprod{\bm v^*, v}} \ge 1 - \tilde{\delta} / 10$. 
	
	By  \cref{fact:k-sparse-norm-gaussian}, \cref{lem:spectral-norm-cross}, \cref{lem:perturbation-inner-product},
	$k'$-sparse norm of $\bm Y^\top \bm Y - n\Id - \beta\norm{\bm u}^2 vv^\top$ is bounded
	by
	\[
	10\sqrt{nk'\ln\Paren{ed/k'}}+ 10\sqrt{\beta n k'\ln\Paren{ed/k'}} +  b^2 k' + 10b\sqrt{k'\Paren{\beta n + n}}
	\]
	with probability at least $1-3d^{-9}$. Let us show that if  
	\[
	k' \lesssim \min\Set{\delta \beta n/b^2, \frac{\delta^2\beta^2 n}{\Paren{1+\beta}\ln d}, \frac{\delta^2\beta^2 n}{\Paren{1+\beta}b^2\ln d} }\,,
	\]
	then $k'$ the $k'$-sparse norm is bounded by $\delta \beta n$.  Indeed, if $\beta \ge 1$, then
	\[
	\frac{\delta^2\beta^2 n}{\Paren{1+\beta}\ln d} \ge \frac{\delta^2\beta n}{2\ln d} \gtrsim k\sqrt{\frac{n}{t \ln^2 d}} \gtrsim k/\delta^2\,,
	\]
	and if $\beta < 1$,
	\[
	\frac{\delta^2\beta^2 n}{\Paren{1+\beta}\ln d} \ge \frac{\delta^2\beta^2 n}{2\ln d} \gtrsim \frac{k^2}{\delta^2 t \ln d} \gtrsim k/\delta^2\,,
	\]
	and by our bound on $b$, 
	\[
	k/\delta^2 \lesssim \min\Set{\delta \beta n/b^2, \frac{\delta^2\beta^2 n}{\Paren{1+\beta}b^2\ln d} }\,.
	\]
	
	By \cref{lem:list-decoding}, for $k' = \lceil100k/\delta^2 \rceil$,
	we can compute a $k'$-sparse vector unit vector $\hat{\bm v}(k')$ 
	such that
	$\Abs{\Iprod{\hat{\bm v}(k'), v}}\ge 1 - \delta$.
\end{proof}
		\section{The Wigner Model}
\label{sec:Wigner}

\subsection{Classical Settings}

In classical settings the input is an $d\times d$ matrix $\bm Y = \lambda v v^\top + \bm W$, 
where $v\in \R^d$ is a $k$-sparse unit vector, $\bm W\sim N(0,1)^{d\times d}$.
The goal is to compute a unit vector $\hat{\bm v}$ such that $\abs{\iprod{\hat{\bm v}, v}} \ge 0.99$ 
with high probability (with respect to the randomness of $\bm W$).

In this section we prove the following theorem.
	\begin{theorem}\label{thm:wigner-classical-technical}
	Let $d,k,t\in \N$, $\lambda > 0$, $0 < \delta < 0.1$. 
	Let $\bm Y = \sqrt{\beta}\bm u v^\top + \bm W$, 
	where $v\in \R^d$ is a $k$-sparse unit vector, $\bm W\sim N(0,1)^{d\times d}$.
	
	Suppose that
	$k\gtrsim {\frac{t\ln d}{\delta^2}}$, and
	\[
	\lambda \gtrsim \frac{k}{\delta^2}\sqrt{\frac{\ln\Paren{2 + {td}/{k^2}}}{t}}\,.
	\]
	
	Then there exists an algorithm that, given $\bm Y$, $k$, $t$ and $\delta$, 
	in time $d^{O(t)}$ outputs a unit vector $\hat{\bm v}$ such that
	with probability $1-o(1)$ as $d\to \infty$,
	\[
	\abs{\iprod{\hat{\bm v}, v}} \ge 1 - \delta\,.
	\]
\end{theorem}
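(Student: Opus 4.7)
The plan is to mirror the Wishart argument from \cref{sec:Wishart}, but directly on $\bm Y = \lambda vv^\top + \bm W$ rather than on the empirical covariance. This is technically simpler: there are no cross terms in $\bm Y$, and instead of controlling a Wishart matrix we only have to control a single Gaussian matrix, so the factor $\sqrt{1+\beta}$ that appears everywhere in the Wishart bounds collapses to a constant, and the scale $tn$ is replaced by $t$.

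For each $s \in \cS_t$ and threshold $r > 0$ I define $\bm z_s(r) \in \set{0,1}^d$ by setting $\bm z_{si}(r) = 1$ when $s_i \ne 0$ or when $|(\bm Y s)_i| \ge r\cdot t$, and $\bm z_{si}(r) = 0$ otherwise. Fixing $s^* \in \argmax_{s \in \cS_t} \iprod{s,v}$ and $r \asymp \lambda\delta/k$, I first prove the Wigner analogue of \cref{lem:wishart-signal-preserved}: either the $t$ largest coordinates of $v$ already carry mass $\ge 1-\delta$ (and we are done immediately), or each of them has absolute value at least $\sqrt{\delta/k}$. In the second case the decomposition $\bm Y s^* = \lambda \iprod{v,s^*} v + \bm W s^*$ combined with the Gaussian tail bound applied to $(\bm W s^*)_i \sim N(0,t)$ shows that on the set $\cL = \set{i : |v_i| \ge \sqrt{\delta}/(10\sqrt{k})}$ at most a $\delta/10$-fraction of the mass is erased by the thresholding, yielding $\norm{v\circ \bm z_{s^*}(r)}^2 \ge 1-\delta$ via \cref{lem:small-perturbation-thresholding}.

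Next I estimate the size of the support of $\bm z_s(r)$: for $i \notin \supp(s)$ one has $(\bm W s)_i \sim N(0, t)$, so the probability to be adjacent is $p \le \exp(-r^2 t / 2)$, and by the Chernoff argument of \cref{lem:wishart-number-of-nonzero-entries} at most $O(pd + k)$ coordinates are selected with high probability. With $r \asymp \lambda\delta/k$ and the hypothesis $\lambda \gtrsim (k/\delta^2)\sqrt{\ln(2+td/k^2)/t}$, we obtain $p \lesssim k^2/(td)$, so the relevant principal submatrix has size $m \lesssim k^2/t + k$. A uniform bound on the spectral norm of $\bm W$ restricted to an $m\times m$ principal submatrix (\cref{lem:spectral-bound} specialised to the Wigner case) then yields $\norm{\bm W \circ (\bm z_{s^*}\bm z_{s^*}^\top)} \lesssim \sqrt{m\ln(ed/m)} \lesssim \delta\lambda$, and the standard perturbation bound \cref{lem:maximizer-lemma} gives $|\iprod{\hat{\bm v}(s^*), v}| \ge 1 - O(\delta)$ for the top eigenvector of $\bm Y \circ (\bm z_{s^*}\bm z_{s^*}^\top)$.

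Since $\lambda$ is unknown, I iterate $r$ over an $O(\log d)$-size dyadic grid and over all $s \in \cS_t$, producing a list $\bm L$ of size $d^{O(t)}$ that, with high probability, contains at least one vector $(1-O(\delta))$-correlated with $v$. To select $\hat{\bm v}$ from $\bm L$ I use the $k'$-sparse operator norm bound for $\bm W$ (\cref{fact:k-sparse-norm-gaussian}) and pick $\hat{\bm v}$ maximising $x^\top \bm Y x$ over $O(k/\delta^2)$-sparsifications of vectors in $\bm L$, exactly as in \cref{lem:list-decoding}. The main obstacle is the quantitative balancing in the choice of $r$: the threshold must be large enough that the noise adjacency probability $p$ shrinks $d$ to something of order $k^2/t$, yet small enough that signal entries of size $\sqrt{\delta}/\sqrt{k}$ still satisfy $|\lambda\iprod{v,s^*}v_i| \ge 2rt$. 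This calibration is the Wigner analogue of \cref{lem:spectral-bound-technical} and is precisely where the exponent in the hypothesis $\lambda \gtrsim (k/\delta^2)\sqrt{\ln(2+td/k^2)/t}$ is forced.
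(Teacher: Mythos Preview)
Your proposal is correct and follows essentially the same approach as the paper's own proof: the same indicator vectors $\bm z_s(r)$, the same signal-preservation argument via the set $\cL$ and \cref{lem:small-perturbation-thresholding}, the same spectral bound on $\bm W$ restricted to the selected $O(pd+k)$ coordinates, and the same list-decoding finish via \cref{lem:list-decoding} after a dyadic search over~$r$. The paper packages these steps as \cref{lem:wigner-erasing-probability}--\cref{lem:wigner-correlation-with-argmax}, but the content is identical to what you outline.
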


	As for Wishart model, we define vectors $\bm z_{s}(r)$ that we will use in the algorithm. 
	Recall the definition of $\cS_t$: for $t\in \N$ such that $1\le t \le k$ we denote by$\cS_t$  
	the set of all $d$-dimensional vectors with values in $\set{-1,0,1}$ 
	that have exactly $t$ nonzero coordinates.
	For $s\in \cS_t$ let  $\bm z_s$  be $d$-dimensional (random) vectors defined as 
	
\[
\bm z_{si}\Paren{r} =  
\begin{cases}
\ind{\Paren{\bm Y s}_i = \iprod{s, \bm Y_i} \ge r\cdot t} \quad \text{if } s_i = 0\\
1 \qquad\qquad\qquad\quad\;\; \text{otherwise}
\end{cases}
\]
	
	for some $r > 0$ (here $\bm Y_i$ denotes the $i$-th row of $\bm Y$).
	
	\begin{lemma}\label{lem:wigner-erasing-probability}
		If $i\notin \supp\paren{v}$, then for all $s\in  \cS_t$ and $r>0$,
		\[
		\Pr\Brac{\iprod{s, \bm Y_i}\ge r\cdot t} \le \exp\Paren{-t r^2 / 2}\,.
		\]
		\begin{proof}
			Since $v_i = 0$,
			\[
			\iprod{s, \bm Y_i} = \iprod{s, \bm W_i} \sim N(0, t)\,.
			\]
			The lemma follows from the tail bound for Gaussian distribution (\cref{fact:gaussian-tail-bound}).
		\end{proof}
	\end{lemma}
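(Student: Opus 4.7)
The plan is to observe that this lemma reduces to a routine Gaussian tail bound once we decompose $\bm Y_i$. Since $\bm Y = \lambda v v^\top + \bm W$, the $i$-th row of $\bm Y$ is $\lambda v_i v^\top + \bm W_i$. The hypothesis $i \notin \supp(v)$ gives $v_i = 0$, so the signal part of the $i$-th row vanishes and $\bm Y_i = \bm W_i$. Consequently $\iprod{s, \bm Y_i} = \iprod{s, \bm W_i}$, which is a linear functional of an i.i.d.\ standard Gaussian vector.

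Next, I would compute the variance of $\iprod{s, \bm W_i}$. Since $s\in \cS_t$ has entries in $\{-1,0,1\}$ with exactly $t$ nonzero coordinates, $\Norm{s}^2 = t$, so $\iprod{s, \bm W_i} \sim N(0, t)$. Equivalently, $\iprod{s, \bm W_i}/\sqrt{t}$ is a standard Gaussian. The event $\iprod{s, \bm Y_i} \ge rt$ is then the event that this standard Gaussian exceeds $r\sqrt{t}$, so the standard one-sided Gaussian tail bound (\cref{fact:gaussian-tail-bound}) yields
\[
\Pr\Brac{\iprod{s, \bm Y_i} \ge r t} \le \exp\Paren{-\tfrac{1}{2}(r\sqrt{t})^2} = \exp\Paren{-t r^2/2},
\]
which is the claimed inequality.

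There is no real obstacle here; the only thing to verify is that the independence of $\bm W_i$ from $v$ and $s$ (which is immediate, as $s$ is deterministic and $v$ is the planted signal, not a function of $\bm W_i$) is enough to treat $\iprod{s, \bm W_i}$ as a genuine $N(0, \Norm{s}^2)$ random variable. This is why the statement crucially restricts to $i \notin \supp(v)$: otherwise the signal term $\lambda v_i v^\top s$ would contribute a nonzero shift $\lambda v_i \iprod{v,s}$, and we would instead obtain a one-sided bound centered at that shift (analogous to \cref{lem:wishart-erasing-probability} in the Wishart case, where the conditioning on $\Norm{\bm u}$ played the analogous role).
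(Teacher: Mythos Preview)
Your proof is correct and follows essentially the same approach as the paper: reduce to $\iprod{s,\bm W_i}\sim N(0,t)$ using $v_i=0$, then apply the standard Gaussian tail bound (\cref{fact:gaussian-tail-bound}). The only difference is that you spell out the variance computation and the normalization explicitly, whereas the paper states it in one line.
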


	\begin{lemma}\label{lem:wigner-signal-preserved}
		Let $0 < \delta < 0.1$, $r \le  \frac{\lambda \delta }{100k}$ 
	and suppose that $\lambda \gtrsim \frac{k}{\delta^2\sqrt{t}}$ and
	$k\gtrsim {\frac{\ln d}{\delta^2}}$. 
	
	Let $s^* \in \argmax_{s\in \cS_t}\iprod{s, v}$.
	Then, with probability $1-d^{-10}$
	\[
	\norm{v\circ \bm z_{s^*}}^2 = \iprod{v\circ \bm z_{s^*}, v} \ge 1 - \delta\,.
	\]
	\begin{proof}
				To simplify the notation we write $\bm z_{si}$  instead of $\bm z_{si}\Paren{r}$. 
		Let $\cL_\delta = \Set{i \in [d] \suchthat \abs{v_i} \ge \frac{\sqrt{\delta}}{10\sqrt{k}}}$.
				  For all $i\in \cL$, $\beta \norm{\bm u}^2  \Iprod{v,s^*} \abs{v_i}\ge 10rt$. Note that $\norm{v_\cL}^2 \ge 1 - \delta / 10$. 
				  
		As in the proof of \cref{lem:wishart-signal-preserved}, we can assume that 
		 \[
		 \iprod{s^*, v} = \normo{v_T} \ge \sqrt{\delta}t/\sqrt{k}\,.
		 \]
				
		 We need to bound the norm of $\bm Ws^* \sim N(0, t\cdot \Id)$ restricted to the entries of $\cL$. 
		 By \cref{fact:chi-squared-tail-bounds}, with probability $1-\exp(-k/10)$,
		 \[
		 \Norm{\Paren{\bm Ws^*}_\cL}\le 2\sqrt{kt} \le \frac{\delta}{100} \Norm{\lambda \Iprod{v, s^*} v}\,.
		 \]
		
		By \cref{lem:small-perturbation-thresholding}, 
		\[
		\norm{v_{\cL} \circ \bm z_{s^*}}^2 \ge \Paren{1-\delta/10} \norm{v_\cL}^2 \ge 1 - \delta\,.
		\]
	\end{proof}
\end{lemma}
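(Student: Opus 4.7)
The plan is to decompose $\bm Y s^* = \lambda \iprod{v, s^*}\cdot v + \bm W s^*$ and show that on the set of ``heavy'' coordinates $\cL = \{i \in [d] : |v_i| \ge \sqrt{\delta}/(10\sqrt{k})\}$, the signal $\lambda v_i \iprod{v, s^*}$ dominates both the threshold $rt$ and the corresponding entries of the noise $\bm W s^*$. This forces $\bm z_{s^*,i}(r) = 1$ for enough indices $i \in \cL$ that essentially all the mass of $v$ is captured.

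First I would check the easy bookkeeping step that $\|v_\cL\|^2 \ge 1 - \delta/10$: the complement of $\cL$ contains at most $k$ nonzero entries (since $v$ is $k$-sparse), each of squared magnitude less than $\delta/(100k)$, contributing total squared mass at most $\delta/100$. Next I would lower bound $\iprod{v, s^*}$ by splitting into cases on $T = \supp(s^*)$: if $\|v_T\|^2 \ge 1 - \delta$ the conclusion is immediate because entries in $T$ are preserved by $\bm z_{s^*}$ by definition. Otherwise, since $T$ indexes the $t$ largest magnitudes of the $k$-sparse vector $v$ and the off-$T$ mass is at least $\delta$, every entry of $T$ must have magnitude at least $\sqrt{\delta/k}$ (else the off-$T$ mass could not be packed into $\le k$ coordinates each of smaller magnitude), giving $\iprod{v, s^*} = \|v_T\|_1 \ge t\sqrt{\delta/k}$.

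Now the signal-versus-threshold check at $i \in \cL$ reads $\lambda |v_i \iprod{v, s^*}| \ge \lambda \cdot \tfrac{\sqrt{\delta}}{10\sqrt{k}} \cdot \tfrac{\sqrt{\delta}\, t}{\sqrt{k}} = \tfrac{\lambda \delta t}{10 k} \ge 10\,rt$ by the hypothesis $r \le \lambda\delta/(100k)$. For the noise, $(\bm W s^*)_\cL \sim N(0, t\cdot \Id_{|\cL|})$ with $|\cL| \le k$, so by the chi-squared tail bound (\cref{fact:chi-squared-tail-bounds}) $\|(\bm W s^*)_\cL\| \le 2\sqrt{tk}$ with probability at least $1 - \exp(-k/10) \ge 1 - d^{-10}$ thanks to $k \gtrsim \ln d/\delta^2$. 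Combining this with $\lambda \gtrsim k/(\delta^2\sqrt{t})$ and the lower bound $\iprod{v, s^*} \ge \sqrt{\delta}\,t/\sqrt{k}$ gives $\sqrt{tk} \le \tfrac{\delta}{100} \|\lambda \iprod{v, s^*}\, v_\cL\|$, so the noise is negligible compared to the signal on $\cL$.

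The final step is to invoke the small-perturbation thresholding lemma (\cref{lem:small-perturbation-thresholding}) to conclude $\|v_\cL \circ \bm z_{s^*}\|^2 \ge (1 - \delta/10)\|v_\cL\|^2 \ge 1 - \delta$. The main obstacle is the dichotomy on $\|v_T\|^2$: in the ``peaky'' regime where $v$ has a few coordinates much larger than $1/\sqrt{k}$ one cannot guarantee $\iprod{v, s^*} = \Omega(t/\sqrt{k})$, but in that regime the peaky mass already lies in $T$ (since $s^*$ picks the $t$ largest magnitudes with matching signs), and the conclusion is trivial because coordinates of $T$ are preserved by definition of $\bm z_{s^*}$.
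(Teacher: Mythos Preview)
Your proposal is correct and follows essentially the same route as the paper's proof: define the heavy set $\cL$, argue $\|v_\cL\|^2\ge 1-\delta/10$, handle the dichotomy on $\|v_T\|^2$ (which the paper only sketches by referring back to \cref{lem:wishart-signal-preserved}), bound $\|(\bm Ws^*)_\cL\|$ via the chi-squared tail, compare noise to signal using $\lambda\gtrsim k/(\delta^2\sqrt t)$, and finish with \cref{lem:small-perturbation-thresholding}. Your write-up is in fact slightly more explicit than the paper's on the case split and on the signal-versus-threshold check at each $i\in\cL$.
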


\begin{lemma}\label{lem:wigner-spectral-bound}
	Let $p = \exp\Paren{-t r^2/2}$.
	Then, with probability $1-d^{-10}$,
	\[
	\max_{s\in \cS_t} \Norm{\bm W \circ \Paren{\bm z_s\Paren{r}\bm z^\top_s\Paren{r}} } \le 10\sqrt{\Paren{pd + k} \cdot \ln\Paren{\frac{d}{pd + k}}}\,.
	\]
\end{lemma}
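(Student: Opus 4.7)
The plan is to first control the size of $\supp(\bm z_s(r))$ uniformly over $s \in \cS_t$, and then to invoke a uniform spectral-norm bound over all principal submatrices of $\bm W$ of the corresponding size. Since $\bm W \circ (\bm z_s(r) \bm z_s^\top(r))$ is (after zero-padding) the principal submatrix of $\bm W$ indexed by $\supp(\bm z_s(r))$, these two ingredients compose to give the claim.

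For the first step, fix $s \in \cS_t$. By \cref{lem:wigner-erasing-probability}, for every index $i \notin \supp(v) \cup \supp(s)$ we have $\Pr[\bm z_{si}(r) = 1] \le p$, and these events are mutually independent because they depend on disjoint rows of $\bm W$. A Chernoff-type estimate in the spirit of \cref{lem:wishart-number-of-nonzero-entries} then gives that the number of such indices with $\bm z_{si}(r) = 1$ is at most $C(pd + k)$ with probability at least $1 - d^{-10 - t}$; here the hypothesis $k \gtrsim t \ln d$ ensures the Chernoff exponent $\Omega(pd + k) \ge \Omega(t \ln d)$ is comfortable. Adding the at most $k + t$ indices in $\supp(v) \cup \supp(s)$ yields $|\supp(\bm z_s(r))| \le C'(pd + k)$ with the same probability. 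A union bound over $|\cS_t| \le (2d)^t$ then establishes this size bound simultaneously for every $s \in \cS_t$ with probability at least $1 - d^{-10}$.

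For the second step, I would invoke a standard bound on the sparse spectral norm of a symmetric Gaussian matrix, essentially the Wigner analog of the submatrix bound underlying \cref{lem:spectral-bound}. Concretely, any fixed principal $m \times m$ submatrix of $\bm W$ has spectral norm at most $O(\sqrt{m})$ with probability at least $1 - \exp(-c m)$ by the Davidson--Szarek estimate, and a union bound over the $\binom{d}{m} \le (ed/m)^m$ choices of subset and over all $m \le C'(pd + k)$ shows that every such principal submatrix has spectral norm at most $10 \sqrt{m \ln(ed/m)}$ with probability at least $1 - d^{-10}$.

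Combining the two high-probability events, for every $s \in \cS_t$ the matrix $\bm W \circ (\bm z_s(r) \bm z_s^\top(r))$ is a principal submatrix of $\bm W$ of size at most $C'(pd + k)$, and therefore has spectral norm at most $10 \sqrt{(pd + k) \ln(ed/(pd + k))}$ after adjusting absolute constants to absorb $C'$. The only point that requires any care is that the support-size control must hold uniformly across the exponentially large set $\cS_t$; this is precisely where the hypothesis $k \gtrsim t \ln d$ enters, so that the per-$s$ Chernoff failure probability beats $|\cS_t| \cdot d^{-10}$. No other step is delicate: once the size bound holds uniformly, the uniform Gaussian submatrix bound is independent of $s$ and completes the argument.
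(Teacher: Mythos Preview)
Your proposal is correct and follows essentially the same two-step approach as the paper: bound $|\supp(\bm z_s(r))|$ by $O(pd+k)$ via Chernoff, then bound the spectral norm of \emph{every} principal submatrix of $\bm W$ of that size via the Gaussian operator-norm concentration (\cref{fact:bound-covariance-gaussian}) and a union bound over subsets. You are in fact more careful than the paper on one point: the statement asks for a bound uniform over $s\in\cS_t$, and you correctly make the union bound over $\cS_t$ explicit in the size step, invoking the ambient hypothesis $k\gtrsim t\ln d$ so that the per-$s$ Chernoff exponent $\Omega(pd+k)\ge\Omega(t\ln d)$ beats $\ln|\cS_t|$; the paper's write-up of that step is terser and leaves this implicit.
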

\begin{proof}
Using the same argument as in the proof of \cref{lem:wishart-spectral-bound},
we get that the number of nonzero $\bm z_{si}(r)$ for every $s\in \cS_t$ 
is bounded by $2pd + 2\sqrt{pd t\ln \paren{n/t}} + 2k \le 4pd  + 4k$ with probability at least $1-\exp\Paren{-pd - t\ln \paren{d/t} }$.
	
	By \cref{fact:bound-covariance-gaussian}, 
	an $m\times m$ Gaussian matrix $\bm G$ satisfies
	\[
	\Norm{\bm G} \le 2\sqrt{m} + \sqrt{\tau}
	\]
	with probability $1-\exp(-\tau / 2)$ (for every $\tau > 0$).
	By union bound over all sets of size at most $4pd  + 4k$ 
	(corresponding to nonzero rows and columns of $\bm W \circ \Paren{\bm z_s\Paren{r}\bm z^\top_s\Paren{r}} $), 
	we get the desired bound.
\end{proof}
	
\begin{lemma}\label{lem:wigner-correlation-with-argmax}
Let $0 < \delta < 0.1$ and  $\frac{\delta\lambda}{200k} \le r \le \frac{\delta\lambda}{100k}$. 
For $s\in \cS_t$ let $\hat{\bm v}(s)$ be the top eigenvector of $\bm Y\circ \Paren{\bm z_s\Paren{r}\bm z^\top_s\Paren{r}}$. 

Suppose that
	$k\gtrsim {\frac{t\ln d}{\delta^2}}$, and
\[
\lambda \gtrsim \frac{k}{\delta^2\sqrt{t}}\sqrt{\ln\Paren{2 + \frac{td}{k^2}}}\,.
\]

Then there exists $s'\in \cS_t$ such that with probability $1-2d^{-10}$,
\[
\abs{\iprod{\hat{\bm v}(s'), v}} \ge 1 - 20\delta\,.
\]
\end{lemma}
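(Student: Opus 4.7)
The strategy mirrors the proof of Lemma~\ref{lem:wishart-correlation-with-argmax}, with the simplifications afforded by the Wigner model (no cross terms, no sample-size factor). I would take $s' = s^* \in \argmax_{s\in \cS_t}\iprod{s,v}$ and set $\tilde{\bm v} = v \circ \bm z_{s'}(r)$. Using the pointwise identity $(vv^\top)\circ(zz^\top) = (v\circ z)(v\circ z)^\top$, the thresholded input matrix decomposes as
\[
\bm Y \circ \Paren{\bm z_{s'}(r)\bm z^\top_{s'}(r)} \;=\; \lambda\,\tilde{\bm v}\tilde{\bm v}^\top \;+\; \bm W \circ \Paren{\bm z_{s'}(r)\bm z^\top_{s'}(r)}\,,
\]
so I am back to a rank-one signal-plus-noise model with signal strength $\lambda\Norm{\tilde{\bm v}}^2$.

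Next I would control the noise operator norm. Lemma~\ref{lem:wigner-spectral-bound} gives, with probability at least $1-d^{-10}$,
\[
\Bignorm{\bm W \circ \Paren{\bm z_{s'}(r)\bm z^\top_{s'}(r)}} \;\le\; 10\sqrt{(pd+k)\ln\tfrac{d}{pd+k}}\,,\qquad p=\exp(-tr^2/2).
\]
With the choice $r \asymp \delta\lambda/k$, the exponent satisfies $tr^2/2 \asymp t\delta^2\lambda^2/k^2 \gtrsim \delta^{-2}\ln(2+td/k^2)$ by the hypothesis on $\lambda$, so $pd \lesssim k^2/t \le k$ and the bound reduces to $O(\sqrt{k\ln(d/k)})$. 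Combining the hypotheses $k \gtrsim t\ln d/\delta^2$ and $\lambda \gtrsim \tfrac{k}{\delta^2\sqrt{t}}\sqrt{\ln(2+td/k^2)}$ yields $\sqrt{k\ln(d/k)} \le \delta\lambda/10$ (this is the Wigner analogue of Lemma~\ref{lem:spectral-bound-technical}; the absence of a factor $\sqrt{n}$ and of cross terms makes it strictly easier).

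With $\Norm{\bm N(s')}\le \delta\lambda/10$ in hand, I would apply the standard PCA statement (Lemma~\ref{lem:maximizer-lemma}) to the rank-one-plus-noise decomposition, obtaining $\abs{\iprod{\hat{\bm v}(s'), \tilde{\bm v}/\Norm{\tilde{\bm v}}}} \ge 1 - O(\delta)$. Finally, Lemma~\ref{lem:wigner-signal-preserved}, applied with the same $r$, gives $\Norm{\tilde{\bm v}}^2 = \iprod{\tilde{\bm v}, v} \ge 1-\delta$ with probability $1-d^{-10}$, so $\tilde{\bm v}/\Norm{\tilde{\bm v}}$ is $O(\sqrt{\delta})$-close to $v$ in angle. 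Chaining these two correlations via Fact~\ref{fact:three-close-vectors} (with a union bound over the two events of failure probability $d^{-10}$) delivers $\abs{\iprod{\hat{\bm v}(s'), v}} \ge 1-20\delta$, as required.

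The only step that requires care is the parameter bookkeeping in the noise bound (Step~3): making sure that the choice $r \asymp \delta\lambda/k$ simultaneously makes $pd \lesssim k$ and keeps $\sqrt{k\ln(d/k)} \le \delta\lambda$. Both checks reduce to the hypotheses on $k$ and $\lambda$ and involve no probabilistic subtlety; all genuinely probabilistic work has already been packaged into Lemmas~\ref{lem:wigner-erasing-probability}, \ref{lem:wigner-signal-preserved}, and \ref{lem:wigner-spectral-bound}.
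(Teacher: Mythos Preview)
Your proposal is correct and follows essentially the same approach as the paper: choose $s'=s^*\in\argmax_{s\in\cS_t}\iprod{s,v}$, decompose the masked matrix as $\lambda\tilde{\bm v}\tilde{\bm v}^\top$ plus masked noise, bound the noise via Lemma~\ref{lem:wigner-spectral-bound} together with the Wigner analogue of Lemma~\ref{lem:spectral-bound-technical}, apply Lemma~\ref{lem:maximizer-lemma}, then invoke Lemma~\ref{lem:wigner-signal-preserved} and chain via Fact~\ref{fact:three-close-vectors}. The parameter bookkeeping you outline is exactly what the paper does (it even refers to ``the same argument as in \cref{lem:spectral-bound-technical} with $n=d$''), and your write-up is if anything cleaner than the paper's, which contains a couple of leftover Wishart-model typos.
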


\begin{proof}
	Let $s' \in \argmax_{s\in \cS_t}\iprod{s, v}$ 
and let $\tilde{\bm v} = v\circ \bm z_{s'}\Paren{r}$. Then
\[
\bm Y\circ \Paren{\bm z_s\Paren{r}\bm z^\top_s\Paren{r}} = \beta \norm{\bm u}^2\tilde{\bm v} \tilde{\bm v}^\top + 
\bm W \circ \Paren{\bm z_s\Paren{r}\bm z^\top_s\Paren{r}} \,,
\]
By \cref{lem:wigner-spectral-bound} and the same arument as in \cref{lem:spectral-bound-technical} with $n = d$, with probability at least $1 - d^{-10}$,
\[
\Norm{\bm W \circ \Paren{\bm z_s\Paren{r}\bm z^\top_s\Paren{r}} } \le \delta \lambda\,.
\]

By \cref{lem:maximizer-lemma},
$\abs{\iprod{\hat{\bm v}(u^*), \tilde{\bm v}}} \ge 1-3\delta$.
By \cref{lem:wigner-signal-preserved}, with probability $1-d^{-10}$, $\iprod{\tilde{\bm v}, v} \ge 1 - \delta$. 
Therefore, by \cref{fact:three-close-vectors},
\[
\Abs{\iprod{\hat{\bm v}(s'), v}} \ge 1 - 20\delta\,.
\]
\end{proof}

\begin{proof}[Proof of \cref{thm:wigner-classical-technical}] 
	For all $s\in \cS_t$ we compute $\hat{\bm v}(s)$ as in \cref{lem:wigner-correlation-with-argmax} 
	with $\frac{\delta'\lambda}{200k} \le r \le \frac{\delta'\lambda}{100k}$, where $\delta'=\delta / 1000$.
	Since we do not know $\lambda$, we can create a list of candidates for $r$ of size at most $2\ln d$ 
	(from $r = 1/d$ to $r = 1$). 
	
	By \cref{lem:wigner-correlation-with-argmax},  we get a list of vectors $\bm L$ of size $2\ln (d)\card{\cS_t}$ such that with probability $1-d^{-9}$ 
	there exists $\bm v^* \in \bm L$ such that $\abs{\iprod{\bm v^*, v}} \ge 1 - \delta / 10$. 
	By \cref{fact:k-sparse-norm-gaussian}, $k'$-sparse norm of $\bm W$ is bounded by 
   \[
   10\sqrt{k'\ln\Paren{ed/k'}}\,
   \] 
   with probability at least $1-d^{-10}$.
   Hence by \cref{lem:list-decoding} with $k' = \lceil 100k/\delta^2\rceil$, we get the desired estimator.
\end{proof}

\subsection{Adversarial Perturbations}

The following theorem is the restatement of \cref{thm:wigner-results}.
\begin{theorem}\label{thm:wigner-perturbations-technical}
	Let $d,k,t\in \N$, $\lambda > 0$, $0 < \delta < 0.1$. 
	Let $\tilde{Y} = \sqrt{\beta}\bm u v^\top + \bm W + E$, 
	where $v\in \R^d$ is a $k$-sparse unit vector, $\bm W\sim N(0,1)^{n\times d}$, and $E\in \R^{d\times d}$ is a matrix with entries
	\[
	\Normi{E} = \eps \lambda/k \lesssim \delta^3 \lambda/k\,.
	\]
	
	Suppose that
	$k\gtrsim {\frac{t\ln d}{\delta^2}}$  and
	\[
	\lambda \gtrsim \frac{k}{\delta^2}\sqrt{\frac{\ln\Paren{2 + {td}/{k^2}}}{t}}\,.
	\]
	
	Then there exists an algorithm that, given $\tilde{Y}$, $k$, $t$ and $\delta$, 
	in time $d^{O(t)}$ outputs a unit vector $\hat{\bm v}$ such that
	with probability $1-o(1)$ as $d\to \infty$,
	\[
	\abs{\iprod{\hat{\bm v}, v}} \ge 1 - \delta\,.
	\]
\end{theorem}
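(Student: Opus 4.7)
The plan is to follow the template of the proof of Theorem~\ref{thm:wigner-classical-technical} (the classical Wigner case), incorporating the modifications from the proof of Theorem~\ref{thm:wishart-perturbations-technical} that deal with an adversarial perturbation. We define exactly the same thresholding vectors $\bm z_s(r)$ with $\frac{\delta\lambda}{200k}\le r\le \frac{\delta\lambda}{100k}$, but now applied to $\tilde Y s = \lambda\iprod{v,s}\,v + \bm W s + E s$. The crucial additional observation is that, because $\normi{E}\le \eps\lambda/k$ with $\eps\lesssim \delta^3$, for every $s\in\cS_t$ and every $i\in[d]$ we have the deterministic bound $\abs{(Es)_i}\le t\normi{E} = t\eps\lambda/k$, which is much smaller than the threshold $rt\approx \delta\lambda t/k$. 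Thus the contribution of $E$ to the thresholding can be absorbed by replacing $r$ with $r/2$, and the whole analysis of Lemma~\ref{lem:wigner-erasing-probability} goes through with $p$ replaced by $\exp(-tr^2/8)$.

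Next I would adapt Lemma~\ref{lem:wigner-signal-preserved}. For the maximizer $s^*\in\argmax_{s\in\cS_t}\iprod{s,v}$, the deterministic $E$-contribution $\abs{(Es^*)_i}\le t\eps\lambda/k$ is strictly smaller than $\lambda\iprod{v,s^*}\abs{v_i}$ on the set $\cL = \Set{i: \abs{v_i}\ge \sqrt\delta/(10\sqrt k)}$, because $\iprod{v,s^*}\ge \sqrt\delta\,t/\sqrt k$ and $\eps\le \delta^3$. Combining this with the chi-squared bound on $\Norm{(\bm Ws^*)_\cL}$ from the classical proof and applying Lemma~\ref{lem:small-perturbation-thresholding} yields $\iprod{v\circ \bm z_{s^*},v}\ge 1-\delta$ with high probability.

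The main obstacle is the spectral bound on the restricted noise. In the classical case we only needed $\Norm{\bm W\circ(\bm z_s\bm z_s^\top)}\lesssim \sqrt{(pd+k)\ln(d/(pd+k))}$; now we additionally need to bound $\Norm{E\circ(\bm z_s\bm z_s^\top)}$. Since $E\circ(\bm z_s\bm z_s^\top)$ is supported on an $m\times m$ principal block with $m\le pd+k$ and its entries are bounded by $\eps\lambda/k$, its spectral norm is at most $m\cdot \eps\lambda/k$ (using the trivial Frobenius bound). A recalculation of Lemma~\ref{lem:spectral-bound-technical} with $n$ replaced by $d$ shows that under our lower bound on $\lambda$ we have $pd\lesssim k$, so $m\cdot\eps\lambda/k\lesssim \eps\lambda\le \delta^3\lambda\ll \delta\lambda$. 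Therefore $\Norm{(\bm W+E)\circ(\bm z_s\bm z_s^\top)}\le 2\delta\lambda$, and the maximizer lemma (\cref{lem:maximizer-lemma}) together with \cref{fact:three-close-vectors} gives a unit eigenvector $\hat{\bm v}(s^*)$ of $\tilde Y\circ(\bm z_{s^*}\bm z_{s^*}^\top)$ with $\abs{\iprod{\hat{\bm v}(s^*),v}}\ge 1-O(\delta)$, exactly as in Lemma~\ref{lem:wigner-correlation-with-argmax}.

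Finally, the list-decoding step goes through essentially verbatim: iterating over all $s\in\cS_t$ and $O(\log d)$ candidates for $r$ produces a list of size $d^{O(t)}$ containing a vector close to $v$ or $-v$, and Lemma~\ref{lem:list-decoding} (applied with $k'=\lceil 100k/\delta^2\rceil$) recovers $\hat{\bm v}$ with $\abs{\iprod{\hat{\bm v},v}}\ge 1-\delta$. The additional contribution to the $k'$-sparse norm of the noise coming from $E$ is at most $k'\normi{E}=k'\eps\lambda/k\le \eps\lambda/\delta^2\le \delta\lambda$, again using $\eps\le \delta^3$, so list decoding applies and the proof is complete.
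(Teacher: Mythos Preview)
Your outline tracks the paper closely in steps 1, 2 and 5, but step 3 contains a genuine gap that the paper avoids by using the Basic SDP rather than the top eigenvector.

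The problematic claim is that ``a recalculation of \cref{lem:spectral-bound-technical} with $n$ replaced by $d$ shows that under our lower bound on $\lambda$ we have $pd\lesssim k$''. That lemma only gives $\sqrt{(pd+k)\ln\bigl(ed/(pd+k)\bigr)}\lesssim \delta\lambda$; it does \emph{not} imply $pd\lesssim k$. Concretely, in the regime $k\asymp\sqrt{d}$ with $t$ and $\delta$ fixed constants, one has $td/k^2=\Theta(1)$, so $p=\exp(-tr^2/2)$ is a fixed constant (albeit tiny), and $pd\asymp d\gg k=\sqrt d$ for $d$ large. Consequently the Frobenius/spectral bound $\Norm{E\circ(\bm z_s\bm z_s^\top)}\le m\cdot\eps\lambda/k$ with $m=pd+k$ can be of order $\eps\lambda\sqrt d/k\gg\delta\lambda$, and the top-eigenvector argument breaks down: an adversary choosing $E$ to be constant $\eps\lambda/k$ on the surviving block makes this bound tight.

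The paper circumvents this by optimising over $\cP_k=\{X\succeq 0,\ \Tr X=1,\ \Normo X\le k\}$ instead of taking a top eigenvector. The constraint $\Normo X\le k$ yields directly
\[
\bigl|\langle X,\,E\circ(\tilde z_{s^*}\tilde z_{s^*}^\top)\rangle\bigr|\le \Normi{E}\cdot\Normo{X}\le \frac{\eps\lambda}{k}\cdot k=\eps\lambda\le\tilde\delta\lambda,
\]
with \emph{no} dependence on the block size $m$. Combined with the spectral bound on the Gaussian part (which you have correctly), \cref{lem:maximizer-lemma} and \cref{lem:linear-algebra-correlation-eigenverctor-large-quadratic-form} then give that the top eigenvector of the SDP maximiser $\hat X(s^*)$ is close to $v$. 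Your list-decoding step is fine and matches the paper, but to make the argument go through you must replace the top eigenvector of $\tilde Y\circ(\bm z_s\bm z_s^\top)$ by the SDP-based estimator.
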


\begin{proof}
	Let $\tilde{\delta} = \delta / 1000$ and let $r = \frac{\tilde{\delta}}{100 k}$.
	For $s\in \cS_t$ let  $\tilde{z}_{s}$  be $n$-dimensional (random) vectors defined as
	\[
	\tilde{z}_{si}\Paren{r} =  
	\begin{cases}
	\ind{\iprod{s, \tilde{Y}_i} \ge r\cdot t} \quad \text{if } s_i = 0\\
	1 \qquad\qquad\qquad\quad\;\; \text{otherwise}
	\end{cases}
	\]
	Let $\bm Y = \tilde{Y} - E$, and $\bm z_{si}$ be the same as in the non-adversarial case defined for $\bm Y$.
	Note that 
	\[
	\bm z_{si}\Paren{r - \eps\cdot \lambda/k} \le \tilde{z}_{si}\Paren{r} \le \bm z_{si}\Paren{r + \eps\cdot \lambda/k}\,.
	\]
	Let $s^* \in \argmax_{s\in \cS_t}\iprod{s, v}$.
	By \cref{lem:wigner-signal-preserved}, 
	with probability $1-d^{-10}$,
	\[
	\norm{v\circ \tilde{z}_{s^*}}^2 = \iprod{v\circ \tilde{z}_{s^*}, v} \ge \bm z_{si}\Paren{r - \eps\cdot \lambda/k} \ge 1 - 2\tilde{\delta}\,.
	\]
	
	By the argument from the proof of  \cref{lem:wishart-spectral-bound},
	with probability $1-d^{-10}$,  number of nonzero entries of $\tilde{z}_{s}$ is at most $4pd+4k$
	and 
	\[
	\Norm{\bm W\circ\Paren{\tilde{z}_{s^*}\tilde{z}_{s^*}^\top}} \le 10\sqrt{\Paren{pd + k} \cdot \ln\Paren{\frac{d}{pd + k}}}\lesssim \tilde{\delta}^2 \lambda\,.
	\]
	
	Let $\cP_k = \Set{X \in \R^{d\times d} \;\suchthat\; X\succeq 0\,, \Tr{X} = 1 \,, \Normo{X}\le k}$.
	For all $X \in \cP_k$,
	\[
	\Abs{\Iprod{X, \bm W\circ\Paren{\tilde{z}_{s^*}\tilde{z}_{s^*}^\top}}} 
	\le \Norm{\bm W\circ\Paren{\tilde{z}_{s^*}\tilde{z}_{s^*}^\top}} \cdot \Norm{X} 
	\lesssim  \tilde{\delta} \lambda\,.
	\]
	and
	\[
	\Abs{\Iprod{X,  E\circ\Paren{\tilde{z}_{s^*}\tilde{z}_{s^*}^\top}}} 
	\le \Normi{E\circ\Paren{\tilde{z}_{s^*}\tilde{z}_{s^*}^\top}} \cdot\Normo{X} 
	\le \tilde{\delta}\lambda\,.
	\]
	Consider 
	\[
	\hat{{ X}}(s^*) \in 
	\argmax_{X\in \cP} \Iprod{X, \tilde{Y} \circ \Paren{\tilde{z}_{s^*}\tilde{z}_{s^*}^\top}}\,.
	\]
	By \cref{lem:maximizer-lemma},
	\[
	\Iprod{\hat{{ X}}(s^*), vv^\top \circ \Paren{\tilde{z}_{s^*}\tilde{z}_{s^*}^\top} }
	\ge 1 - 6\tilde{\delta}\,.
	\]
	Hence by \cref{fact:three-close-vectors},
	\[
	\Iprod{\hat{{ X}}(s^*), vv^\top}
	\ge 1 - 24\tilde{\delta}\,.
	\]
	
	Let $\hat{v}(s^*) $ be the top eigenvector of $\hat{ X}(s^*)$. 
	By \cref{lem:linear-algebra-correlation-eigenverctor-large-quadratic-form},
	\[
	\Abs{\Iprod{{\hat{v}}(s^*) , v}} \ge 1 - 100\tilde{\delta}\,.
	\]
	By \cref{fact:k-sparse-norm-gaussian}, $k'$-sparse norm of $\bm W$ is bounded by 
	\[
	10\sqrt{k'\ln\Paren{ed/k'}}
	\] 
	with probability at least $1-d^{-10}$.
	And $k'$-sparse norm of $E$ is bounded by
	\[
	k'\Normi{E} \le \eps \lambda \frac{k'}{k}\,.
	\]
	Hence by \cref{lem:list-decoding} with $k' = \lceil 100k/\tilde{\delta}^2\rceil$ and the list of $\bm X(s)$ for all $s\in \cS_t$, 
	we get the desired estimator.
\end{proof}

		\section{Heavy-tailed Symmetric Noise}
\label{sec:Symmetric}

The following theorem is a restatement of \cref{thm:wigner-symmetric-results}.
\begin{theorem}\label{thm:wigner-symmetric-technical}
	Let $k, d, t\in\N$, $\lambda > 0$, $A \ge 1$, $0 < \alpha < 1$, $0 < \delta < 0.1$. 	
	Let 
	\[
	\bm Y = \lambda vv^\top + \bm N\,,
	\] 
	where $v\in \R^d$ is a $k$-sparse unit vector such that $\normi{v}\le A/\sqrt{k}$
	and $\bm N\sim N(0,1)^{d\times d}$ is a random matrix with independent
	(but not necessarily identically distributed) symmetric about zero entries such that 
	\[
	\Pr\Brac{\Abs{\bm N_{ij}} \le 1} \ge \alpha\,.
	\]
	
	Suppose that $k \gtrsim \frac{t \ln d}{\delta^4 A^4\alpha^2}$, 
	\[\
	t \gtrsim  \frac{\ln\Paren{2+ td/k^2}}{\alpha^2 A^4 \delta^{6}}\,,
	\]
	and
	\[
	\lambda \ge k\,.
	\]
	
	Then there exists an algorithm that, given $\bm Y$, $k$, $t$ and $\lambda$, 
	in time $ d^{O(t)}$ finds a unit vector $\hat{\bm v}$ such that
	with probability $1-o(1)$ as $d\to \infty$,
	\[
	\abs{\iprod{\hat{\bm v}, v}} \ge 1-O\Paren{\delta}\,.
	\]
\end{theorem}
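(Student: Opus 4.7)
The plan is to combine the adjacency-submatrix trick from the Wigner-model proof of \cref{thm:wigner-classical-technical} with two additional ingredients tailored to the heavy-tailed symmetric noise: an initial entry-wise truncation that converts the heavy-tailed noise into a bounded symmetric one, and an application of the sparse PCA estimator of \cite{dNNS22} to each of the small submatrices. Since $\lambda$ is known in this theorem, the list-decoding step at the end of \cref{thm:wigner-classical-technical} can be replaced by a simple check on the norm of each candidate output.

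First, I would preprocess by truncating every entry of $\bm Y$ at level $h \asymp A^2 \lambda / k$: let $\bm T$ be the matrix with $\bm T_{ij} = \tau_h(\bm Y_{ij})$, where $\tau_h$ is the Huber truncation defined in \cref{sec:techniques}. Since $\lambda \cdot \Normi{vv^\top} \le A^2 \lambda / k \le h$, the signal is not clipped, and symmetry of $\bm N$ together with the assumption $\Pr[\abs{\bm N_{ij}}\le 1] \ge \alpha$ implies that $\E \bm T$ agrees with $\lambda vv^\top$ up to a small deterministic bias controlled by $\alpha$; moreover $\bm T - \E \bm T$ has independent symmetric entries of absolute value at most $2h$. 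This puts us in the bounded-and-symmetric regime of \cite{dNNS22}, and a Hoeffding-type tail bound replaces the Gaussian tail bound used throughout \cref{sec:Wigner}.

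Second, for each $s \in \cS_t$ I define the adjacency vector $\bm z_s(r)$ exactly as in \cref{sec:Wigner}, but applied to $\bm T$ and with threshold $r \asymp \delta \lambda / k$. For $i \notin \supp(v)$, the quantity $\iprod{s, \bm T_i - \E \bm T_i}$ is a sum of $t$ independent symmetric terms of absolute value at most $2h$, so by Hoeffding
\[
\Pr\bigl[\abs{\iprod{s, \bm T_i - \E \bm T_i}} \ge rt\bigr] \le 2\exp\bigl(-\Omega(r^2 t / h^2)\bigr) \le k^2/(td)
\]
under our hypothesis on $\lambda$. The factor $h^2 \asymp (\lambda/k)^2$ in the denominator of the exponent is precisely the reason why the required signal strength is $\lambda \gtrsim k\ln(2+td/k^2)/\sqrt{t}$ rather than $k\sqrt{\ln(2+td/k^2)/t}$ as in the Gaussian case. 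Signal preservation for $s^* \in \argmax_{s\in \cS_t}\iprod{s,v}$ follows by essentially the same argument as \cref{lem:wigner-signal-preserved}, using $\Normi{v} \le A/\sqrt{k}$ to lower-bound the surviving fraction of $\Norm{v}^2$. The resulting principal submatrix $\bm H(s^*)$ has effective dimension $d' \lesssim k^2/t + k$, on which the signal part is still close to $\lambda vv^\top$.

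Third, for each $s \in \cS_t$ I would run the estimator of \cite{dNNS22} on $\bm H(s)$ with sparsity $k$ and signal strength $\lambda$; since $k \gtrsim \sqrt{d'}$, this is the regime where the estimator of \cite{dNNS22} runs in polynomial time and, whenever its input has the form ``rank-one sparse signal plus bounded symmetric noise with the $\alpha$-concentration'', returns a matrix within $O(\delta)\lambda$ of $\lambda vv^\top$ in spectral norm. The failure probability of \cite{dNNS22} can be driven below $d^{-Ct}$ for any constant $C$, which suffices to union-bound over the $\abs{\cS_t}\le d^{O(t)}$ choices of $s$. Having produced candidate matrices $\hat{\bm X}(s)$, I output the top eigenvector of the $\hat{\bm X}(s)$ whose top eigenvalue is closest to $\lambda$, and conclude via \cref{lem:linear-algebra-correlation-eigenverctor-large-quadratic-form} and \cref{fact:three-close-vectors} that it is $(1-O(\delta))$-correlated with $v$.

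The main obstacle is that $\bm H(s)$ is data-dependent: the adjacency vector $\bm z_s$ is a function of $\bm T s$, so its entries correlate with the noise coordinates inside $\bm H(s)$. The resolution is to observe that $\bm z_s$ depends only on the $t$ columns of $\bm N$ indexed by $\supp(s)$, so conditionally on those columns the remaining entries of $\bm H(s)$ are still independent symmetric and bounded by $2h$, hence the hypotheses of \cite{dNNS22} are satisfied. A secondary point to verify is that the bias $\E\bm T - \lambda vv^\top$ introduced by truncation stays within the perturbation tolerance of \cite{dNNS22}; this forces $h$ into the narrow window $A^2\lambda/k \lesssim h \lesssim \lambda/(\alpha k)$, and the hypothesis $k \gtrsim t\ln d/(\delta^4 A^4 \alpha^2)$ is exactly what makes this window non-empty while simultaneously keeping the noise tail estimate $p\le k^2/(td)$.
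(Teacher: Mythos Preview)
Your overall architecture matches the paper's: truncate $\bm Y$ at level $h\asymp A^2\lambda/k$ to form $\bm T$, build adjacency vectors $\bm z_s$ from $\bm T$, feed each induced submatrix to the \cite{dNNS22} Huber-loss estimator, and select the output whose norm is near $\lambda$. Two of your steps, however, do not go through as written.

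First, the conditioning argument you propose for data-dependence has a real gap. You correctly note that $\bm z_s$ is measurable with respect to the $t$ columns of $\bm N$ indexed by $\supp(s)$, but those $t$ columns sit inside the submatrix $\bm H(s)$ (since $\supp(s)\subseteq\bm Z(s)$). After conditioning they are deterministic and hence not symmetric about zero, so the hypothesis of \cref{thm:meta-theorem} fails on them; ``the remaining entries are fine'' is not enough. The paper sidesteps this entirely: it does \emph{not} union-bound over $\cS_t$, but over all subsets $Q\subset[d]$ of size at most $4pd+4k=O(k)$. For each fixed $Q$ the noise $\bm N_Q$ keeps the independent-symmetric property, so \cref{thm:meta-theorem} applies with failure probability $(k/ed)^{O(k)}$, and the extra $\sqrt{\log(1/\delta)}$ this costs is exactly the $\sqrt{(pd+k)\ln(d/(pd+k))}$ factor in the final Frobenius bound. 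Since $\bm Z(s)$ is some such $Q$, the estimator succeeds on it regardless of how $\bm Z(s)$ was chosen. Relatedly, the paper applies the Huber minimizer to the submatrix of the \emph{raw} $\bm Y$, not of $\bm T$; the truncation is used only to select $\bm Z(s)$, so no truncation bias ever enters the estimation error.

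Second, signal preservation is not ``essentially the same'' as \cref{lem:wigner-signal-preserved}. In the Gaussian case one bounds $\|(\bm Ws^*)_{\cL}\|$ via a $\chi^2$ tail; here the entries of $\bm N s^*$ have arbitrarily heavy tails, so that route is closed, and your claim that $\bm T-\E\bm T$ has symmetric entries is also false on $\supp(v)\times\supp(v)$ (truncating a symmetric variable around a nonzero center breaks symmetry). The paper instead splits $\iprod{s^*,\bm T_i}$ into two pieces: a Chernoff bound guarantees that for each $i\in\cL$ at least $\alpha t/10$ indices $j\in\supp(s^*)$ have $|\bm N_{ij}|\le 1$, and on those the truncation is the identity, contributing $\gtrsim\alpha\lambda\delta^2 t/k$ to the sum; on the remaining indices one only uses that $s_j^*\,\E\tau_h(\bm Y_{ij})$ has the sign of $v_i$ (by symmetry of $\bm N_{ij}$ and monotonicity of $\tau_h$), and Hoeffding bounds the fluctuation by $O(\sqrt{htq})$. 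This two-part argument is where the factor $\alpha$ enters the threshold $r\asymp\delta^2\alpha\lambda/k$, and it is why the bias $\E\bm T-\lambda vv^\top$ is not ``small'' in any additive sense (it is a constant-factor distortion of order $1-\alpha$) but is nevertheless harmless for the adjacency step.
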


Before proving this theorem, we state here a theorem from \cite{dNNS22}.
\begin{theorem}[\cite{dNNS22}]\label{thm:meta-theorem}
	Let $\delta, \alpha \in (0,1)$ and $\zeta \ge 0$.
	Let $\pdset \subseteq \R^m$ be a compact convex set. Let $b,r,\gamma\in\R$ be such that 
	$$\max_{X\in \pdset}\normi{X}\leq b\,,$$ 
	$$\max_{X\in \pdset}\norm{X}_2\leq r\,,$$ and $$\E_{\bm W\sim N(0,\Id)}\Brac{\sup_{X\in\pdset} \iprod{X, \bm W}}\leq \gamma\,.$$
	Consider
	\begin{align*}
	\bm Y = X^*+ \bm N\,,
	\end{align*}
	where $X^*\in \tilde{\Omega}$ and $\bm N$ is a random $m$-dimensional vector with independent (but not necessarily identically distributed) symmetric about zero entries satisfying $\Pr \brac{\Abs{\bm N_i}\leq \zeta}\geq \alpha$. 
	
	Then the minimizer $\hat{X} = \argmin_{X\in \pdset}{F_h(\bm Y-X)}$ of the Huber loss with parameter $h \ge 2b + \zeta$ satisfies
	\begin{align*}
	\Normt{\hat{X}-{X}^*}\leq 
	O\Paren{\sqrt{{\frac{h}{\alpha}}\Paren{\gamma + r\sqrt{\log(1/\delta)}}}} 
	\end{align*}
	with probability at least $1-\delta$ over the randomness of $\bm N$.
\end{theorem}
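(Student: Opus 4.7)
The plan is to combine the adjacency-based subsampling scheme of \cref{thm:wigner-classical-technical} with the Huber-loss estimator of \cref{thm:meta-theorem}, which plays the role that Basic SDP/PCA played in the Gaussian setting. Because $\bm N$ can be so heavy-tailed that $\E\bm Y_{ij}$ need not exist, one cannot use $\bm Y s$ directly to define ``adjacency''; instead one pre-thresholds entrywise. Set $h=\Theta(\lambda A^2/k)$, let $\bm T_{ij}=\tau_h(\bm Y_{ij})$, and for each $s\in\cS_t$ define $\bm z_{si}(r)=\mathbf{1}\bracbb{|(\bm Ts)_i|\ge rt}$ when $s_i=0$ and $\bm z_{si}(r)=1$ otherwise, with $r=\Theta(\alpha\lambda/k)$.

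Let $s^*\in\argmax_{s\in\cS_t}\iprod{s,v}$. The analysis then parallels \cref{sec:Wigner} with two additional twists coming from the heavy tails. \emph{Signal preservation}: for $i\in\supp(v)$ with $|v_i|$ not too small, the symmetry of $\bm N$ together with $\Pr\bracbb{|\bm N_{ij}|\le 1}\ge\alpha$ give $\E(\bm Ts^*)_i=\Theta(\alpha\lambda v_i\iprod{v,s^*})$, while fluctuations are sums of $t$ independent symmetric summands bounded by $h$ and are controlled by Hoeffding, yielding $\|v\circ\bm z_{s^*}(r)\|^2\ge 1-\delta$ as in \cref{lem:wigner-signal-preserved}. \emph{Noise thinning}: for $i\notin\supp(v)$ the summands $\tau_h(\bm N_{ij})$ are iid mean-zero and bounded by $h$, so $\Pr\bracbb{|(\bm Ts)_i|\ge rt}\le 2\exp(-r^2t/(2h^2))$, which with our parameter choice is at most $k^2/(td)$; this is why the theorem carries $\ln(2+td/k^2)$ rather than its square root, the extra log coming from the replacement of the Gaussian variance $1$ by the thresholded variance $h^2$. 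Consequently $S(s):=\supp(\bm z_s(r))$ has size at most $d':=O(k+k^2/t)$ with high probability.

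With the submatrix isolated, I apply \cref{thm:meta-theorem} to $\bm Y^{(s)}=\bm Y|_{S(s)\times S(s)}$, using the restriction of $\lambda\cdot\cP_k$ to $S(s)\times S(s)$ (relaxed to trace $\le 1$) as the feasible set $\pdset$, noise level $\zeta=1$, and Huber parameter $h$. The parameters $b=O(\lambda A^2/k)$, $\max_{X\in\pdset}\|X\|_F=O(\lambda)$, and the standard Gaussian-width bound $\gamma=O(\lambda\sqrt{k\ln(ed'/k)})$ plug in to give $\|\hat X(s^*)-\lambda v v^\top\|_F\le O(\delta\lambda)$ under our lower bound on $\lambda$; hence the top eigenvector of $\hat X(s^*)$ is $(1-O(\delta))$-correlated with $v$ by Davis--Kahan. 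Because $\lambda$ is given, no list-decoding is needed: run the procedure for every $s\in\cS_t$ and output the top eigenvector of whichever $\hat X(s)$ satisfies $\|\hat X(s)\|_F\approx\lambda$; such an $s$ exists because $s=s^*$ works.

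The main obstacle is that $S(s)$ is a function of $\bm Y$, so the entries of $\bm N$ inside $\bm Y^{(s)}$ are not independent of the selected submatrix — an independence property \cref{thm:meta-theorem} assumes. The fix is a two-stage argument: for each \emph{fixed} subset $S\subseteq[d]$ with $|S|\le d'$, the entries of $\bm N|_{S\times S}$ are iid symmetric, and \cref{thm:meta-theorem} applies with any prescribed failure probability $\delta'$ at an error overhead $\sqrt{\ln(1/\delta')}$. Union-bounding over the at most $\exp(O(d'\ln(d/d')))$ candidate subsets and choosing $\delta'$ accordingly yields a single high-probability event on which the conclusion of \cref{thm:meta-theorem} holds simultaneously for every realization $S(s)$ that can arise. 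The hypotheses $k\gtrsim t\ln d/(\delta^4 A^4\alpha^2)$ and the lower bound on $\lambda$ are calibrated precisely so that this additional $\sqrt{\ln(1/\delta')}$ term remains dominated by the signal strength, closing the argument.
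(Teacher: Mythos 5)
Your proposal does not prove the statement in question. The statement is \cref{thm:meta-theorem} itself: a deterministic-looking error bound for the Huber-loss minimizer $\hat{X}=\argmin_{X\in\pdset}F_h(\bm Y-X)$ over an arbitrary compact convex set $\pdset$, under independent symmetric noise with $\Pr\brac{\abs{\bm N_i}\le\zeta}\ge\alpha$, with the error controlled by the Gaussian width $\gamma$ and the radius $r$ of $\pdset$. What you wrote instead is an argument for the downstream sparse-PCA result with heavy-tailed symmetric noise (essentially \cref{thm:wigner-symmetric-technical}), and it explicitly \emph{invokes} \cref{thm:meta-theorem} as a black box ("I apply \cref{thm:meta-theorem} to $\bm Y^{(s)}$\ldots"). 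As a proof of \cref{thm:meta-theorem} this is circular: you assume the very estimate you were asked to establish. Note also that in this paper the theorem is not proved at all; it is imported from \cite{dNNS22}, and the paper only uses it (much as you do) inside the proof of \cref{thm:wigner-symmetric-technical}.

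A genuine proof of \cref{thm:meta-theorem} would have to analyze the Huber-loss M-estimator directly: use convexity of $F_h$ and first-order optimality of $\hat X$, exploit that on each coordinate where $\abs{\bm N_i}\le\zeta$ the Huber penalty with parameter $h\ge 2b+\zeta$ is locally quadratic (this is where the $\alpha$-fraction of ``good'' coordinates and the factor $h/\alpha$ enter), and control the empirical process $\sup_{X\in\pdset}\iprod{X-X^*,\,\psi_h(\bm N)}$ via symmetrization and Gaussian comparison, which is where the width bound $\gamma$ and the deviation term $r\sqrt{\log(1/\delta)}$ come from. None of these ingredients appear in your write-up. Separately, the application-level content you did write (pre-thresholding $\bm Y$, defining $\bm z_s(r)$, bounding the support size, and union-bounding over candidate submatrices to handle the dependence of $S(s)$ on $\bm Y$) is broadly in line with how the paper proves \cref{thm:wigner-symmetric-technical}, but that is a different statement from the one you were asked to prove.
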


Using this result, we will prove \cref{thm:wigner-symmetric-technical}

\begin{proof}[Proof of \cref{thm:wigner-symmetric-technical}]
	Consider 
	\[
	\tau_{h}\Paren{x} :=
	\begin{cases}
	h\qquad\,\;\;  \text{if $x > h$}\\
	x \qquad\,\;\; \text{if $\Abs{x} \le h$}\\
	-h\qquad \text{if $x < -h$}
	\end{cases} 
	\]
	where $h = 3\lambda \normi{v}^2 \le 3\frac{\lambda}{k}A^2$. 
	Let $\bm T = \tau_h(\bm Y)$ (i.e. the matrix obtained from $Y$ by applying $\tau_h$ to each entry).
	
	For $s\in \cS_t$ let  $\bm z_s$  be $n$-dimensional (random) vectors defined as 
	\[
	\bm z_{si}\Paren{r} =  
	\begin{cases}
	\ind{\Paren{\bm T s}_i = \iprod{s, \bm T_i} \ge r\cdot t} \quad \text{if } s_i = 0\\
	1 \qquad\qquad\qquad\quad \text{otherwise}
	\end{cases}
	\]
	Let $\frac{\lambda \delta^2 \alpha}{10k} \le r \le  \frac{\lambda \delta^2 \alpha}{5k} $.
	
	Let $\cL= \Set{i\in [d] \suchthat \abs{v_i} \ge \frac{{\delta}}{\sqrt{k}}}$. Note that $\Norm{v_\cL}^2 \ge 1 -\delta^2$ and hence 
	$\card{\cL} \ge \frac{k}{2A^2}$.
	
	Let $s^* \in \argmax_{s\in \cS_t}\iprod{s, v}$. By the same argument as in the proof of \cref{lem:wishart-signal-preserved},
	\[
	\iprod{s^*, v} = \normo{v_T} \ge {\delta}t/\sqrt{k}\,.
	\]
	
	By Chernoff bound, for every $i\in \cL$ with probability at least $1-\exp(-t\alpha/2)$,
	for at least $\alpha t / 10$ of $j \in\supp(s^*(j))$, $\Abs{\bm N_{ij}} \le 1$.
	Let $\cC_{i}$ be the set of such entries.
	Then for $j\in \cC_i$,
	\[
	\tau_h\Paren{\bm Y_{ij}} = \lambda v_i v_j + \Abs{\bm N_{ij}}\,.
	\]
	By Hoeffding's inequality, 
	\[
	\sum_{j\in \cC_i} \tau_h\Paren{\bm Y_{ij}}  \ge 
	\lambda\frac{{\delta}\Card{\cC_i}}{\sqrt{k}}v_i - \sqrt{\Card{\cC_i}q} 
	\]
	with probability at least $1-\exp\paren{-q/2}$
	Note that for all $j \in \supp(j)$,
	\[
	\E\tau_h\Paren{\bm Y_{ij}} \ge 0\,.
	\]
	By Hoeffding's inequality,
	\[
	\sum_{j\in\supp(s^*(j))\setminus \cC_i}\tau_h\Paren{\bm Y_{ij}}  \ge - h\sqrt{tq} 
	\]
	with probability at least $1-\exp\paren{-q/2}$.
	Hence for $i\in \cL$, with probability at least $1-\delta^2/A^2$.
	\[
	\iprod{s^*, \bm T_i} \ge \frac{\lambda \delta^2 \alpha t}{2k} - 10h\sqrt{t \log(A/\delta)}
	\ge  \frac{\lambda \delta^2 \alpha t}{4k} \ge r\cdot t\,.
	\]
	Let $\mu = \delta^2/A^2$.
	By Chernoff bound, with probability at least $1 - \exp\Paren{-\mu k/10}$, for
	at most $2\mu k$ entries $i\in \cL$, $\bm z_{s^*i} = 0$. 
	Hence with probability at least $1 - \exp\Paren{-\mu \Card{\cL}/100}$,
	\[
	\norm{v_{\cL}\circ \bm z_{s^*}}^2 \ge 1 - 2A^2\mu \ge 1-\delta^2\,.
	\]
	
	By Hoeffding's inequality, for all $i\notin \supp(v)$,
	\[
	\Abs{\iprod{s^*, \bm T_i}} \le h\sqrt{tq}
	\]
	with probability at least $1 - \exp\paren{-q/2}$. Hence
	\[
	\Abs{\iprod{s^*, \bm T_i}} < r\cdot t
	\]
	with probability at least $1-\exp\Paren{-\frac{r^2 t}{10h^2}} = 1-p$. 
	
	By the same argument as in \cref{lem:wishart-spectral-bound}, number of nonzero $\bm z_{s^*i}(r)$ 
	is at most $4pd+4k$ with probability at least $1-\exp\Paren{-pd - t\ln \paren{d/t} }$. 
	For $s\in \cS_t$, denote by $\bm Z(s)$ the set of $i\in[d]$ such that $\bm z_{si}(r) = 1$. 
	
	For  $Q \subset [d]$ let 
	\[
	\cP_Q = \Set{X \in \R^{ Q\times  Q} \;\suchthat\; X\succeq 0\,, \Tr{X} \le \lambda \,, \Normo{X}\le \lambda k}\,,
	\]
	Note that
	\[
	\gamma(Q) := \E_{\bm G \sim N(0,1)^{ Q\times  Q} } \Brac{\sup_{X\in \bm \cP_Q } \Iprod{X, \bm G} } \le 
	\lambda\cdot \E_{\bm G \sim N(0,1)^{ Q\times  Q}} \Norm{\bm G} \le 10\lambda\sqrt{\Card{Q}}\,.
	\]
	
	Hence by \cref{thm:meta-theorem}
	and a union bound over all sets $Q$ of size at most $4pd + 4k$, we get with probability at least $1-d^{-10}$, for all $s\in \cS_t$,
	\[
	\Normf{\hat{\bm{X}}(s) - \lambda \tilde{\bm v}(s)\tilde{\bm v}(s)^\top}^2
	\le  O\Paren{{\frac{h}{\alpha} \cdot \lambda{\sqrt{\Paren{pd + k} \cdot \ln\Paren{\frac{d}{pd + k}}}}}} 
	\le O\Paren{{\frac{A^2}{\alpha} \cdot \frac{\lambda^2}{k} \cdot {\sqrt{\Paren{pd + k} \cdot \ln\Paren{\frac{d}{pd + k}}}}}}\,,
	\]
	where $\hat{\bm{X}}(s)$ is the minimizer of the Huber loss with parameter $h$ over $ \cP_{\bm Z(s)}$ and $\tilde{\bm v} = v \circ \bm z_{s}(r)$.
	
	Note that
	\[
	\sqrt{\Paren{pd + k} \cdot \ln\Paren{\frac{d}{pd + k}}} \lesssim \sqrt{pd \ln(1/p)} + \sqrt{k\ln d}\,.
	\]
	The second term can be bounded as follows
	\[
	\sqrt{k\ln d} \lesssim \delta^2 k \frac{\alpha}{A^2}\,.
	\]
	
	Note that
	\[
	r^2 t /h^2 \ge \frac{\delta^4 \alpha^2}{A^4} t \gtrsim 
	\ln\Paren{2 + td/k^2}\,.
	\]
	Hence the first term can be bounded as follows
	\[
	\sqrt{pd \ln(1/p)} \lesssim \frac{k}{\sqrt{d}}  \cdot \frac{\alpha\delta^2}{A^2} \cdot \sqrt{d} \le \delta^2 k \frac{\alpha}{A^2}\,.
	\]
	
	Hence for all $s\in \cS_t$,
	\[
	\Normf{\hat{\bm{X}}(s) - \lambda \tilde{\bm v}(s)\tilde{\bm v}(s)^\top}^2 \le \delta^2 \lambda^2\,.
	\]
	
	Since $\norm{v\circ \bm z_{s^*}}^2 \ge 1-2\delta^2$, 
	\[
	\Normf{\hat{\bm{X}}(s^*)} \ge 1 - 10\delta\lambda\,.
	\]
	
	Consider some $s'$ such that $\Normf{\hat{\bm{X}}(s')} \ge 1 - 10\delta\lambda$. 
	For such $s'$, 
	\[
	\iprod{v\circ \bm z_{s'}\Paren{r}, v}  = \Norm{v\circ \bm z_{s'}\Paren{r}}^2 \ge 1 - 100\delta\,.
	\]
	Moreover, since $\Normf{\hat{\bm{X}}(s') - \lambda \tilde{\bm v}(s')\tilde{\bm v}(s')^\top} \le \delta \lambda$, the top eigenvector $\hat{\bm v}(s')$ of $\hat{\bm{X}}(s')$ satisfies
	\[
	\abs{\iprod{\hat{\bm v}(s'), v}} \ge 1-O\Paren{\delta}\,.
	\]
	
\end{proof}

	\newpage
	
	\phantomsection
	\addcontentsline{toc}{section}{References}
	\bibliographystyle{amsalpha}
	\bibliography{bib/custom,bib/dblp,bib/mathreview,bib/scholar,bib/oldOblivious}
	\appendix
	

\section{Properties of sparse vectors}\label{section:sparse-vectors}

This section contain tools used throughout the rest of the paper.

%
  
\begin{fact}\label{fact:markov-thresholding}
	Let $r, \delta >0$ and
	let $x\in \R^m$ such that $\norm{x}\le R$. Let $\cS = \Set{i\in [m] \;\suchthat\; \Abs{x_i} \ge \delta}$. Then
	\[
	\Card{\cS} \le R^2/\delta^2\,.
	\]
\end{fact}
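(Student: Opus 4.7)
The plan is a one-line Markov-type argument: lower bound $\norm{x}^2$ by summing only over the coordinates in $\cS$, and use that each such coordinate contributes at least $\delta^2$.

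More explicitly, I would write
\[
R^2 \;\ge\; \Norm{x}^2 \;=\; \sum_{i=1}^m x_i^2 \;\ge\; \sum_{i\in \cS} x_i^2 \;\ge\; \Card{\cS}\cdot \delta^2,
\]
where the first inequality is the hypothesis $\norm{x}\le R$, the second is nonnegativity of the dropped terms, and the third uses the definition of $\cS$ (every $i\in\cS$ satisfies $x_i^2\ge \delta^2$). Rearranging gives $\Card{\cS}\le R^2/\delta^2$, as desired.

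There is no real obstacle here; the only thing to note is that the parameter $r$ appearing in the statement does not actually enter the proof (it is presumably a leftover from a previous version of the statement or is used only by the callers of this fact). I would simply ignore it in the proof.
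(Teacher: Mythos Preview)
Your proof is correct and is essentially identical to the paper's own one-line argument, which writes $\delta^2 \cdot \Card{\cS} \le \Norm{x_{\cS}}^2 \le \Norm{x}^2 \le R^2$. Your observation about the stray parameter $r$ is also accurate; it is a typographical artifact in the statement (the paper's own proof in fact writes $r^2$ where $R^2$ is meant).
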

\begin{proof}
	\[
	\delta^2 \cdot \Card{\cS} \le \Norm{x_{\cS}}^2 \le \Norm{x}^2 \le r^2\,.
	\]
\end{proof}

\begin{lemma}\label{lem:small-perturbation-thresholding}
	Let $\delta, \delta' \in (0,1)$.
	Let $x, y \in \R^m$ such that $\norm{y}\le \delta \norm{x}$. Let $\cS = \Set{i\in [m] \;\suchthat\; \Abs{y_i} \ge \delta' \Abs{x_i}}$. Then
	\[
	\norm{v_{\cS}} \ge \Paren{1-\delta/\delta'} \norm{v}\,.
	\]
\end{lemma}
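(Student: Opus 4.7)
The plan is a direct two-line computation, modulo interpreting what the statement means: as written, the conclusion $\norm{v_{\cS}}\ge (1-\delta/\delta')\norm{v}$ uses a symbol $v$ that does not appear in the hypothesis, and with $\cS = \{i : \abs{y_i}\ge \delta'\abs{x_i}\}$ taken literally the inequality would go the wrong way. The intended reading (consistent with the only invocations of the lemma, e.g.\ in \cref{lem:wishart-signal-preserved} and \cref{lem:perturbations-column-norm-of-noise}, where $x$ is the signal and $y$ is the noise) is that $v$ stands for $x$ and the claim concerns the \emph{complement} $[m]\setminus \cS$, i.e.\ the set on which the ``perturbation'' $y$ is small relative to the ``signal'' $x$. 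I will prove this natural form.

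The key step is a pointwise comparison on the ``bad'' index set $\cS$: by definition, $i\in\cS$ implies $\abs{x_i}\le \abs{y_i}/\delta'$. Squaring and summing over $\cS$,
\[
\norm{x_{\cS}}^2 \;=\; \sum_{i\in\cS}\abs{x_i}^2 \;\le\; \frac{1}{{\delta'}^2}\sum_{i\in\cS}\abs{y_i}^2 \;\le\; \frac{\norm{y}^2}{{\delta'}^2} \;\le\; \Paren{\frac{\delta}{\delta'}}^2 \norm{x}^2,
\]
using $\norm{y}\le\delta\norm{x}$ at the last step.

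Next, I would use orthogonality of the two coordinate restrictions to convert this into a lower bound on the complement:
\[
\norm{x_{[m]\setminus\cS}}^2 \;=\; \norm{x}^2 - \norm{x_{\cS}}^2 \;\ge\; \Paren{1-\Paren{\delta/\delta'}^2}\norm{x}^2.
\]
Taking square roots and using the elementary inequality $\sqrt{1-a^2}\ge 1-a$ valid for $a\in[0,1]$ (applied with $a=\delta/\delta'$; if $\delta\ge\delta'$ the conclusion is vacuous since the right-hand side is non-positive), I get $\norm{x_{[m]\setminus\cS}}\ge(1-\delta/\delta')\norm{x}$, which is the intended conclusion.

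There is no real obstacle: the only thing to be careful about is matching notation to the lemma's applications (interpret $v$ as $x$, interpret the asserted set as the complement of the bad set). The proof is a two-step calculation: a pointwise bound combined with the Pythagorean identity, followed by the scalar inequality $\sqrt{1-a^2}\ge 1-a$.
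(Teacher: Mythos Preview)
Your interpretation of the typos is correct (the paper's own proof contains the same slip, writing $\norm{x_{\cS}}$ where $\norm{x_{[m]\setminus\cS}}$ is meant), and your argument is correct and essentially identical to the paper's: both bound $\norm{x_{\cS}}\le(\delta/\delta')\norm{x}$ via the pointwise inequality on $\cS$ and then pass to the complement. The only cosmetic difference is that the paper uses the reverse triangle inequality $\norm{x_{[m]\setminus\cS}}=\norm{x+y'}\ge\norm{x}-\norm{y'}$ (with $y'=-x_{\cS}$) where you use the Pythagorean identity plus $\sqrt{1-a^2}\ge 1-a$.
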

\begin{proof}
	Consider the vector $y'$ such that $y'_i = -x_i$ for all $i\in \cS$ and $y'_i = 0$ for all $i\notin \cS$. It follows that
	\[
	\norm{y'} \le \frac{1}{\delta'}\norm{y} \le \frac{\delta}{\delta'}\norm{x}\,.
	\]
	Hence 
	\[
	\norm{x_{\cS}} = \norm{x + y'} \ge \norm{x}_2 - \norm{y'}\ge \Paren{1-\delta/\delta'} \norm{x}\,.
	\]
\end{proof}         

\begin{lemma}\label{lem:close-to-sparse-thresholding}
Let $v \in \R^d$ be a $k$-sparse unit vector, and suppose that for some unit vector $v' \in \R^d$, 
$\Abs{\Iprod{v,v'}} \ge 1-\delta$. For $k' \ge k$, let $\cK'$ be the set of $k'$ largest (in absolute value) entries of $v'$.
Then
\[
\Abs{\Iprod{v,v'_{\cK'}}} \ge 1-\delta - \sqrt{k/k'}\,.
\]
\end{lemma}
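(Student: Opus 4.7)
The plan is to reduce the claim to bounding the $\ell_2$-mass of the entries of $v'$ that are indexed by $\supp(v)$ but lie outside the top-$k'$ set $\cK'$. Concretely, without loss of generality assume $\iprod{v,v'}\ge 1-\delta$ (otherwise replace $v$ by $-v$; this does not change $\cK'$). Let $S=\supp(v)$ so that $\Card{S}\le k$, and decompose
\[
\iprod{v,v'_{\cK'}}=\iprod{v,v'}-\iprod{v,v'_{[d]\setminus \cK'}}=\iprod{v,v'}-\iprod{v_{S\setminus \cK'},v'_{S\setminus \cK'}}.
\]

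The main step is to upper bound $\Norm{v'_{S\setminus \cK'}}$ by $\sqrt{k/k'}$. The coordinates of $v'$ indexed by $S\setminus \cK'$ are not among the $k'$ largest (in absolute value), so each of them has magnitude at most $\theta:=\min_{i\in \cK'}\abs{v'_i}$. By a pigeonhole argument, $k'\cdot \theta^2\le \sum_{i\in \cK'}(v'_i)^2\le \Norm{v'}^2=1$, so $\theta\le 1/\sqrt{k'}$. Since $\Card{S\setminus \cK'}\le \Card{S}\le k$, we get
\[
\Norm{v'_{S\setminus \cK'}}^2 \le k\cdot \theta^2 \le k/k'.
\]

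By Cauchy--Schwarz and $\Norm{v_{S\setminus \cK'}}\le \Norm{v}=1$, it follows that $\Abs{\iprod{v_{S\setminus \cK'},v'_{S\setminus \cK'}}}\le \sqrt{k/k'}$. Plugging this back into the decomposition yields
\[
\iprod{v,v'_{\cK'}}\ge (1-\delta)-\sqrt{k/k'},
\]
and the bound on the absolute value is immediate. I do not anticipate any serious obstacle; the only subtlety is the initial sign normalization and the standard pigeonhole bound on the smallest top-$k'$ entry.
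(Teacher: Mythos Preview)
Your proof is correct and follows essentially the same route as the paper's: both hinge on the observation that every entry of $v'$ outside $\cK'$ has magnitude at most $1/\sqrt{k'}$, and then control $\iprod{v,v'-v'_{\cK'}}$ using the $k$-sparsity of $v$. The only cosmetic difference is that you bound this inner product via Cauchy--Schwarz on the restricted index set $S\setminus\cK'$, whereas the paper uses the $\ell_1$--$\ell_\infty$ H\"older pairing $\Abs{\Iprod{v, v'_{\cK'} - v'}} \le \Normi{v'_{\cK'} - v'}\cdot \Normo{v}$; the two arguments are equivalent and yield the same $\sqrt{k/k'}$ loss.
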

\begin{proof}
Since $\norm{v}=1$, $\Normi{v'_{\cK'} - v'} \le 1/k'$. Hence
\[
\Abs{\Iprod{v, v'_{\cK'} - v'}}  \le \Normi{v'_{\cK'} - v'}\cdot \Normo{v} \le \sqrt{k/k'}\,.
\]
Therefore,
\[
\Abs{\Iprod{v,v'_{\cK'}}}  \ge \Abs{\Iprod{v,v'}}  - \Abs{\Iprod{v, v'_{\cK'} - v'}} \ge 1 - \delta - \sqrt{k/k'}\,.
\]
\end{proof}

\begin{lemma}\label{lem:list-decoding}
	Let $\lambda, \kappa, \delta > 0$ and 
	let $v \in \R^d$ be a $k$-sparse unit vector. 
	Let $L \subset \R^d$ be a finite set of unit vectors such that
	\[
	\max_{x\in L} \Abs{\Iprod{v,x}} \ge 1-\delta\,.
	\]
	Let $k' \ge 2\lceil k/\delta^2 \rceil$ and let $N \in \R^{d\times d}$ be a matrix such that for every $k'$-sparse unit vector $u\in \R^d$
	\[
	u^\top N u \le \kappa\,.
	\]
	Then, there exists an algorithm running in time $O\Paren{d^2 \cdot L}$ that, given $Y = \lambda vv^\top + N, L, k$ and $\delta$ as input, finds a unit vector $\hat{v}$ such that
	\[
	\Abs{\Iprod{v,\hat{v}}} \ge 1 - 4\delta - \frac{2\kappa}{\lambda}\,.
	\]
\end{lemma}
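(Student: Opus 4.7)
The plan is to turn the list $L$ of (dense) unit vectors into a list $L'$ of $k'$-sparse unit vectors by coordinate-truncation, then pick the element of $L'$ that maximizes the quadratic form $u^\top Y u$. Concretely: for each $x \in L$ let $\tilde x$ be the restriction of $x$ to its top $k'$ coordinates in absolute value, and put $\hat x = \tilde x / \norm{\tilde x}$ into $L'$; output $\hat v \in \argmax_{u \in L'} u^\top Y u$.

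For correctness I would first show that some member of $L'$ is close to $v$. Applying \cref{lem:close-to-sparse-thresholding} to the vector $x^* \in L$ with $\Abs{\Iprod{v,x^*}} \ge 1-\delta$ gives $\Abs{\Iprod{v,\tilde x^*}} \ge 1 - \delta - \sqrt{k/k'}$, and since $k' \ge 2\lceil k/\delta^2\rceil$ implies $\sqrt{k/k'} \le \delta/\sqrt{2} \le \delta$, we get $\Abs{\Iprod{v,\tilde x^*}} \ge 1 - 2\delta$. Because $\norm{\tilde x^*} \le \norm{x^*} = 1$, renormalization only increases the absolute inner product with $v$, so the element $\hat x^* \in L'$ is a $k'$-sparse unit vector with $\Abs{\Iprod{v, \hat x^*}} \ge 1-2\delta$.

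Next I would exploit the hypothesis on $N$, interpreted as a two-sided sparse operator norm bound (which is how the bound is supplied by the applications via $k'$-sparse norm estimates): for every $k'$-sparse unit vector $u$, $\Abs{u^\top N u} \le \kappa$. Since both $\hat v$ and $\hat x^*$ are $k'$-sparse unit vectors, the defining inequality $\hat v^\top Y \hat v \ge \hat x^{*\top} Y \hat x^*$ combined with $Y = \lambda vv^\top + N$ yields
\[
\lambda \Iprod{v,\hat v}^2 + \kappa \;\ge\; \hat v^\top Y \hat v \;\ge\; \hat x^{*\top} Y \hat x^* \;\ge\; \lambda(1-2\delta)^2 - \kappa,
\]
so $\Iprod{v,\hat v}^2 \ge (1-2\delta)^2 - 2\kappa/\lambda \ge 1 - 4\delta - 2\kappa/\lambda$. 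The elementary inequality $\sqrt{1-t} \ge 1-t$ for $t \in [0,1]$ then delivers $\Abs{\Iprod{v,\hat v}} \ge 1 - 4\delta - 2\kappa/\lambda$, as required.

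For the runtime, top-$k'$ selection costs $O(d)$ per vector and evaluating $u^\top Y u$ costs $O(d^2)$ per vector, giving $O(d^2 \cdot |L|)$ overall. I do not foresee a real obstacle; the only subtlety is that the argmax argument requires two-sided control of $u^\top N u$ on $k'$-sparse unit vectors, which is automatic whenever $\kappa$ is supplied as a sparse operator norm (as in every application of this lemma in the paper).
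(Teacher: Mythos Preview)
Your proposal is correct and follows essentially the same approach as the paper's proof: truncate each list element to its top $k'$ coordinates, maximize the quadratic form $u^\top Y u$ over the resulting sparse vectors, and use the sparse bound on $N$ together with \cref{lem:close-to-sparse-thresholding} to conclude. The only cosmetic difference is that you normalize the truncated vectors \emph{before} evaluating the quadratic form, whereas the paper maximizes over the sub-unit truncations $s(x)$ and normalizes only the final output; both orderings work. Your explicit acknowledgment that the argument needs a two-sided bound $\lvert u^\top N u\rvert \le \kappa$ (rather than the one-sided bound literally stated) is apt---the paper's own proof silently uses the same lower bound when writing $s(x^*)^\top Y s(x^*) \ge (1-2\delta)^2\lambda - \kappa$, and indeed every invocation of the lemma supplies $\kappa$ as a sparse operator norm.
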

\begin{proof}
	For each $x \in L$ we can compute a $k'$-sparse vector $s(x)$ that coincides with $x$ on the top $k'$ largest (in absolute value) entries.
	Let $x^* \in \argmax_{x\in L} \Abs{\Iprod{v,s(x)}}$. By \cref{lem:close-to-sparse-thresholding},
	\[
	\Abs{\Iprod{v,s\Paren{x^*}}} \ge 1 - 2\delta\,.
	\]
	Hence 
	\[
	s\Paren{x^*}^\top \Paren{\lambda vv^\top + N} s\Paren{x^*} \ge \Paren{1-2\delta}^2\lambda - \kappa \ge  \Paren{1-4\delta}\lambda - \kappa\,.
	\]
	Let
	\[
	\tilde{v} \in \argmax_{s(x)\,, x\in L} s\Paren{x}^\top \Paren{\lambda vv^\top + N} s\Paren{x}\,.
	\]
	Then 
	\[
	\tilde{v}^\top \Paren{\lambda vv^\top + N} \tilde{v} \ge s\Paren{x^*}^\top \Paren{\lambda vv^\top + N} s\Paren{x^*} 
	\ge \Paren{1-4\delta}\lambda - \kappa\,.
	\]
	Since $\tilde{v}^\top N \tilde{v}  \le \kappa$,
	we get 
	\[
	\Abs{\Iprod{v,\tilde{v}}} \ge \Abs{\Iprod{v,\tilde{v}}}^2 \ge 1-4\delta - 2\kappa/\lambda\,.
	\]
	Hence $\hat{v} = \frac{1}{\norm{\tilde{v}}} \tilde{v}$ is the desired estimator.
\end{proof}

%
%

		\section{Linear Algebra}

\begin{lemma}\label{lem:linear-algebra-correlation-eigenverctor-large-quadratic-form}
	Let $M\in \R^{d\times d}$, $M\sge 0$, $\Tr M = 1$ and let $z\in \R^d$ be a unit vector such that $\transpose{z}Mz\geq 1-\eps$. Then the top eigenvector $v_1$ of $M$ satisfies $\iprod{v_1,z}^2\geq 1-2\eps$.
	\begin{proof}
		Write $z=\alpha v_1+\sqrt{1-\alpha^2}v_{\bot}$ where $v_\bot$ is a unit vector orthogonal to $v_1$. 
		\begin{align*}
		\transpose{z}Mz &= \alpha^2 \transpose{v_1}Mv_1+\Paren{1-\alpha^2}\transpose{v_\bot} M v_\bot\\
		&=\alpha^2 \Paren{\lambda_1-\transpose{v_\bot}Mv_\bot}+\transpose{v_\bot}M v_\bot \\
		&\geq 1-\eps
		\end{align*}
		As $\transpose{v_1}Mv_1\geq \transpose{z}Mz$ and 
		$\transpose{v_\bot}Mv_\bot\leq 1-\transpose{v_1}Mv_1 
		\le 1- \transpose{z}Mz\le\eps$, rearranging
		\begin{align*}
		\alpha^2 \geq \frac{1-\eps-\transpose{v_\bot}M v_\bot}{\lambda_1-\transpose{v_\bot}M v_\bot}\geq 1-2\eps.
		\end{align*}
	\end{proof}
\end{lemma}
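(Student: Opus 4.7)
The plan is to decompose $z$ along and orthogonal to the top eigenvector $v_1$, and then exploit the fact that the PSD trace-one constraint on $M$ forces the remaining spectral mass (outside of $v_1$) to be at most $\varepsilon$ whenever $z^\top M z \geq 1-\varepsilon$.

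First I would write $z = \alpha v_1 + \sqrt{1-\alpha^2}\, v_\bot$ for some unit vector $v_\bot$ orthogonal to $v_1$, and observe by expansion and cross-term cancellation (since $M v_1 = \lambda_1 v_1$ and $v_\bot \perp v_1$) that
\[
z^\top M z = \alpha^2 \lambda_1 + (1-\alpha^2)\, v_\bot^\top M v_\bot.
\]
The goal then becomes to show $\alpha^2 \geq 1 - 2\varepsilon$ using only the hypotheses $\lambda_1 \geq z^\top M z \geq 1-\varepsilon$ (Rayleigh quotient) and a good upper bound on $v_\bot^\top M v_\bot$.

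The key step is controlling $v_\bot^\top M v_\bot$. I would argue that the residual operator $M - \lambda_1 v_1 v_1^\top$ is PSD (as its spectral decomposition consists of the remaining non-negative eigenpairs of $M$) and has trace $1 - \lambda_1$. Since the quadratic form of a PSD matrix against a unit vector is bounded above by its trace, we get $v_\bot^\top M v_\bot \leq 1 - \lambda_1 \leq \varepsilon$. Plugging this back in, we isolate $\alpha^2$:
\[
\alpha^2 \geq \frac{z^\top M z - v_\bot^\top M v_\bot}{\lambda_1 - v_\bot^\top M v_\bot} \geq \frac{(1-\varepsilon) - \varepsilon}{\lambda_1 - 0} \geq 1 - 2\varepsilon,
\]
where the last step uses $\lambda_1 \leq \operatorname{Tr} M = 1$.

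This proof is essentially immediate once the right decomposition is chosen, so there is not really a hard step. The one place to be careful is to make sure the bound $v_\bot^\top M v_\bot \leq 1-\lambda_1$ holds for \emph{every} unit vector orthogonal to $v_1$ and not merely for a second eigenvector; the PSD-plus-trace argument above handles this uniformly, so no spectral gap assumption is needed. Everything else is a one-line rearrangement.
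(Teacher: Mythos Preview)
Your proposal is correct and follows essentially the same approach as the paper's proof: decompose $z$ along $v_1$ and an orthogonal unit vector, use the trace-one PSD condition to bound $v_\bot^\top M v_\bot \le 1-\lambda_1 \le \eps$, and rearrange. The only cosmetic difference is in the final inequality chain, where you bound numerator and denominator separately (replacing $v_\bot^\top M v_\bot$ by $\eps$ above and by $0$ below) and then use $\lambda_1\le 1$, while the paper keeps $v_\bot^\top M v_\bot$ in both and bounds the resulting ratio directly; both arrive at $\alpha^2\ge 1-2\eps$.
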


\begin{fact}\label{fact:product-of-psd-matrices}
Let $A, B \in \R^{d\times d}$, $A,B\sge0$. Then $\iprod{A,B}\geq 0$.
\end{fact}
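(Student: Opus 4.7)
The plan is to use the spectral decomposition of one of the two matrices and reduce the inner product to a nonnegative sum of quadratic forms in the other matrix. Concretely, since $A \sge 0$, I would write $A = \sum_{i=1}^d \lambda_i u_i u_i^\top$ with $\lambda_i \geq 0$ and $u_i \in \R^d$ (an orthonormal eigenbasis of $A$). Then, recalling that $\iprod{A,B} = \Tr(AB)$ by definition of the trace (Frobenius) inner product, linearity yields
\[
\iprod{A,B} = \sum_{i=1}^d \lambda_i \Tr\bigl(u_i u_i^\top B\bigr) = \sum_{i=1}^d \lambda_i \cdot u_i^\top B u_i.
\]
Each summand is nonnegative because $\lambda_i \geq 0$ and, by $B \sge 0$, $u_i^\top B u_i \geq 0$. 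The total is therefore nonnegative.

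An equally short alternative, which I would mention for symmetry, is to use the square-root factorization $A = A^{1/2} A^{1/2}$ (available since $A \sge 0$) and write
\[
\iprod{A,B} = \Tr\bigl(A^{1/2} A^{1/2} B\bigr) = \Tr\bigl(A^{1/2} B A^{1/2}\bigr) \geq 0,
\]
where the last inequality uses the fact that $A^{1/2} B A^{1/2}$ is PSD (it is congruent to $B$), and any PSD matrix has nonnegative trace.

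There is essentially no obstacle here: the only thing to be careful about is matching the paper's convention for $\iprod{\cdot,\cdot}$ on matrices (it is the trace inner product $\Tr(A^\top B) = \Tr(AB)$ since both matrices are symmetric), and citing a spectral theorem or square-root factorization for real symmetric PSD matrices. Both ingredients are entirely standard, so the proof is a one-line consequence of the spectral decomposition.
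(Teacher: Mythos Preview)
Your proof is correct; both the spectral-decomposition argument and the square-root argument are valid and standard. The paper itself states this as a \emph{Fact} without proof, so there is no alternative argument to compare against—your write-up simply supplies the routine justification the paper omits.
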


\begin{fact}\label{fact:basic-sdp-iprod-matrix-spectral-norm}
	Let $X\in \R^{d\times d}$ be a positive semidefinite matrix. 
	Then for all $A\in \R^{d\times d}$,
	\begin{align*}
	\Abs{\iprod{A,X}}\leq \norm{A}\cdot\Tr{X}\,.
	\end{align*}
\end{fact}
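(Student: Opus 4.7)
The plan is to use the spectral decomposition of $X$ together with the variational definition of the operator norm. Since $X \sge 0$, we may write $X = \sum_{i=1}^d \lambda_i u_i u_i^\top$ for nonnegative eigenvalues $\lambda_1,\ldots,\lambda_d \ge 0$ and an orthonormal basis $u_1,\ldots,u_d$ of eigenvectors. Using bilinearity of the Frobenius inner product, the first step is to reduce the claim to a statement about rank-one terms:
\[
\iprod{A, X} \;=\; \sum_{i=1}^d \lambda_i \iprod{A, u_i u_i^\top} \;=\; \sum_{i=1}^d \lambda_i\, u_i^\top A\, u_i\,.
\]

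The second step is to bound each scalar $u_i^\top A\, u_i$ by the operator norm: by Cauchy--Schwarz, $|u_i^\top A u_i| \le \norm{u_i} \cdot \norm{A u_i} \le \norm{A}$, where the last inequality uses that $u_i$ is a unit vector. Combining this bound with the triangle inequality, the nonnegativity of the $\lambda_i$, and the identity $\sum_i \lambda_i = \Tr X$ yields
\[
\Abs{\iprod{A,X}} \;\le\; \sum_{i=1}^d \lambda_i \cdot \Abs{u_i^\top A u_i} \;\le\; \norm{A} \sum_{i=1}^d \lambda_i \;=\; \norm{A}\cdot \Tr X\,,
\]
which is the desired bound.

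There is no real obstacle here; the only point worth noting is that the argument crucially uses $\lambda_i \ge 0$ so that $\sum_i \lambda_i = \Tr X$ on the right-hand side without absolute values. An alternative would be to observe that $\iprod{A, X} = \Tr(A X)$ and apply the inequality $|\Tr(AX)| \le \norm{A}\cdot \normn{X}$, and then use $\normn{X} = \Tr X$ when $X \sge 0$; this is essentially the same proof repackaged via the nuclear-norm/operator-norm duality.
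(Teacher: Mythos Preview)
Your proof is correct. The paper states this result as a fact without proof, so there is no argument in the paper to compare against; your spectral-decomposition argument (equivalently, the nuclear-norm/operator-norm duality you mention) is the standard one and is exactly what one would expect here.
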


\begin{lemma}\label{lem:maximizer-lemma}
	Let $\Omega \subset \R^m$. Let $Y = S + N$, where $S\in \Omega$ and $N\in \R^m$ statisfies
	\[
	\sup_{X\in \Omega}\Abs{\Iprod{X, N}} \le \delta \,.
	\]
	Then $\hat{X} \in \argmax_{X \in \Omega} \Iprod{X, Y}$ satisfies
	\[
	\Iprod{\hat{X} , S} \ge \Norm{S}^2 - 2\delta\,.
	\]
\end{lemma}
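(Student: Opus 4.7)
The plan is to exploit the optimality of $\hat X$ over $\Omega$ together with the hypothesis $S\in\Omega$ in a very short two-sided argument. Since $S\in\Omega$, by the definition of $\hat X$ as a maximizer we have the baseline inequality
\[
\Iprod{\hat X, Y} \;\ge\; \Iprod{S, Y}\,.
\]
I would then decompose both sides along $Y = S + N$ and control the cross terms involving $N$ using the uniform bound $\sup_{X\in\Omega}\Abs{\Iprod{X,N}}\le \delta$, applied once to $\hat X\in\Omega$ and once to $S\in\Omega$.

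Concretely, the upper-side expansion gives
\[
\Iprod{\hat X, Y} \;=\; \Iprod{\hat X, S} + \Iprod{\hat X, N} \;\le\; \Iprod{\hat X, S} + \delta\,,
\]
while the lower-side expansion gives
\[
\Iprod{S, Y} \;=\; \Norm{S}^2 + \Iprod{S, N} \;\ge\; \Norm{S}^2 - \delta\,.
\]
Chaining these through the optimality inequality yields $\Iprod{\hat X, S} + \delta \ge \Norm{S}^2 - \delta$, which rearranges to the desired bound $\Iprod{\hat X, S}\ge \Norm{S}^2 - 2\delta$.

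There is really no obstacle here: the statement is a standard ``oracle inequality'' that follows entirely from (i) feasibility of the ground truth $S$ in the optimization domain and (ii) the uniform noise bound on $\Omega$. The only thing to be slightly careful about is that the noise bound is applied to \emph{both} $\hat X$ and $S$, each of which lies in $\Omega$, so both applications of $\Abs{\Iprod{\cdot, N}}\le \delta$ are licensed by the hypothesis. No convexity, compactness, or geometric structure of $\Omega$ is needed beyond the single inclusion $S\in\Omega$ that secures existence of the comparison point.
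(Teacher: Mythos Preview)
Your proof is correct and follows essentially the same approach as the paper: decompose $Y=S+N$, use optimality of $\hat X$ against the feasible point $S$, and apply the uniform noise bound once to $\hat X$ and once to $S$. The paper merely compresses the same chain into a single displayed line.
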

\begin{proof}
	\[
	\Iprod{\hat{X} , S} = \Iprod{\hat{X} , Y}  - \Iprod{\hat{X} , N} 
	\ge \Iprod{\hat{X} , Y} - \delta \ge \Iprod{S, Y} -\delta = \Iprod{S, S} + \Iprod{S, N} - \delta \ge \Norm{S}^2 - 2\delta\,.
	\]
\end{proof}

\begin{fact}\label{fact:three-close-vectors}
	Let $a, b, c\in \R^m$ such that $\norm{a} = \norm{c} = 1$ and $\norm{b}\le 1$.
	Suppose that $\Iprod{a,b} \ge 1-\delta$ and $\Iprod{c,b} \ge 1 - \delta$. Then $\iprod{a,c}\ge 1 - 4\delta$.
\end{fact}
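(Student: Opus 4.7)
The plan is to use the standard identity $\|x-y\|^2 = \|x\|^2 - 2\iprod{x,y} + \|y\|^2$ to convert the inner-product hypotheses into distance bounds, apply the triangle inequality in $\R^m$, and then convert back. This avoids any need to decompose $b$ into components parallel and orthogonal to $a$ (or $c$), which would be the other natural route but is messier because $\norm{b}$ is only bounded by $1$ rather than equal to $1$.

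Concretely, the first step is to observe that since $\norm{a} = 1$ and $\norm{b} \le 1$,
\[
\norm{a - b}^2 = \norm{a}^2 - 2\iprod{a,b} + \norm{b}^2 \le 1 - 2(1-\delta) + 1 = 2\delta.
\]
By the same computation with $c$ in place of $a$, we get $\norm{c - b}^2 \le 2\delta$. The second step is the triangle inequality:
\[
\norm{a - c} \le \norm{a - b} + \norm{b - c} \le 2\sqrt{2\delta},
\]
so $\norm{a-c}^2 \le 8\delta$. The third step converts back: since $\norm{a} = \norm{c} = 1$,
\[
\norm{a-c}^2 = 2 - 2\iprod{a,c},
\]
and combining with $\norm{a-c}^2 \le 8\delta$ yields $\iprod{a,c} \ge 1 - 4\delta$, as required.

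There is no real obstacle: the entire argument is three lines of elementary inner-product manipulation plus one triangle inequality. The only mild subtlety is that $b$ is not assumed to be a unit vector, merely $\norm{b} \le 1$; this is exactly why the bound on $\norm{a-b}^2$ comes out to $2\delta$ rather than $2\delta$ with equality, but it is still enough. The constant $4$ on the right-hand side is tight up to the triangle-inequality step, and trying to improve it by going through a parallel/orthogonal decomposition of $b$ relative to $\mathrm{span}(a,c)$ would not save anything because the worst case is exactly when $\norm{b} = 1$ and $a, c$ are symmetric reflections about $b$.
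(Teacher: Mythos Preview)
Your proof is correct and essentially identical to the paper's own argument: bound $\norm{a-b}^2 \le 2\delta$ and $\norm{c-b}^2 \le 2\delta$ using $\norm{a}=\norm{c}=1$ and $\norm{b}\le 1$, apply the triangle inequality to get $\norm{a-c}^2 \le 8\delta$, and convert back via $\norm{a-c}^2 = 2 - 2\iprod{a,c}$. The only difference is that you spell out the expansion $\norm{a-b}^2 = \norm{a}^2 - 2\iprod{a,b} + \norm{b}^2$ explicitly, whereas the paper condenses this to a single line.
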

\begin{proof}
	Note that $\norm{a-b}^2 \le 2 - 2\iprod{a,b} \le 2\delta$ 
	and similarly $\norm{c-b}^2 \le  2\delta$. Hence
	\[
	2 - 2\Iprod{a,c} = \Norm{a-c}^2 \le \Paren{\Norm{a-b} + \Norm{c-b}}^2 \le 8\delta\,.
	\]
\end{proof}

\begin{lemma}\label{lem:perturbation-inner-product}
	Let $\cP \subset \R^{d\times d}$ be some set of PSD matrices.
	For matrix $M \in \R^{n\times d}$ let
	\[
	\mathfrak{s}_\cP(M) := \sup_{X\in \cP} \Abs{\Iprod{M^\top M, X}}\,.
	\]
	Then for arbitrary matrices $A, B\in \R^{n\times d}$, 
	\[
	\sup_{X\in \cP}\Abs{\Iprod{B^\top A + A^\top B, X}} \le 2\sqrt{\mathfrak{s}_\cP(A) \cdot \mathfrak{s}_\cP(B) }\,.
	\]
\end{lemma}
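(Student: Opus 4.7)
The plan is to prove the bound via a Cauchy--Schwarz argument in the Frobenius inner product, using the PSD square root of $X$ to split each bilinear form symmetrically between $A$ and $B$.

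First I would fix an arbitrary $X\in\cP$ and write $X = X^{1/2} X^{1/2}$, where $X^{1/2}$ is the (unique) PSD square root of $X$. Then, using cyclicity of the trace, I would rewrite
\[
\Iprod{B^\top A, X} = \Tr\!\Paren{B^\top A X^{1/2} X^{1/2}} = \Tr\!\Paren{\Paren{A X^{1/2}}^\top \Paren{B X^{1/2}} }^{\!\top}\!,
\]
which is exactly the Frobenius inner product $\Iprod{B X^{1/2}, A X^{1/2}}_F$. Analogously, $\Iprod{A^\top A, X} = \Norm{A X^{1/2}}_F^2$ and $\Iprod{B^\top B, X} = \Norm{B X^{1/2}}_F^2$.

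Next I would apply Cauchy--Schwarz in the Frobenius inner product to obtain
\[
\Abs{\Iprod{B^\top A, X}} \le \Norm{A X^{1/2}}_F \cdot \Norm{B X^{1/2}}_F = \sqrt{\Iprod{A^\top A, X} \cdot \Iprod{B^\top B, X}}\,.
\]
The same argument, with the roles of $A$ and $B$ swapped, gives the identical bound for $\Abs{\Iprod{A^\top B, X}}$. Since $\Iprod{A^\top A, X} \le \mathfrak{s}_\cP(A)$ and $\Iprod{B^\top B, X} \le \mathfrak{s}_\cP(B)$ by the definition of $\mathfrak{s}_\cP$ (using that the quantities are nonnegative because $A^\top A, B^\top B, X$ are all PSD, cf.\ \cref{fact:product-of-psd-matrices}, so the absolute value in $\mathfrak{s}_\cP$ is irrelevant here), the triangle inequality applied to $B^\top A + A^\top B$ yields the claim after taking the supremum over $X\in\cP$.

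There is essentially no obstacle: the only subtlety is confirming that the Cauchy--Schwarz bound can be stated in terms of $\mathfrak{s}_\cP(A)$ and $\mathfrak{s}_\cP(B)$, which follows from PSD-ness of $X$ (so that $X^{1/2}$ exists) and of $A^\top A$, $B^\top B$ (so that both inner products with $X$ are nonnegative and bounded by the corresponding suprema). Everything else is manipulation of the trace.
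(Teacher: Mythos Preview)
Your proof is correct. Both arguments are avatars of the same Cauchy--Schwarz inequality, but the presentations differ: the paper expands $(A-cB)^\top(A-cB)\succeq 0$, pairs it with the PSD matrix $X$ to get $\langle A^\top A,X\rangle + c^2\langle B^\top B,X\rangle \ge c\,\langle B^\top A + A^\top B,X\rangle$, and then optimizes over the scalar $c$; you instead take the PSD square root $X^{1/2}$, rewrite $\langle B^\top A,X\rangle$ as the Frobenius inner product $\langle BX^{1/2}, AX^{1/2}\rangle_F$, and apply Cauchy--Schwarz directly. Your route is a bit more streamlined (no auxiliary parameter to optimize), while the paper's route never needs the existence of $X^{1/2}$ and works purely from the nonnegativity of $\langle C,X\rangle$ for PSD $C,X$. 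Either way the key observation is the same: positivity of $X$ lets one convert the bilinear term $\langle B^\top A,X\rangle$ into a product of the two ``self'' terms $\langle A^\top A,X\rangle$ and $\langle B^\top B,X\rangle$.
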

\begin{proof}
	Let $X'$ be an arbitrary element of $S$. 
	For some $c\in0$ (we will choose the value of $c$ later), let $C = \Paren{A-cB}^\top \Paren{A-cB}$.
	Since $C$ and $X'$ are PSD, $\Iprod{C, X'} \ge 0$. Hence
	\[
	\Abs{c\cdot \Iprod{B^\top A + A^\top B, X'}} \le \Iprod{A^\top A, X'} + c^2 \Iprod{B^\top B, X'} \le \mathfrak{s}_\cP(A) + c^2\mathfrak{s}_\cP(B)\,.
	\]
	Therefore,
	\[
	\Abs{\Iprod{B^\top A +A^\top B, X'}} \le \frac{\mathfrak{s}_\cP(A)}{\Abs{c}} + \Abs{c}\cdot \mathfrak{s}_\cP(B)\,.
	\]
	To minimize this expression, we can take $\abs{c} = \sqrt{\frac{\mathfrak{s}_\cP(A)}{\mathfrak{s}_\cP(B)}}$. Hence
	\[
	\Abs{\Iprod{B^\top A + A^\top B, X'}} \le  2\sqrt{\mathfrak{s}_\cP(A)\cdot \mathfrak{s}_\cP(B)} \,.
	\]
	Since it holds for arbitrary $X'\in \cS$, we get the desired bound.
\end{proof}

\section{Concentration Inequalities}

\begin{fact}[Chernoff's inequality, \cite{vershynin_2018}]\label{fact:chernoff}
	Let $\bm \zeta_1,\ldots, \bm \zeta_n$ 
	be independent Bernoulli random variables such that 
	$\Pr\Paren{\bm \zeta_i = 1} = \Pr\Paren{\bm\zeta_i = 0} = p$. 
	Then for every $\Delta > 0$,
	\[
	\Pr\Paren{\sum_{i=1}^n \bm\zeta_i \ge pn\Paren{1+ \Delta} } 
	\le 
	\Paren{ \frac{e^{-\Delta} }{ \Paren{1+\Delta}^{1+\Delta} } }^{pn}\,.
	\]
	and for every $\Delta \in (0,1)$,
	\[
	\Pr\Paren{\sum_{i=1}^n \bm\zeta_i \le pn\Paren{1- \Delta} } 
	\le 
	\Paren{ \frac{e^{-\Delta} }{ \Paren{1-\Delta}^{1-\Delta} } }^{pn}\,.
	\]
\end{fact}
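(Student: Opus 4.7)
The plan is to apply the standard Cram\'er--Chernoff exponential-moment method. First I would fix a parameter $\lambda > 0$ and apply Markov's inequality to the nonnegative random variable $\exp(\lambda S_n)$, where $S_n := \sum_{i=1}^n \bm\zeta_i$, so that
\[
\Pr\Paren{S_n \ge pn(1+\Delta)} \le e^{-\lambda pn(1+\Delta)}\cdot \E\Brac{e^{\lambda S_n}}\,.
\]
This reduces the question to controlling the moment generating function of $S_n$.

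Next, by independence, $\E[e^{\lambda S_n}] = \prod_{i=1}^n \E[e^{\lambda \bm\zeta_i}] = \bigl(1 + p(e^\lambda - 1)\bigr)^n$, and the elementary inequality $1+x \le e^x$ gives $\E[e^{\lambda S_n}] \le \exp\bigl(pn(e^\lambda - 1)\bigr)$. Plugging this back in and optimizing the exponent $pn(e^\lambda - 1) - \lambda pn(1+\Delta)$ over $\lambda > 0$ is a one-variable calculus problem: differentiating and setting equal to zero yields $\lambda^* = \ln(1+\Delta)$, and substituting gives the closed form $\bigl(e^\Delta/(1+\Delta)^{1+\Delta}\bigr)^{pn}$ (which matches the statement up to the sign convention in the numerator). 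The lower-tail bound follows by the same scheme applied to $-\bm\zeta_i$ (equivalently, by running Markov on $\exp(-\lambda S_n)$ for $\lambda > 0$), with optimal choice $\lambda^* = -\ln(1-\Delta)$, yielding $\bigl(e^{-\Delta}/(1-\Delta)^{1-\Delta}\bigr)^{pn}$ for $\Delta \in (0,1)$.

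I do not anticipate any real obstacle, since this is a textbook computation and coincides with the proof of Chernoff's inequality in \cite{vershynin_2018}, which the fact cites. The only careful step is algebraic verification of the closed form at the optimum; everything else is elementary. In particular, no structural property of Bernoulli variables beyond boundedness on $\{0,1\}$ is used: the same argument gives the analogous bounds for independent non-identically distributed Bernoulli variables with $\mu = \sum_i p_i$ in place of $pn$.
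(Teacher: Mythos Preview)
Your proposal is correct and is exactly the standard Cram\'er--Chernoff argument. Note, however, that the paper does not actually prove this fact: it is stated with a citation to \cite{vershynin_2018} and no proof is given, so there is nothing to compare against beyond the textbook derivation you outline. You also correctly flag that the upper-tail bound as stated in the paper has $e^{-\Delta}$ in the numerator where the computation yields $e^{\Delta}$; this appears to be a typo in the statement.
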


\begin{fact}[Hoeffding's inequality, \cite{wainwright_2019}]\label{fact:hoeffding}
	Let $\bm z_1,\ldots, \bm z_n$
	be mutually independent random variables such that for each $i\in[n]$,
	$\bm z_i$ is supported on $\brac{-c_i, c_i}$ for some $c_i \ge 0$. 
	Then for all $t\ge 0$,
	\[
	\Pr\Paren{\Abs{\sum_{i=1}^n \Paren{\bm z_i - \E \bm z_i}} \ge t} 
	\le 2\exp\Paren{-\frac{t^2}{2\sum_{i=1}^n c_i^2}}\,.
	\]
\end{fact}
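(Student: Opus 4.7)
The plan is to follow the standard Chernoff-type argument: bound the moment generating function of the centered sum and then optimize the exponential Markov bound. Concretely, let $\bm S = \sum_{i=1}^n (\bm z_i - \E \bm z_i)$. For every $\lambda > 0$, Markov's inequality applied to $e^{\lambda \bm S}$ gives
\[
\Pr\Brac{\bm S \ge t} \le e^{-\lambda t}\, \E e^{\lambda \bm S} = e^{-\lambda t}\prod_{i=1}^n \E e^{\lambda (\bm z_i - \E \bm z_i)},
\]
where the factorization uses mutual independence of the $\bm z_i$. The two-sided probability in the statement then follows by applying the same argument to $-\bm S$ and adding a factor of $2$ from a union bound.

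The main step is the classical Hoeffding lemma: for any mean-zero random variable $\bm y$ supported on an interval $[a,b]$, one has $\E e^{\lambda \bm y} \le \exp\Paren{\lambda^2 (b-a)^2/8}$. I would prove this by convexity: since $x \mapsto e^{\lambda x}$ is convex, $e^{\lambda \bm y} \le \tfrac{b-\bm y}{b-a} e^{\lambda a} + \tfrac{\bm y - a}{b-a} e^{\lambda b}$, and after taking expectations (using $\E \bm y = 0$) one obtains a function of $\lambda$ of the form $e^{\varphi(u)}$ with $u = \lambda(b-a)$; a direct calculation shows $\varphi''(u) \le 1/4$ and $\varphi(0) = \varphi'(0) = 0$, so by Taylor's theorem $\varphi(u) \le u^2/8$. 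Applying this to $\bm y_i = \bm z_i - \E \bm z_i$, which is supported on an interval of length at most $2c_i$ (since $\bm z_i \in [-c_i,c_i]$ forces $\E \bm z_i \in [-c_i,c_i]$ and hence $\bm y_i \in [-c_i - \E \bm z_i, c_i - \E \bm z_i]$, of length $2c_i$), yields
\[
\E e^{\lambda \bm y_i} \le \exp\Paren{\lambda^2 c_i^2 / 2}.
\]

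Combining the two ingredients gives
\[
\Pr\Brac{\bm S \ge t} \le \exp\Paren{-\lambda t + \tfrac{\lambda^2}{2}\textstyle\sum_{i=1}^n c_i^2}.
\]
Optimizing by taking $\lambda = t / \sum_{i=1}^n c_i^2$ produces $\exp\Paren{-t^2/\Paren{2\sum c_i^2}}$, and symmetry plus a union bound on $\{\bm S \ge t\} \cup \{-\bm S \ge t\}$ delivers the stated inequality. The only subtle point is establishing Hoeffding's lemma with the sharp constant $1/8$; everything else is standard Chernoff-style bookkeeping.
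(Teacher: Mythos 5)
Your proof is correct: the Chernoff bound combined with Hoeffding's lemma (applied to the centered variables $\bm z_i - \E \bm z_i$, each supported on an interval of length $2c_i$), followed by optimizing $\lambda = t/\sum_{i=1}^n c_i^2$ and a union bound for the two-sided event, gives exactly the stated constant $t^2/\bigl(2\sum_{i=1}^n c_i^2\bigr)$. The paper does not prove this fact at all — it is quoted as a standard result from the cited textbook \cite{wainwright_2019} — and your argument is precisely the standard textbook derivation, so there is nothing to reconcile.
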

\begin{fact}[Bernstein's inequality
	\cite{wainwright_2019}]\label{fact:bernstein}
	Let $\bm z_1,\ldots, \bm z_n$
	be mutually independent random variables such that for each $i\in[n]$,
	$\bm z_i$ is supported on $\brac{-B, B}$ for some $B\ge 0$. 
	Then for all $t\ge 0$,
	\[
	\Pr\Paren{{\sum_{i=1}^n \Paren{\bm z_i - \E \bm z_i}} \ge t} 
	\le \exp\Paren{-\frac{t^2}{2\sum_{i=1}^n \E \bm z_i^2 + \frac{2Bt}{3}}}\,.
	\]
\end{fact}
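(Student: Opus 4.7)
The plan is to invoke the standard exponential-moment (Chernoff) argument. Write $\bm S := \sum_{i=1}^n (\bm z_i - \E \bm z_i)$. For any $\lambda \in (0, 3/B)$, Markov's inequality applied to $e^{\lambda \bm S}$, together with the independence of the $\bm z_i$, gives
\[
\Pr\Paren{\bm S \ge t} \le e^{-\lambda t}\,\E e^{\lambda \bm S} = e^{-\lambda t}\prod_{i=1}^n \E e^{\lambda(\bm z_i - \E \bm z_i)}\,.
\]
So everything reduces to a one-variable moment-generating-function estimate, which I would then combine with an optimization over $\lambda$.

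The heart of the proof is to show that whenever $Y$ is a random variable with $|Y| \le B$ and $0 < \lambda < 3/B$,
\[
\E e^{\lambda(Y - \E Y)} \le \exp\Paren{\frac{\lambda^2 \E Y^2}{2(1 - \lambda B/3)}}\,.
\]
I would establish this by Taylor-expanding $e^{\lambda Y} = 1 + \lambda Y + \sum_{k \ge 2} (\lambda Y)^k/k!$ and using the crude bound $|Y|^k \le B^{k-2} Y^2$ for $k \ge 2$ to obtain
\[
\E e^{\lambda Y} \le 1 + \lambda \E Y + \lambda^2 \E Y^2 \sum_{k \ge 2}\frac{(\lambda B)^{k-2}}{k!}\,.
\]
The key elementary ingredient is the series inequality $\sum_{k \ge 2} x^{k-2}/k! \le \frac{1}{2(1-x/3)}$ for $x \in [0,3)$, which follows from $(k+2)! \ge 2 \cdot 3^k$ (proved by an easy induction). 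Applying it with $x = \lambda B$ and then using $1+u \le e^u$ gives $\E e^{\lambda Y} \le \exp\!\big(\lambda \E Y + \frac{\lambda^2 \E Y^2}{2(1-\lambda B/3)}\big)$; dividing by $e^{\lambda \E Y}$ yields the centered bound above.

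Plugging this back into the product and writing $V := \sum_i \E \bm z_i^2$, one gets
\[
\Pr\Paren{\bm S \ge t} \le \exp\Paren{-\lambda t + \frac{\lambda^2 V}{2(1-\lambda B/3)}}\,.
\]
The final step is to optimize the exponent in $\lambda$: the minimizer is $\lambda^{\star} = t/(V + Bt/3)$, which always lies in $(0, 3/B)$ and makes the exponent equal to $-t^2/(2V + 2Bt/3)$, exactly the advertised bound. The only point that requires any genuine care is the series inequality controlling the tail of the MGF expansion; given it, everything else is bookkeeping (Markov, independence, and a one-variable minimization).
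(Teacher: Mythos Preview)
Your proof is correct and is precisely the standard textbook argument (indeed, essentially the one in the cited reference). Note, however, that the paper does not actually prove this fact: it is stated as a cited result from \cite{wainwright_2019} with no accompanying proof, so there is no ``paper's own proof'' to compare against. One minor point you glossed over: when you write ``using the crude bound $|Y|^k \le B^{k-2}Y^2$'' to control the tail $\sum_{k\ge 2}(\lambda Y)^k/k!$, you are implicitly using that $e^{x}-1-x \le e^{|x|}-1-|x|$ for all real $x$ (equivalently, that $(e^x-1-x)/x^2$ is nondecreasing), since the odd terms in the series can be negative; this is elementary but worth stating. Everything else---the series inequality $(k+2)!\ge 2\cdot 3^k$, the choice $\lambda^\star = t/(V+Bt/3)$, and the resulting exponent---checks out.
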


\begin{fact}\cite{wainwright_2019}\label{fact:gaussian-tail-bound}
	Let $X\sim N(0,\sigma^2)$,  then for all $t>0$,
	\begin{align*}
	\bbP \Paren{X \geq \sigma\cdot t} &\leq e^{-t^2/2}\,.
	\end{align*}
\end{fact}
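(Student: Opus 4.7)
The plan is to reduce to the standard normal case and then apply the standard Chernoff/Laplace-transform argument. First I would write $\bm Z = X/\sigma \sim N(0,1)$, so that $\Pr(X \ge \sigma t) = \Pr(\bm Z \ge t)$, and it suffices to prove the bound for $\sigma = 1$.

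Next, for any $\lambda > 0$, apply Markov's inequality to the nonnegative random variable $e^{\lambda \bm Z}$:
\[
\Pr(\bm Z \ge t) \;=\; \Pr\bigl(e^{\lambda \bm Z} \ge e^{\lambda t}\bigr) \;\le\; e^{-\lambda t}\,\E\, e^{\lambda \bm Z}.
\]
The moment generating function of a standard Gaussian is $\E e^{\lambda \bm Z} = e^{\lambda^2/2}$, obtained by completing the square in the integral $\tfrac{1}{\sqrt{2\pi}}\int_{-\infty}^\infty e^{\lambda x - x^2/2}\,dx$. Substituting gives $\Pr(\bm Z \ge t) \le e^{-\lambda t + \lambda^2/2}$.

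Finally, minimize the right-hand side over $\lambda > 0$. The exponent $\lambda^2/2 - \lambda t$ is a quadratic in $\lambda$ minimized at $\lambda = t$ (which is positive since $t > 0$), yielding exponent $-t^2/2$ and hence the stated bound $e^{-t^2/2}$. There is no main obstacle here: every step is a one-liner, and the only ingredients are Markov's inequality, the Gaussian MGF computation (a square-completion), and the optimization of a quadratic. An equally short alternative would be the direct estimate $\Pr(\bm Z \ge t) \le \tfrac{1}{\sqrt{2\pi}}\int_t^\infty (x/t) e^{-x^2/2}\,dx = \tfrac{1}{t\sqrt{2\pi}}e^{-t^2/2}$, but the Chernoff route produces precisely the constant-free bound as stated.
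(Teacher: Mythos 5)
Your proof is correct: the reduction to the standard normal case, the Chernoff bound via Markov's inequality applied to $e^{\lambda \bm Z}$, the Gaussian MGF $e^{\lambda^2/2}$, and the optimization at $\lambda = t$ all go through exactly as you describe. The paper states this fact without proof, citing Wainwright's book, and your argument is precisely the standard Chernoff derivation given there, so there is nothing further to compare.
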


\begin{fact}\cite{laurent2000}\label{fact:chi-squared-tail-bounds}Let $X\sim \chi^2_m$,  then for all $x>0$,
	\begin{align*}
	\bbP \Paren{X-m \geq 2x+2\sqrt{mx}} &\leq e^{-x}\\
	\bbP \Paren{m-X\geq x}&\leq e^{-\frac{x^2}{4m}}
	\end{align*}
\end{fact}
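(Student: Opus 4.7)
The plan is the classical Cram\'er--Chernoff method applied to $X = \sum_{i=1}^m \bm Z_i^2$, where $\bm Z_1,\ldots,\bm Z_m \sim N(0,1)$ are i.i.d. A direct Gaussian integral gives the moment generating function
\[
\E\Brac{e^{\lambda X}} \;=\; (1-2\lambda)^{-m/2} \qquad \text{for every } \lambda < 1/2\,,
\]
so the centered log-MGF is $\psi(\lambda) := \log \E\Brac{e^{\lambda(X-m)}} = -\lambda m - (m/2)\log(1-2\lambda)$. Both tail bounds will follow by applying Markov's inequality to a suitable exponential and optimizing $\lambda$.

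\textbf{Upper tail.} For every $\lambda \in (0,1/2)$, Markov's inequality at $e^{\lambda X}$ yields
\[
\Pr\Paren{X - m \ge t} \;\le\; \exp\Paren{-\lambda(m+t) - \tfrac{m}{2}\log(1-2\lambda)}\,.
\]
The exponent is minimized at $\lambda^\ast = t/(2(m+t))$, which after simplification gives
\[
\Pr(X - m \ge t) \;\le\; \exp\Paren{-t/2 + (m/2)\log(1+t/m)}\,.
\]
Specializing to $t = 2x + 2\sqrt{mx}$, the desired conclusion $\le e^{-x}$ is equivalent to $1 + t/m \le \exp(2\sqrt{x/m})$. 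This follows from the Taylor estimate $e^u \ge 1 + u + u^2/2$ (valid for $u \ge 0$) applied at $u = 2\sqrt{x/m}$, since $1 + u + u^2/2 = 1 + 2\sqrt{x/m} + 2x/m = 1 + t/m$.

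\textbf{Lower tail.} I would apply Markov's inequality to $e^{-\lambda X}$ with $\lambda > 0$, obtaining
\[
\Pr(m - X \ge x) \;\le\; \exp\Paren{\lambda(m-x) - \tfrac{m}{2}\log(1+2\lambda)}\,.
\]
Using the elementary bound $\log(1+u) \ge u - u^2/2$ for $u \ge 0$ (easy: the function $u \mapsto \log(1+u) - u + u^2/2$ has nonnegative derivative $u^2/(1+u)$ and vanishes at $0$) gives $-(m/2)\log(1+2\lambda) \le -\lambda m + m\lambda^2$, so the bound simplifies to $\exp(-\lambda x + m\lambda^2)$. The optimum $\lambda = x/(2m)$ then yields the claimed $\exp(-x^2/(4m))$.

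\textbf{Main difficulty.} This is the classical Laurent--Massart computation, so I expect no real obstacle. The only mildly clever step is recognizing that the asymmetric form $t = 2x + 2\sqrt{mx}$ in the upper tail is precisely tuned so that the quadratic remainder in $e^u \ge 1 + u + u^2/2$ exactly absorbs the $(m/2)\log(1+t/m)$ term, giving the clean bound $e^{-x}$ rather than a weaker Bennett/Bernstein-type expression such as $\exp(-t^2/(4(m+t)))$.
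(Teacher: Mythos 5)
Your proof is correct: the paper does not prove this fact at all but simply cites Laurent--Massart \cite{laurent2000}, and your Cram\'er--Chernoff argument (MGF $(1-2\lambda)^{-m/2}$, optimization at $\lambda = t/(2(m+t))$, and the estimates $e^u \ge 1+u+u^2/2$ for the upper tail and $\log(1+u)\ge u-u^2/2$ for the lower tail) is exactly the standard derivation underlying that reference. All the computations check out, including the observation that $t = 2x+2\sqrt{mx}$ is tuned so that $1+t/m = 1+u+u^2/2$ with $u = 2\sqrt{x/m}$, so nothing further is needed.
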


\begin{fact}\cite{wainwright_2019}\label{fact:bound-covariance-gaussian}
	Let $W\sim N(0,1)^{n \times d}$. Then with probability $1-\exp\Paren{-t/2}$,
	\[
	\Norm{W}\le \sqrt{n}  + \sqrt{d} + \sqrt{t}\,
	\]
	and
	\[
	\Norm{\transpose{W}W-n \Id} \le d + 2\sqrt{dn} + t + 4\sqrt{t(n+d)} \,.
	\]
\end{fact}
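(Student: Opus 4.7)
The plan is to derive both bounds from two standard ingredients: the Gordon/Slepian estimate for the extreme singular values of a Gaussian matrix, and Gaussian Lipschitz concentration. First I would handle the spectral norm bound. The map $W \mapsto \|W\|$ is $1$-Lipschitz with respect to the Frobenius norm on $\R^{n\times d}$, so the Gaussian concentration inequality yields
\[
\Pr\Brac{\, \|\bm W\| \ge \E\|\bm W\| + \sqrt{t}\,} \le \exp(-t/2).
\]
Gordon's inequality (or a direct $\varepsilon$-net argument combined with a Gaussian tail bound on $x^\top \bm W y$ for fixed unit vectors $x,y$) gives $\E\|\bm W\| \le \sqrt{n} + \sqrt{d}$, which together with the above concentration produces $\|\bm W\| \le \sqrt n + \sqrt d + \sqrt t$ with probability at least $1 - \exp(-t/2)$.

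Next I would prove the Wishart bound by controlling both extreme singular values. The same Lipschitz plus Gordon argument applied to $W \mapsto -\sigma_{\min}(W)$ yields $\sigma_{\min}(\bm W) \ge \sqrt n - \sqrt d - \sqrt t$ with the same failure probability (for $n \ge d$; when $n < d$ the asserted bound is trivial to verify by swapping roles). Since $\bm W^\top \bm W - n\Id$ is symmetric with eigenvalues $\sigma_i(\bm W)^2 - n$, after a union bound on the two singular-value events we obtain
\[
\|\bm W^\top \bm W - n\Id\| \;\le\; \max\Set{(\sqrt n + \sqrt d + \sqrt t)^2 - n,\ n - (\sqrt n - \sqrt d - \sqrt t)^2}.
\]

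Finally I would simplify the right-hand side. Expanding $(\sqrt n \pm (\sqrt d + \sqrt t))^2 - (\pm n)$ gives in either case a bound of $2\sqrt{nd} + 2\sqrt{nt} + d + t + 2\sqrt{dt}$, and the elementary inequality $\sqrt n + \sqrt d \le \sqrt{2(n+d)}$ yields $2\sqrt{nt} + 2\sqrt{dt} \le 2\sqrt{t}(\sqrt n + \sqrt d) \le 4\sqrt{t(n+d)}$, absorbing the $2\sqrt{dt}$ term. Collecting these pieces delivers
\[
\|\bm W^\top \bm W - n\Id\| \;\le\; d + 2\sqrt{dn} + t + 4\sqrt{t(n+d)}
\]
with probability at least $1 - \exp(-t/2)$ (after adjusting constants in the union bound). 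The only non-routine step is the $\E\|\bm W\| \le \sqrt n + \sqrt d$ bound, which is the main obstacle if one insists on a self-contained derivation, but both inequalities are textbook consequences of Gordon's comparison lemma and I would simply cite it.
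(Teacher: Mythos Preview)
The paper does not give its own proof of this statement: it is stated as a fact and attributed to Wainwright's textbook, so there is nothing to compare against. Your derivation is exactly the standard one found there (and in Davidson--Szarek, Vershynin, etc.): Gordon's inequality for $\E\sigma_{\max}(\bm W)\le \sqrt n+\sqrt d$ and $\E\sigma_{\min}(\bm W)\ge \sqrt n-\sqrt d$, Gaussian Lipschitz concentration for the deviation, and then the algebraic identity $\|\bm W^\top\bm W-n\Id\|=\max\{\sigma_{\max}^2-n,\,n-\sigma_{\min}^2\}$ together with $(\sqrt n\pm(\sqrt d+\sqrt t))^2\mp n\le d+2\sqrt{dn}+t+4\sqrt{t(n+d)}$. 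The argument is correct.

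The only loose thread you already identified: the union bound on the two singular-value tails costs a factor $2$ in the failure probability, so as written you get $1-2\exp(-t/2)$ rather than $1-\exp(-t/2)$. For the applications in the paper this is immaterial (the fact is only used with $t$ of order $n$ or $k$, and the extra factor of $2$ disappears into the constants), but if you want to match the stated probability exactly for all $t>0$ you cannot simply ``adjust constants'' after the fact, since shifting $t\mapsto t+2\ln 2$ breaks the displayed inequality when $t$ is small. Either state the conclusion with $1-2\exp(-t/2)$, or restrict to $t\ge c$ for a small absolute constant; both are harmless here.
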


\begin{fact}\cite{dKNS20}\label{fact:k-sparse-norm-gaussian}
	Let $\bm W\sim N(0,1)^{n\times d}$ be a Gaussian matrix. Let $1\le k \le d$. Then with probability at least $1-\Paren{\frac{k}{ed}}^k$
	\[
	  \max_{\substack{u\in\R^n\\ \norm{u}=1}}\;\;
	  \max_{\substack{\text{$k$-sparse }v\in\R^d\\ \norm{v}=1}} \transpose{u}\bm Wv \le
	  \sqrt{n} + 3\sqrt{k\ln\Paren{\frac{ed}{k}}}
	\]
	and
	\[
	\max_{\substack{\text{$k$-sparse }v\in\R^d\\ \norm{v}=1}}  v^\top \bm W^\top \bm W v - n \le 10\sqrt{kn\ln\Paren{ed/k}} + 10k\ln\Paren{ed/k}\,.
	\]
\end{fact}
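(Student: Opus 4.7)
The plan is to reduce both inequalities to the standard Gaussian spectral concentration bound (Fact~\ref{fact:bound-covariance-gaussian}) via a union bound over the $\binom{d}{k}$ possible supports of $v$.

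For the first inequality, I would fix a support $S\subseteq[d]$ with $|S|=k$ and observe that the submatrix $\bm W_S\in\R^{n\times k}$ is itself a standard Gaussian matrix, so
\[
\max_{\|u\|=1}\;\max_{\substack{\|v\|=1\\\supp(v)\subseteq S}}\transpose{u}\bm W v \;=\; \Norm{\bm W_S}\,.
\]
Applying Fact~\ref{fact:bound-covariance-gaussian} to $\bm W_S$ (with dimensions $n$ and $k$) and deviation parameter $\tau=4k\ln(ed/k)$ yields $\Norm{\bm W_S}\le \sqrt{n}+\sqrt{k}+\sqrt{\tau}$ with probability at least $1-e^{-\tau/2}=1-(k/(ed))^{2k}$. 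Since $\binom{d}{k}\le (ed/k)^k$, a union bound over all $k$-element supports leaves failure probability at most $(k/(ed))^k$, and the uniform bound simplifies to $\sqrt{n}+\sqrt{k}+2\sqrt{k\ln(ed/k)}\le \sqrt{n}+3\sqrt{k\ln(ed/k)}$ using $\sqrt{k}\le \sqrt{k\ln(ed/k)}$.

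For the second inequality, the same restriction gives $\transpose{v}\bm W^\top \bm W v=\transpose{v_S}\bm W_S^\top \bm W_S v_S$, so it suffices to uniformly bound $\Norm{\bm W_S^\top \bm W_S - n\Id_k}$ over $S$. The second half of Fact~\ref{fact:bound-covariance-gaussian} applied to $\bm W_S$ with the same $\tau=4k\ln(ed/k)$ gives the per-support bound $k+2\sqrt{kn}+\tau+4\sqrt{\tau(n+k)}$ with the same failure probability. Splitting $\sqrt{\tau(n+k)}\le \sqrt{\tau n}+\sqrt{\tau k}$, substituting $\tau$, and using $\ln(ed/k)\ge 1$ to absorb $2\sqrt{kn}$ into $\sqrt{kn\ln(ed/k)}$ and $k$, $\sqrt{k\tau}$ into $k\ln(ed/k)$, one arrives after a union bound at the claimed estimate $10\sqrt{kn\ln(ed/k)}+10k\ln(ed/k)$.

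The only obstacle here is bookkeeping of constants: the deviation $\tau=\Theta(k\ln(ed/k))$ must be chosen large enough for $e^{-\tau/2}$ to defeat the combinatorial factor $\binom{d}{k}\le (ed/k)^k$, yet all of $\sqrt{\tau}$, $\tau$, $\sqrt{\tau n}$, and $\sqrt{\tau k}$ must remain absorbed by the target expressions $3\sqrt{k\ln(ed/k)}$ and $10\sqrt{kn\ln(ed/k)}+10k\ln(ed/k)$. Since $\ln(ed/k)\ge 1$ on the relevant range $1\le k\le d$, this absorption is straightforward and the constants $3$ and $10$ in the statement leave ample slack.
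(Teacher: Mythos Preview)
The paper does not prove this fact; it is stated with a citation to \cite{dKNS20} and no argument is given. Your proposal supplies exactly the standard proof---union bound over the $\binom{d}{k}\le (ed/k)^k$ possible supports, combined with the Gaussian spectral bound of \cref{fact:bound-covariance-gaussian} applied to each $n\times k$ submatrix---and is correct.

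One minor bookkeeping point on the second inequality: going through the second part of \cref{fact:bound-covariance-gaussian} with $\tau=4k\ln(ed/k)$ and then splitting $\sqrt{\tau(n+k)}\le\sqrt{\tau n}+\sqrt{\tau k}$ yields a coefficient of roughly $13$ on the $k\ln(ed/k)$ term rather than $10$. The cleanest fix is to derive the second inequality directly from the first by squaring: once you know $\Norm{\bm W_S}\le \sqrt{n}+3\sqrt{k\ln(ed/k)}$ uniformly over $S$, you get
\[
v^\top \bm W^\top \bm W v - n \;\le\; \Norm{\bm W_S}^2 - n \;\le\; 6\sqrt{kn\ln(ed/k)} + 9k\ln(ed/k)\,,
\]
which fits comfortably inside the stated constants.
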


\begin{lemma}\label{lem:spectral-bound}
	Let $\bm Y$ be an instance of sparse PCA in Wishart model.
	For $m\in \N$ let $Z_m = \Set{z \in \Set{0,1}^d \suchthat \normo{z} \le m}$.
	Suppose that $m \ge 100\ln d$ and $n \ge 0.1\cdot  m\ln\Paren{ed/m}$.
	Then, with probability at least $1-d^{-10}$,
	\[
	\max_{z\in Z_m}\, \Norm{\Paren{\bm Y^\top \bm Y - n\Id -\beta\norm{\bm u}^2vv^\top} \circ \Paren{zz^\top}} 
	\le 10 \sqrt{\Paren{n+\beta n}\cdot m\ln\Paren{ed/m}} + 10m\ln\Paren{ed/m}\,.
	\]
\end{lemma}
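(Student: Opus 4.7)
The plan is to reduce the claim to a union bound over subsets $S\subseteq [d]$ of size exactly $m$, since the Hadamard product with $zz^{\top}$ for $z\in\{0,1\}^d$ simply zeroes out the rows and columns outside $S=\supp(z)$, and the spectral norm is monotone under enlarging $S$ (so we may assume $\normo{z}=m$). First, I would expand
\[
\bm Y^{\top}\bm Y - n\Id - \beta\norm{\bm u}^2 vv^{\top} \;=\; \bigl(\bm W^{\top}\bm W - n\Id\bigr) + \sqrt{\beta}\bigl(v\bm u^{\top}\bm W + \bm W^{\top}\bm u\,v^{\top}\bigr),
\]
so that, restricted to an $S\times S$ block, this is the corresponding centered Wishart block plus a rank-at-most-$2$ cross term. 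Writing $\bm W_S$ for the $n\times m$ column submatrix of $\bm W$ and $v_S$ for $v$ restricted to $S$, the two terms can be estimated independently, and the signal part $\beta\norm{\bm u}^2 vv^{\top}$ is exactly cancelled.

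Next, I would fix a deviation parameter $t = Cm\ln(ed/m)$ for a large enough absolute constant $C$, and bound each piece for a fixed $S$. By \cref{fact:bound-covariance-gaussian} applied to $\bm W_S$, with probability at least $1-e^{-t/2}$,
\[
\Norm{\bm W_S^{\top}\bm W_S - n\Id_m} \;\le\; m + 2\sqrt{mn} + t + 4\sqrt{t(n+m)}.
\]
Conditionally on $\bm u$, the vector $\bm W_S^{\top}\bm u$ is $N(0,\norm{\bm u}^2\Id_m)$, so by the chi-squared tail (\cref{fact:chi-squared-tail-bounds}) one has $\Norm{\bm W_S^{\top}\bm u}\le \norm{\bm u}\bigl(\sqrt{m}+\sqrt{t}\bigr)$ with probability at least $1-e^{-t/2}$; combined with $\norm{\bm u}\le 2\sqrt{n}$ (which holds with probability $1-e^{-n/10}$), the rank-$2$ cross term has spectral norm at most
\[
2\sqrt{\beta}\,\norm{v_S}\cdot\Norm{\bm W_S^{\top}\bm u} \;\le\; 4\sqrt{\beta n}\bigl(\sqrt{m}+\sqrt{t}\bigr).
\]

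Finally, I would take a union bound over the at most $\binom{d}{m}\le(ed/m)^m$ choices of $S$. With $C$ large enough, the per-set failure probability $3e^{-t/2}=3e^{-Cm\ln(ed/m)/2}$ times $(ed/m)^m$ is at most $d^{-10}$, using the hypothesis $m\ge 100\ln d$. On the good event, substituting $t=Cm\ln(ed/m)$ and using $n\ge 0.1\,m\ln(ed/m)$ to absorb the $m$, $\sqrt{mn}$, $t$, and $\sqrt{t(n+m)}$ terms into $O(\sqrt{nm\ln(ed/m)})+O(m\ln(ed/m))$, and similarly $\sqrt{\beta n}(\sqrt{m}+\sqrt{t})$ into $O(\sqrt{\beta n\cdot m\ln(ed/m)})$, the two contributions sum to at most
\[
10\sqrt{(n+\beta n)\,m\ln(ed/m)} + 10\,m\ln(ed/m),
\]
which is the stated bound.

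The main obstacle is purely the constant bookkeeping: one must check that the single constant $C$ in the deviation parameter simultaneously (i) closes the union bound at $d^{-10}$ via the bound $(ed/m)^m e^{-Cm\ln(ed/m)/2}\le d^{-10}$ (using $m\ge 100\ln d$), and (ii) lets the three terms $m\ln(ed/m)$, $\sqrt{nm\ln(ed/m)}$, and $\sqrt{\beta nm\ln(ed/m)}$ combine with prefactors summing to at most $10$. All the probabilistic ingredients (Gaussian operator-norm concentration and conditional-Gaussian projection) are standard and already quoted in the paper.
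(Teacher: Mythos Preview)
Your proposal is correct and follows essentially the same approach as the paper: expand the centered quadratic form into the Wishart piece $\bm W^\top\bm W-n\Id$ and the rank-two cross term $\sqrt{\beta}(v\bm u^\top\bm W+\bm W^\top\bm u\,v^\top)$, bound each restricted block via \cref{fact:bound-covariance-gaussian} and Gaussian/chi-squared concentration respectively, and close with a union bound over the $\binom{d}{m}$ supports. The only cosmetic difference is that the paper packages the cross-term bound into the separate \cref{lem:spectral-norm-cross}, whereas you carry it out inline; the probabilistic content and the constant bookkeeping are identical.
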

\begin{proof}
	We can write $\Paren{\bm Y^\top \bm Y - n\Id -\beta\norm{\bm u}^2vv^\top} \circ \Paren{zz^\top}$ as
	\[
	\Paren{\bm W^\top \bm W - n\cdot\Id  + \sqrt{\beta} \bm W^\top \bm u v^\top +  \sqrt{\beta} v\bm u^\top \bm W}
	\circ \Paren{z z^\top}\,.
	\] 
	
	Note that
	\[
	\Norm{\Paren{\bm W^\top \bm u v^\top +  v\bm u^\top \bm W } \circ  \Paren{z z^\top}}\le
	2 \Norm {\Paren{\frac{1}{\Norm{\bm u}} \bm u^\top \bm W}\circ z}\cdot\Norm{\bm u}\,.
	\]
	With probability at least $1-\exp\paren{n/10}$, $\Norm{\bm u} \le \sqrt{2n}$.
	By \cref{lem:spectral-norm-cross}, with probability at leaast $1-\exp\paren{-m}$,
	\[
	\Norm{\Paren{\bm W^\top \bm u v^\top +  v\bm u^\top \bm W } \circ  \Paren{z z^\top}}\le
	5\sqrt{\beta n} \cdot \sqrt{m \ln\Paren{em/k}}\,.
	\]

	By \cref{fact:bound-covariance-gaussian}, for every $m'\in \N$,
	$\bm G \sim N(0,1)^{n\times m'}$ satisfies
	\[
	\Norm{\bm G^\top \bm G - n\cdot\Id} \le m' + 2\sqrt{m'n} + \tau + 4\sqrt{\tau\Paren{m' + n}}\,.
	\]
	with probability $1-\exp(-\tau/2)$ (for every $\tau > 0$).
	By union bound over all sets of size $m' \le m$ 
	we get the desired bound.
\end{proof}

\begin{lemma}\label{lem:spectral-norm-cross}
	Let $\bm W \sim N(0,1)^{n\times d}$ 
	and let $a\in \R^n$ and $b\in \R^d$ be vectors independent of $\bm W$. 	
	For $m\in \N$ let $Z_m = \Set{z \in \Set{0,1}^d \suchthat \normo{z} \le m}$.
	Suppose that $m\ge 100\ln d$.
	Then with probability at least $1-d^{-10}$,
	\[
	\max_{z\in Z_m}\, \Norm{\Paren{b a^\top \bm W}\circ \Paren{zz^\top}}
	\le 3\cdot  \norm{a}\cdot\norm{b}\cdot \sqrt{m \ln\Paren{em/k}}\,.
	\]
\end{lemma}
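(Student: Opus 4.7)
The plan is to exploit the rank-one structure of the matrix under consideration. Writing out entries gives
\[
\Bigl((ba^\top \bm W) \circ (zz^\top)\Bigr)_{ij} = (b_i z_i)\cdot\bigl(z_j (\bm W^\top a)_j\bigr),
\]
so the matrix equals $uv^\top$ with $u = b\circ z$ and $v = z\circ (\bm W^\top a)$. Hence its spectral norm is $\|u\|\cdot\|v\|$, and since $z\in\{0,1\}^d$ we have $\|u\| \le \|b\|$. So the whole task reduces to bounding $\|z\circ (\bm W^\top a)\|$ uniformly over $z\in Z_m$.

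Since $a$ is independent of $\bm W$, conditional on $a$ the vector $\bm W^\top a$ has distribution $N(0,\|a\|^2\cdot \Id_d)$, with independent coordinates. Thus for any fixed subset $S\subset[d]$ of size $s\le m$,
\[
\|z\circ (\bm W^\top a)\|^2 = \sum_{i\in S}(\bm W^\top a)_i^2 \;\sim\; \|a\|^2\cdot \chi^2_s.
\]
I will apply the chi-squared tail bound (Fact A.20) with $x = 2m\ln(ed/m)$, which (using $s\le m$ and $\ln(ed/m)\ge 1$ for $m\le d$) yields
\[
\sum_{i\in S}(\bm W^\top a)_i^2 \le \|a\|^2\cdot\bigl(m + 2x + 2\sqrt{mx}\bigr) \le 9\,\|a\|^2\cdot m\ln(ed/m)
\]
with probability at least $1-\exp(-2m\ln(ed/m))$.

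The final step is a union bound over $z\in Z_m$. The number of subsets of $[d]$ of size at most $m$ is at most $2(ed/m)^m = \exp\bigl(m\ln(ed/m)+O(m)\bigr)$, so the probability that the above bound fails for some $z\in Z_m$ is at most $\exp\bigl(-m\ln(ed/m)+O(m)\bigr)$. Under the hypothesis $m\ge 100\ln d$, and since $\ln(ed/m)\ge 1$, this probability is at most $d^{-10}$. Taking square roots then gives $\|z\circ(\bm W^\top a)\|\le 3\|a\|\sqrt{m\ln(ed/m)}$ simultaneously for all $z\in Z_m$, proving the claim. I do not expect any real obstacle here: the only subtlety is the rank-one observation that collapses the spectral norm to a one-dimensional deviation inequality, after which the proof is a routine chi-squared tail plus union-bound calculation.
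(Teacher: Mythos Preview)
Your proof is correct and follows essentially the same approach as the paper: both reduce via the rank-one structure to bounding $\|(a^\top\bm W)\circ z\|$ for a fixed support, apply Gaussian concentration (you use the chi-squared tail bound, the paper uses the equivalent spectral-norm bound from \cref{fact:bound-covariance-gaussian}), and finish with a union bound over all subsets of size at most $m$. Your write-up is in fact a bit more explicit about why the rank-one decomposition yields $\|b\|\cdot\|(a^\top\bm W)\circ z\|$; one minor slip is that the ``$+O(m)$'' in your union-bound count should be ``$+O(1)$'' (the factor $2$ contributes $\ln 2$, not $m$), but this only helps your final inequality.
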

\begin{proof}
	For fixed set $J \subset [d]$, let $\textbf{1}_J \in \Set{0,1}^d$ be an indicator vector of this set. Random vector
	$\Paren{\frac{1}{\Norm{\bm u}} \bm u^\top \bm W}\circ \textbf{1}_J$ has standard $\Card{J}$-dimensional Gaussian distribution,
	hence by \cref{fact:bound-covariance-gaussian},
	 for all $\tau > 0$, with probability at least $1-\exp\paren{-\tau/2}$,
	\[
	\Norm{\Paren{\frac{1}{\Norm{a}} a^\top \bm W}\circ \textbf{1}_J} \le \sqrt{\card{J}} + 1 + \sqrt{\tau}\,.
	\]
	By union bound over all sets of size of size at most $m$, we get
	\[
	\max_{z\in Z_m}\,\Norm {\Paren{\frac{1}{\Norm{a}} a^\top \bm W}\circ z} \le 3\sqrt{m\cdot \ln\Paren{ed/m}}
	\]
	with probability at least $1-\exp\paren{-m}$.
\end{proof}

\end{document}